\newtheorem{lemma}{Lemma}
\newtheorem{prop}{Proposition}
\newcommand{\com}[1]{{\color{red}\textbf{Parinaz's Comment}: #1}}%comment of the text
\newcommand{\resp}[1]{{\color{cyan}\textbf{Response}: #1}} %responses to comments of the text
\newcommand{\com}[1]{}
\newcommand{\comr}[1]{}
\newcommand{\resp}[1]{}
\title{Friends in Unexpected Places: Enhancing Local Fairness in Federated Learning through Clustering}
\author{Yifan Yang \\
The Ohio State University\\
\texttt{yang.5483@osu.edu} \\
\And
Ali Payani \\
Cisco Research \\
\texttt{apayani@cisco.com} \\
\And
Parinaz Naghizadeh \\
UC, San Diego\\
\texttt{parinaz@ucsd.edu}
}
\begin{document}

\maketitle

\begin{abstract}
Federated Learning (FL) has been a pivotal paradigm for collaborative training of machine learning models across distributed datasets. In heterogeneous settings, it has been observed that a single shared FL model can lead to low local accuracy, motivating \emph{personalized FL} algorithms. In parallel, fair FL algorithms have been proposed to enforce group fairness on the global models. Again, in heterogeneous settings, global and local fairness do not necessarily align, motivating the recent literature on \emph{locally fair FL}. In this paper, we propose new FL algorithms for heterogeneous settings, \emph{spanning the space between personalized and locally fair FL}. Building on existing clustering-based personalized FL methods, we incorporate a new fairness metric into cluster assignment, enabling a tunable balance between local accuracy and fairness. Our methods match or exceed the performance of existing locally fair FL approaches, without explicit fairness intervention. We further demonstrate (numerically and analytically) that personalization \emph{alone} can improve local fairness and that our methods exploit this alignment when present.
\end{abstract}

\section{Introduction} \label{sec:intro}

%section logic

%introduce FL, and personalized FL
Federated Learning (FL) has been the pivotal paradigm for collaboratively training machine learning models across distributed datasets/clients in a privacy-preserving manner \citep{kairouz2021advances}. Shared, \emph{global} models learned through FL can effectively aggregate gradient information from multiple clients, and (potentially) outperform \emph{standalone} models--- those trained individually by each client in the absence of collaboration. 
%Popular FL algorithms such as FedAvg \citep{mcmahan2017communication} have been found to converge fast and have high (global) accuracy. 
However, when clients have heterogeneous datasets, the convergence speed of FL algorithms can considerably deteriorate \citep{li2019convergence,zhao2018federated}, and clients with less typical data distributions may experience low local accuracy \citep{tan2022towards,karimireddy2020scaffold,li2020federated}. To address these limitations, a spectrum of \emph{personalized FL} techniques (e.g., \cite{li2020federated,ghosh2020efficient,briggs2020federated,sattler2020clustered,fallah2020personalized,mansour2020three}) have been proposed to enhance the \emph{local accuracy} of the learned models while keeping some of the benefits of collaborative learning. % (e.g., by clustering similar clients, by fine-tuning models locally, or by imposing regularization terms). 

%introduce fairness in FL
Beyond ensuring model accuracy, it has become increasingly important to train machine learning models that satisfy \emph{(group) fairness} \citep{barocas-hardt-narayanan}. This means ensuring that the learned models treat individuals from different demographic groups equally by, e.g., maintaining similar selection rates or true positive rates across groups defined by sensitive attributes (e.g., race, gender). Motivated by this, a number of works have proposed \emph{fair FL} algorithms; see \cite{shi2023towards,salazar2024survey} for surveys. However, most of these existing works focus on \emph{global} fairness--- fairness of a shared, global model assessed on the global data distribution. Such model is not necessarily \emph{locally} fair when clients have heterogeneous datasets. For example, both population demographics and healthcare data distributions vary geographically \citep{swift2002guidance}. Then, a fair federated learning model trained across state hospitals can satisfy a desired fairness constraint at the state level, but may still be unfair at individual hospitals if local demographics differ significantly. To address this challenge, recent works have formally studied tradeoffs between local and global fairness \cite{hamman2023demystifying}, and proposed FL algorithms that can achieve local fairness \citep{meerza2024glocalfair, zhang2025sffl, makhija2024achieving, zhou2025post}. 

\begin{wrapfigure}[17]{r}{0.5\textwidth}\vspace{-0.24in}
	\centering
	\subfigure{
		\includegraphics[width=0.5\textwidth]{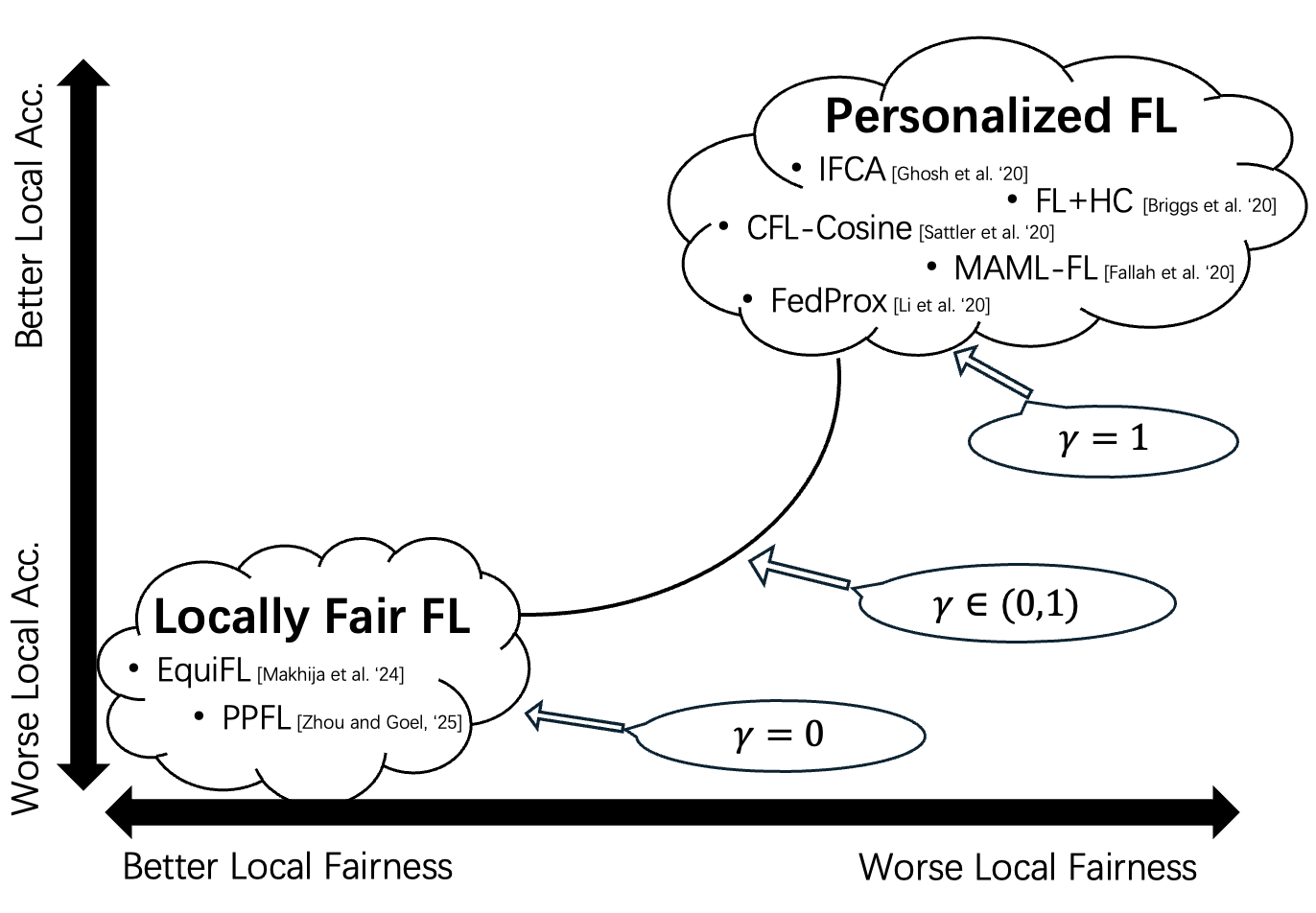}
	}
	\vspace{-0.27in}
	\caption{Our proposed algorithms span the gap between personalized FL and locally fair FL methods, and lead to a new class of locally fair methods. The parameter $\gamma\in[0,1]$ adjusts the balance between local accuracy and local fairness.}
	\label{fig:big-picture}
\end{wrapfigure} 
In this paper, we are similarly interested in federated learning algorithms that can perform well \emph{locally} when clients have heterogeneous datasets, and in particular, focus on attaining a tunable fairness-accuracy tradeoff at the local client level. \emph{We propose federated learning algorithms that span the range between personalized federated learning and locally fair federated learning algorithms.} See Figure~\ref{fig:big-picture} for an illustration. In this plot, the $x$-axis and $y$-axis show the local fairness gap and the local accuracy, respectively. %The trade-off between fairness and accuracy of an algorithm is clear, as enhancing fairness often comes at the cost of lower accuracy \citep{gu2022privacy}. 
We find that personalized FL algorithms \cite{ghosh2020efficient,briggs2020federated,sattler2020clustered,fallah2020personalized,li2020federated} improve local accuracy at the expense of fairness, whereas locally fair FL algorithms \cite{makhija2024achieving, zhou2025post} sacrifice local accuracy to improve fairness guarantees locally. 

To cover this gap, our proposed algorithms, \texttt{Fair-FCA} and \texttt{Fair-FL+HC}, take inspiration from clustering methods used for personalization in FL \cite{ghosh2020efficient,briggs2020federated}. They integrate a new fairness metric into the cluster assignment step, allowing us to improve local fairness while balancing accuracy and fairness at the client level. Our methods can be viewed as a way of strategically grouping either ``similar'' or ``useful'' clients together. %, as appropriate, given the desired local fairness-accuracy tradeoff.
Importantly, depending on the desired local fairness-accuracy tradeoff, these may end up being the clients with the most similar or the most complementary (in terms of demographic statistics) data distributions. 

Specifically, our algorithms use a parameter $\gamma\in[0,1]$ to adjust the balance between local accuracy and local fairness during cluster assignment, with larger $\gamma$'s indicating more focus on local accuracy. At $\gamma=1$, our proposed algorithms, \texttt{Fair-FCA} and \texttt{Fair-FL+HC}, reduce to the existing \texttt{IFCA} \citep{ghosh2020efficient} and \texttt{FL+HC} \citep{briggs2020federated} which they are built on. At $\gamma=0$, our algorithms are of independent interest, as they present two new algorithms for improving local fairness in FL. Existing approaches to locally fair FL \citep{meerza2024glocalfair, zhang2025sffl, makhija2024achieving, zhou2025post} aim to either prevent bias propagation during collaborative training by mitigating the local unfairness or reducing the influence of biased clients via weighted aggregation, or modify predictions probabilistically (post-training) to satisfy fairness constraints. In contrast, we show that our methods can achieve comparable and at times better local fairness by simply clustering clients without any additional interventions (i.e., without explicit pre-, in-, or post-processing steps). 

Finally, we explore the connections between personalization and local fairness in FL. Specifically, we show that personalization \emph{alone} can enhance local fairness. We illustrate this alignment between personalization and fairness through extensive numerical experiments on a range of personalized FL methods (clustering \cite{ghosh2020efficient,briggs2020federated,sattler2020clustered}, local-finetuning \cite{fallah2020personalized}, and regularization-based \cite{li2020federated}), on real-world (\texttt{Adult} \citep{Dua:2019} and \texttt{Retiring Adult} \citep{ding2021retiring}) and synthetic datasets, and for several notions of group fairness (statistical parity, equality of opportunity, and equalized odds \cite{barocas-hardt-narayanan}). We highlight two potential factors driving the alignment: \emph{statistical} advantages of increased data diversity due to the collaborative nature of FL, and \emph{computational} advantages when there is an alignment in local accuracy and fairness. We note that our proposed algorithms are effectively taking advantage of the alignment of personalization and fairness, whenever possible, to improve the local fairness-accuracy tradeoff. %More intuitively, when clients are grouped based on similarities in model performance (a proxy for data similarities), they will treat information from other clusters as noise, which, if left unaddressed, would have led to model divergence. 

\textbf{Summary of findings and contributions.}\\
\emph{1. New fairness-aware and personalized federated learning algorithms.} We propose two new algorithms, \texttt{Fair-FCA} and \texttt{Fair-FL+HC}, which take both local accuracy and local fairness into account when (iteratively) determining clients' cluster memberships, allowing for a tunable fairness-accuracy trade-off at the client level.\\
\emph{2. New clustering-based algorithms for improving local fairness.} Our proposed tuneable algorithms lead to new methods for locally fair FL. In contrast to existing methods for locally fair FL, our methods do not include any explicit fairness interventions (pre-, in-, or post-processing steps) yet can achieve comparable or better local fairness by simply clustering clients.\\ 
\emph{3. Unintended fairness benefits of personalization.}  We conduct extensive numerical experiments, under different notions of fairness, and using both real-world and synthetic data, to show that personalization \emph{alone} can improve local fairness as an unintended benefit. This is an alignment our algorithms are exploiting when possible. We highlight the potential statistical and computational reasons leading to this alignment, and provide analytical support under certain conditions (Propositions ~\ref{prop3} and \ref{prop1}).

\section{Related Work} \label{sec:related}
\vspace{-0.1in}

Our work is at the intersection of two literatures: personalized FL and fairness in FL. We review works on personalized FL including clustering-based methods (and other related work) in Appendix~\ref{app:related}. In terms of fairness in FL, we note that this term has taken different interpretations in the FL literature. Much of the early works in fair FL \citep{li2019fair,li2021ditto,zhang2021unified,wang2021federated} primarily focused on \emph{performance fairness}, which seeks to achieve uniform accuracy across all clients. However, even if a trained model attains uniform performance, it can still exhibit bias against certain demographic groups. We in contrast, focus on notions of \emph{group fairness} in FL \cite{barocas-hardt-narayanan,salazar2024survey}. Even within this literature, the majority of the works has focused on improving \emph{global} group fairness \cite{abay2020mitigating,galvez2021enforcing,wang2023mitigating,zeng2021improving,ezzeldin2023fairfed,liu2025fairness}. In contrast, we focus on \emph{local} group fairness. 

Our work is most closely aligned with the recent works on improving local group fairness in FL \cite{meerza2024glocalfair,zhang2025sffl,makhija2024achieving,zhou2025post}. 
\citet{meerza2024glocalfair} integrate local fairness constraints with fairness-aware clustering-based aggregation, leveraging Gini coefficients to jointly enhance both global and local group fairness. Similarly, \citet{zhang2025sffl} incorporate locally fair training using the EM algorithm and adjust aggregation weights through reweighting based on distance to achieve fair aggregation. \citet{makhija2024achieving} enforce fairness constraints in the local optimization problem to prevent bias propagation during collaboration. \citet{zhou2025post} introduce fairness post-processing techniques (model output fairness post-processing and final layer fairness fine-tuning) to improve local fairness. Compared to these works, our approach to attaining local fairness is different: we demonstrate that improved local fairness, along with a better fairness-accuracy tradeoff, can be achieved by clustering clients based on a fairness-aware assignment metric (or even through personalization alone).

% personalization improves fairness in FL
Lastly, some existing works have, similar to ours, noticed connections between fairness and personalization/clustered FL techniques, but they differ from ours in either their notion of fairness, context, or scope. %As highlighted in the survey \citep{salazar2024survey}, these approaches enable models to better adapt to clients' data, thereby enhancing (group) fairness by accounting for variations associated with protected attributes. 
\citet{wang2024analyzing} use healthcare data to demonstrate that Ditto \cite{li2021ditto} (a personalized FL algorithm which enhances performance fairness) achieves better local group fairness compared to standalone learning. \citet{nafea2022proportional} add a notion of fairness into cluster identity assignment (similar to us) but to ensure proportional fairness among protected groups (a notion different from group fairness). \citet{kyllo2023inflorescence} examine the impact of fairness-unaware clustering on a number of fairness notions. In contrast, we study how and why a \emph{range} of personalized FL algorithms (not just clustering-based) may improve local fairness. 
\vspace{-0.1in}
\section{Fairness-Aware Federated Clustering Algorithms}\label{sec:new-algorithm}
\vspace{-0.1in}

Our goal is to develop FL algorithms that strike a tunable balance between local accuracy and local fairness, spanning the range between personalized FL and locally fair FL methods. To this end, we start from clustering-based personalized FL approaches, which improve local accuracy in heterogeneous settings. %, they are not inherently designed to address fairness concerns. Furthermore, 
%Existing clustering strategies primarily focus on grouping clients based on model performance \citep{ghosh2020efficient}, model parameter \citep{briggs2020federated}, or gradient updates \citep{sattler2020clustered}, with the goal of maximizing local accuracy. 
The underlying idea of these methods is that clients with similar data distributions (assessed based similarity of their model performance \citep{ghosh2020efficient}, model parameter \citep{briggs2020federated}, or gradient updates \citep{sattler2020clustered}) will benefit from forming smaller ``teams'' together. Inspired by this, \emph{we instead allow clients to join forces based on both fairness and accuracy benefits}, which as we show, may be due to them having similar \emph{or} (appropriately) different local datasets. %This motivates us to propose a new class of {fairness-aware federated clustering algorithms}, which 
%We next show that integrating fairness considerations into the client clustering process in this way enables clients to achieve both high local accuracy and equitable treatment across demographic groups at the client level.

We illustrate the viability of this idea by proposing \texttt{Fair-FCA} and \texttt{Fair-FL+HC}, which build on the existing \texttt{IFCA} algorithm \citep{ghosh2020efficient} and \texttt{FL+HC} algorithm \citep{briggs2020federated}, respectively. We choose these algorithms as starting points, since many existing algorithms in the clustered FL literature are built on the \texttt{IFCA} framework (e.g., \cite{li2021federated, chung2022federated, huang2023active,ma2024structured}) and the \texttt{FL+HC} framework (e.g., \cite{jothimurugesan2023federated,luo2023privacy, li2023hierarchical,sun2024collaborate}). 

\vspace{-0.1in}
\subsection{Problem setting and preliminaries}\label{sec:setup}
\vspace{-0.1in}

We consider an FL setting with $N$ clients, where each client $i$ is tasked with a binary classification problem. The client's dataset consists of samples $z = (x,y,g)$, where $x \in \mathbb{R}^d$ represents the feature vector, $y \in \{0,1\}$ is the true label, and $g \in \{a,b\}$ is a binary protected attribute (e.g., race, sex). A client $i$ has access to $n_i$ such samples, $Z_i:=\{z_{ij}\}^{n_i}_{j=1}$, drawn independently from a joint feature-label-group distribution with probability density functions $\mathcal{G}^{y,i}_g(x)$. These distributions differ across clients, which causes the conflicts between model accuracy/fairness at local and global levels. %We denote client $i$'s local dataset as $\mathcal{D}_i$. 

\emph{Evaluating local accuracy.} Let $f(z, \theta)$ denote the loss function associated with data point $z$ under model $\theta$. This could be, for instance, the misclassification loss. Then, the local empirical loss of client $i$ is given by $F(Z_i,\theta):= \frac{1}{n_i}\sum_{j=1}^{n_i} f(z_{ij},\theta)$.  

\emph{Evaluating local fairness.} %Given the protected attribute $g \in \{a,b\}$, 
Consider a learned model $\theta$, and let $\hat{y}(\theta)$ denote the labels assigned by it. 
We assess the \emph{group fairness} of $\theta$ according to three commonly studied notions of group fairness.
\begin{enumerate}
    \item \emph{Statistical Parity} (\texttt{SP}) \citep{dwork2012fairness} assesses the gap between the selection rate of each group: (i.e., $\Delta_{\texttt{SP}}(\theta):=|\mathbb{P}(\hat{y}(\theta) = 1 |g = a) - \mathbb{P}(\hat{y}(\theta) = 1 |g = b)|$;
    \item \emph{Equality of Opportunity} (\texttt{EqOp}) \citep{hardt2016equality} finds the gap between true positive rates on each group: (i.e., $\Delta_{\texttt{EqOp}}(\theta) := |\mathbb{P}(\hat{y}(\theta) = 1 |g = a, y = 1) - \mathbb{P}(\hat{y}(\theta) = 1 |g = b, y = 1)|)$;
    \item \emph{Equalized Odd} (\texttt{EO}) \citep{hardt2016equality} is set to the gap between true positive or false positive rates between groups, whichever larger (i.e., $\Delta_{\texttt{EO}}(\theta) := \max_{i\in\{0,1\}}|\mathbb{P}(\hat{y}(\theta) = 1 |g = a, y = i) - \mathbb{P}(\hat{y}(\theta) = 1 |g = b, y = i)|$). 
\end{enumerate}
Here, the probability is with respect to the data distributions $\mathcal{G}^{y,i}_g(x)$ of client $i$. These fairness metrics can be evaluated empirically on the client's data realization $Z_i$ (setting the probabilities to the number of data points satisfying its condition divided by the group sample size). Let $\Psi^f(Z_i, \theta)$ denote the empirical local fairness of model $\theta$ for fairness metric $f \in \{\texttt{SP}, \texttt{EqOp}, \texttt{EO} \}$, assessed on $Z_i$. 

\subsection{Integrating fairness metrics in cluster identity assignment}

%Consider a FL scenario involving a total of $n$ clients and $K$ disjoint clusters. Each client $i$ has $n_i$ samples $Z_i:=\{z_{ij}\}^{n_i}_{j=1}$. Let $f(z, \theta)$ be the loss function associated with data point $z$ under model $\theta$. Then, the empirical loss of client $i$ is given by $F(Z_i,\theta):= \frac{1}{n_i}\sum_j f(z_{ij},\theta)$. Also, let $\Psi^f(Z_i, \theta)$ denote the fairness under model $\theta$ for fairness metric $f \in \{\texttt{SP}, \texttt{EqOp}, \texttt{EO} \}$ assessed on $Z_i$. 

\noindent\textbf{The \texttt{Fair-FCA} algorithm.} This algorithm iterates over two steps: (1) cluster identity assignment, and (2) training of cluster-specific models. Specifically, let $\Theta^t_k$ denote cluster $k$'s model at time step $t$. The cluster identity for client $i$ at time $t$, denoted $c^t(i)$, is determined by:
\begin{equation}
    c^{t}(i) = \arg\min_{k \in [K]} \gamma F(Z_i,\Theta^{t}_k) + (1-\gamma) \Psi^f(Z_i,\Theta^{t}_k) \label{eq: cluster_assignment}
\end{equation}
Here, $K$ be the total number of clusters (a hyperparameter), and $\gamma$ is a hyperparameter that strikes a desired balance between accuracy and fairness. For $\gamma=1$, we recover the \texttt{IFCA} algorithm; for $\gamma=0$, we obtain a clustered FL algorithm that prioritizes only (local) $f$-fairness when grouping clients. For $0<\gamma<1$, we obtain clusters that provide each client with the best fairness-accuracy tradeoff among those attainable if the client were to join each cluster. 

Let $C^t_k$ be the set of clients whose cluster identity is $k$ at the end of this assignment process (i.e., $C^t_k = \{i \in [n]: c^t(i) = k\}$). Once clients get assigned clusters, each client $i$ starts from its corresponding cluster model $\Theta^t_{c^t(i)}$, and locally runs gradient steps  $\theta_i^t = \Theta^t_{c^t(i)} - \eta \nabla_{\theta_i} F(Z_i,\Theta^t_{c^t(i)})$ to update it. Then, the updated local models $\theta_i^t$ are sent to the central server, who uses these to update the cluster models to $\Theta^{t+1}_{1:K}$ by taking the weighted average of the local models of clients in corresponding clusters. Formally, $\Theta^{t+1}_k = \Theta^t_k - \sum_{i \in C^t_k} \frac{n_i}{\sum_i n_i} (\Theta^t_k  - \theta^t_i), \forall k \in [K]$. The pseudo-code for \texttt{Fair-FCA} is shown in Algorithm~\ref{alg:one-one}. 
%Appendix~\ref{app:pesudocode}. 

%{\begin{scriptsize}
\SetKwComment{Comment}{/* }{ */}
\RestyleAlgo{ruled}
\begin{algorithm}[t]
\caption{\texttt{Fair-FCA}}\label{alg:one-one}

\textbf{Input}: Number of clusters $K$, number of clients $N$, number of local updates $E$, cluster model initialization $\Theta_{1:K}$, learning rate $\eta$, fairness-accuracy tradeoff $\gamma$, fairness $f \in \{\texttt{SP}, \texttt{EqOp}, \texttt{EO}\}$.\\
\textbf{Initialize}: Start clusters $k\in[K]$ by randomly selecting one client for each.\\
\While{not converge}{ 
   
  \For{client $i \in [n]$}{
    \textbf{Find} cluster identity:\\
    \hspace{0.3in} $c(i) = \arg\min_{k \in [K]} \gamma F(Z_i,\Theta_k) + (1-\gamma)\Psi^{f}(Z_i,\Theta_k)$\\
    \textbf{Initialize} $\theta_i = \Theta_{c(i)}$\\
    \textbf{Perform} $E$ steps of local update \\
    \hspace{0.3in} $\theta_i = \theta_i - \eta \nabla_{\theta_i} F(Z_i,\theta_{i})$ \\
    \textbf{Upload} $\theta_i$ to server
  }
  \textbf{Update} the cluster model $\Theta_{1:K}$\\
  \hspace{0.3in} $\Theta_k = \Theta_k - \sum_{i \in C_k} \frac{n_i}{\sum_i n_i} (\Theta_k  - \theta_i)$\\
  \textbf{Send} new cluster models $\Theta_{1:K}$ to all clients
}
\textbf{Output}: Cluster models $\Theta_{1:K}$, Cluster identity $c(i), \forall i \in [n]$.
\end{algorithm}
%\end{scriptsize}}

%\vspace{0.1in}
\noindent\textbf{The \texttt{Fair-FL+HC} algorithm.}
Initially, the algorithm runs the regular \texttt{FedAvg} procedure for a predetermined number of rounds before clustering. Once a global model $\theta^{FA}$ is obtained, each client receives the model and performs several local updates to personalize their local models $\theta_i$. 

Like the \texttt{FL+HC} algorithm, the \texttt{Fair-FL+HC} also employs hierarchical clustering with a set of hyperparameters $P$ to group clients by minimizing intra-cluster variance, measured using the $L_2$ Euclidean distance metric. We extend this approach by also considering fairness performance. 
\begin{equation}
    \text{Clusters} = \text{HierarchicalClustering}(\gamma \mathbf{D} + (1-\gamma) \mathbf{\Psi}^f(Z, \theta), P)
     \label{eq: cluster_assignment_Fair-FL+HC}
\end{equation}
Here, $\mathbf{D}$ is a symmetric matrix where each entry  $D_{i,j}$ represents the Euclidean distance between $\theta_i$ and $\theta_j$. $\mathbf{\Psi}^f(Z, \theta):= max (\Psi^f(Z_i, \theta_j), \Psi^f(Z_j, \theta_i))$, with $f \in \{\texttt{SP},\texttt{EqOp},\texttt{EO}\}$ is also a symmetric matrix that captures the worst-case $f$-fairness performance when client $i$'s model is evaluated on client $j$'s local data, or vice versa. The parameter $\gamma$ balances between fairness and accuracy considerations; when $\gamma=1$, we recover the \texttt{FL+HC} algorithm. Once clustering is completed, each cluster trains its model independently using \texttt{FedAvg}. The pseudo-code 
for \texttt{Fair-FL+HC} 
is shown in Appendix~\ref{app:pesudocode}.

\subsection{Comparison with existing locally fair FL algorithms ($\gamma=0$)}\label{subsec:comparison_gamma0}

We first benchmark our proposed methods against the baseline \texttt{FedAvg} algorithm \citep{mcmahan2017communication}, as well as two fair FL algorithms specifically designed to improve local group fairness: \texttt{EquiFL} \citep{makhija2024achieving} and \texttt{PPFL} \citep{zhou2025post}. The \texttt{EquiFL} algorithm \citep{makhija2024achieving}, an in-processing approach, enforces fairness constraints during local model training. The \texttt{PPFL} algorithm \citep{zhou2025post}, a post-processing approach, probabilistically adjusts model predictions after training to satisfy fairness criteria. Our code is available at: \url{https://github.com/Yifankevin/Enhancing-Local-Fairness-in-Federated-Learning-through-Clustering}.

We begin our experiments with a synthetic dataset, focusing on average local statistical parity (\texttt{SP}) fairness. We randomly generate 6 clients, each with different levels of imbalance in the number of samples (e.g., balanced, mildly imbalanced, highly imbalanced). Since \texttt{SP} fairness depends on both group/label rates and data features, the synthetic setting allows us to isolate and analyze the impact of each factor on fairness performance one at a time. Additional experiments using different parameter settings and real-world datasets, and full experimental details are provided in Appendix~\ref{app_imbalance_tables_and sweep}.

\begin{figure}[ht]
\vspace{-0.1in}
	\centering
	\subfigure[Group Imbalance]{
		\includegraphics[width=0.3\textwidth]{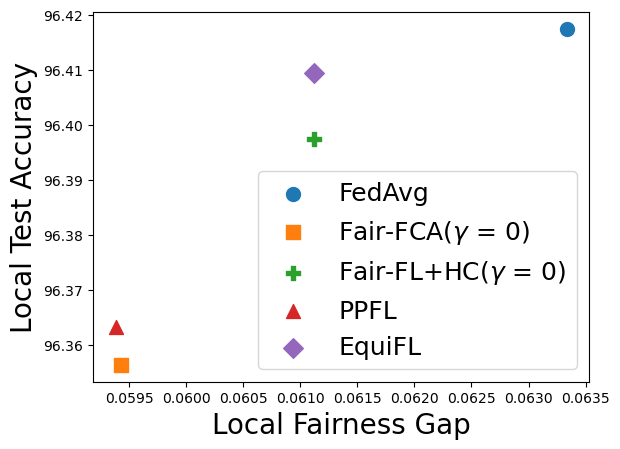} 
	}
	\hspace{-0.1in}
	\subfigure[Label Imbalance]{		\includegraphics[width=0.3\textwidth]{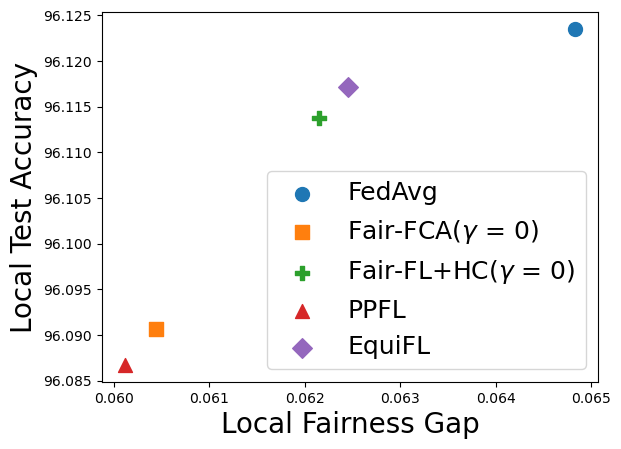}
	}
        \hspace{-0.05in}
	\subfigure[Feature Imbalance]{
\includegraphics[width=0.285\textwidth]{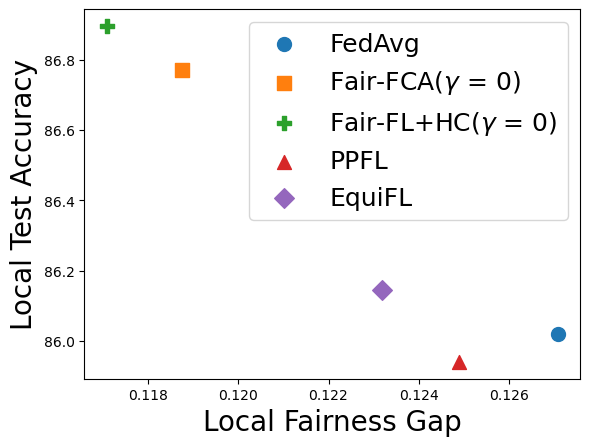} 
	}
 \vspace{-0.1in}
	\caption{\texttt{Fair-FCA} and \texttt{Fair-FL+HC} on synthetic datasets ($\gamma = 0, f=\texttt{SP}$)}
	\label{fig:synthetic_imbalance}
  \vspace{-0.1in}
\end{figure}

Our results in Fig.~\ref{fig:synthetic_imbalance} show that clustering clients by their local fairness metrics improves the fairness performance compared with \texttt{FedAvg} and achieves performance comparable to that of existing methods. Interestingly, when group or label rates are imbalanced, \texttt{Fair‑FCA} outperforms \texttt{Fair‑FL+HC} in terms of local fairness. We attribute this to bias propagation during the warm-start phase in the \texttt{Fair‑FL+HC}, which tends to group highly imbalanced clients with mildly imbalanced ones. In contrast, when feature distributions are imbalanced, \texttt{Fair‑FL+HC} yields better fairness performance than \texttt{Fair‑FCA}. We believe this is due to its hierarchical clustering approach, which is more sensitive to relative differences in client performance and thus better captures distributional mismatches.

This insight is further validated by experiments on the \texttt{Adult} \citep{Dua:2019} and \texttt{Retiring Adult} \citep{ding2021retiring} datasets, shown in Fig.~\ref{fig:experiments_adult_retiring}. The \texttt{Adult} dataset involves predicting whether an individual earns more than \$50k annually based on demographic and socioeconomic features. We randomly generate 5 clients, each with different levels of imbalance in the number of samples based on sex. To explore the impact of more complex distributional differences, we also evaluate on the \texttt{Retiring Adult} dataset, which includes census data from all 50 U.S. states and Puerto Rico. Each state is treated as a client, with data samples consisting of multi-dimensional feature vector $x$ (e.g., age, education, citizenship), a true label $y$, and a protected attribute $g$ (e.g., sex). To amplify distributional differences, we manually scale the feature set ($x$) by 60\% for the states with IDs \{1, 10, 20, 30, 40, 50\}. Our experiments here show the average \texttt{SP} fairness on the ACSIncome (Income) classification task. We again observe that our proposed methods  achieves local fairness comparable to, or better than, that of existing locally fair FL methods. 

\begin{figure}[ht]
\vspace{-0.1in}
	\centering
	\subfigure[Adult (Sex)]{
		\includegraphics[width=0.3\textwidth]{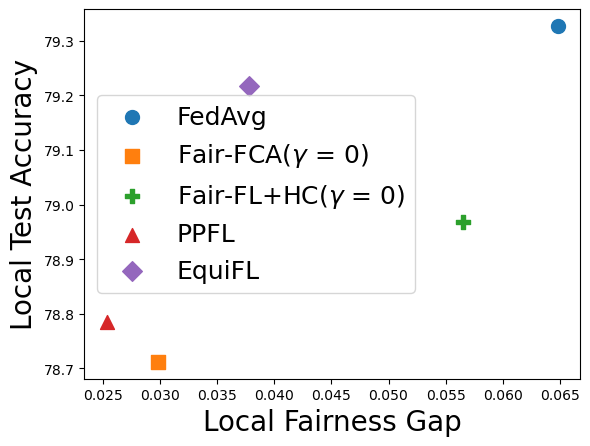} 
	}
        \hspace{-0.05in}
	\subfigure[Income (Sex)]{
\includegraphics[width=0.3\textwidth]{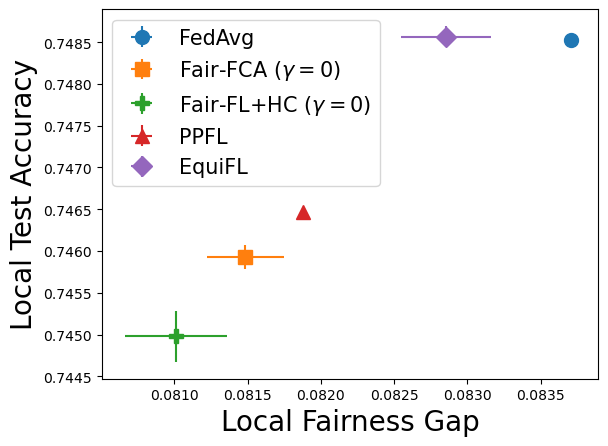} 
	}
 \vspace{-0.1in}
	\caption{\texttt{Fair-FCA} and \texttt{Fair-FL+HC} on \emph{Adult} and \emph{Retiring Adult} datasets ($\gamma = 0, f=\texttt{SP}$)} 
	\label{fig:experiments_adult_retiring}
  \vspace{-0.1in}
\end{figure}

\subsection{Tuneable fairness-accuracy tradeoff using \texttt{Fair-FCA} and \texttt{Fair-FL+HC} $\gamma\in(0,1)$}\label{subsec:tradeoff}
Another merit of our approach is that it can offer a tunable trade-off between accuracy and fairness at the client level. We begin by conducting a numerical experiment on a synthetic dataset to illustrate the ability of \texttt{Fair-FCA} and \texttt{Fair-FL+HC} to strike the desired balance between fairness and accuracy. Additional experiments conducted on real-world datasets are presented in Appendix~\ref{app_experiment_trade_offs}.

We consider a total of 8 clients that could (potentially) be clustered into two clusters. Among these, 6 clients (Client ID: 2,4,5,6,7,8) have similar data distributions, with 4 clients (Client ID: 4,6,7,8) sharing identical distributions across the two protected groups $a,b$ (low fairness gap). The remaining 2 clients (Client ID: 1,3) have different data distributions compared to the first six, but they also share identical distributions across the two protected groups. We consider $f=\texttt{SP}$. Let $\gamma_1, \gamma_2 \in[0,1]$ be the hyperparameters of the \texttt{Fair-FCA} and \texttt{Fair-FL+HC} algorithms, respectively. %Additionally, let the distance threshold $d \in P$ in \texttt{Fair-FL+HC} be 0.35.

\begin{wrapfigure}[14]{r}{0.42\textwidth}\vspace{-0.3in}
	\centering
	\subfigure{
		\includegraphics[width=0.42\textwidth]{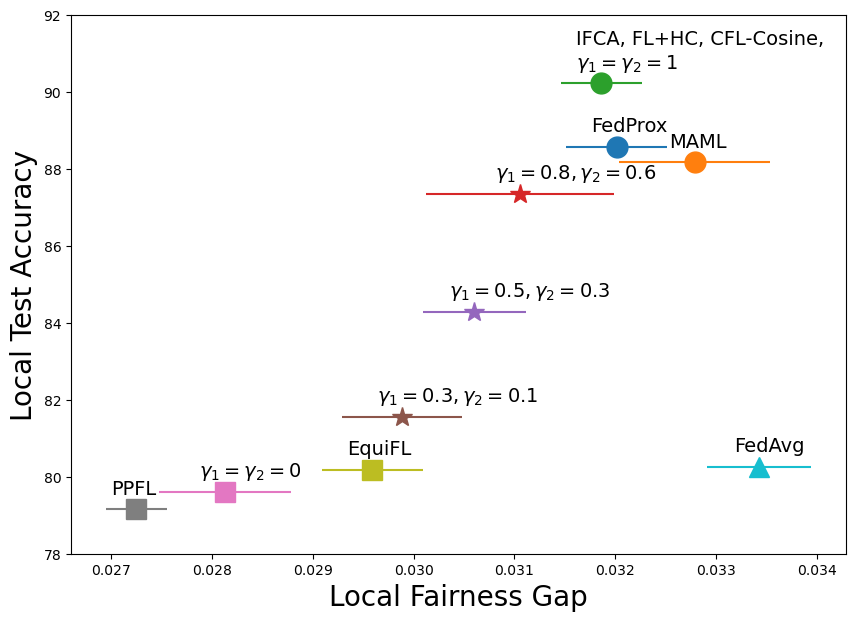}
	}
	\vspace{-0.3in}
	\caption{Comparison of our methods with existing personalized FL and locally fair FL methods, under different $\gamma_1, \gamma_2$.}
	\label{fig:synthetic_tradeoff}
\end{wrapfigure} 
Figure~\ref{fig:synthetic_tradeoff} shows our proposed algorithms span the gap between personalized FL methods (e.g., \texttt{IFCA}, \texttt{FL+HC}, etc.) and locally fair FL methods (e.g., \texttt{PPFL}, \texttt{EquiFL}). When $\gamma_1 = \gamma_2 = 1$, both \texttt{Fair-FCA} and \texttt{Fair-FL+HC} prioritize accuracy, recovering existing \texttt{IFCA} and \texttt{FL+HC} methods; by design, this is attained by grouping the 6 clients having similar data distributions together (\{1,3\} and \{2,4,5,6,7,8\}). In contrast, when $\gamma_1 = \gamma_2 = 0$, both \texttt{Fair-FCA} and \texttt{Fair-FL+HC} focus only on \texttt{SP} fairness by clustering clients that have identical distributions on the two protected groups together (\{2,5\} and \{1,3,4,6,7,8\}), achieving comparable or better local fairness to existing locally fair FL methods. Lastly, by setting $\gamma_1, \gamma_2 \in (0,1)$, we can effectively account for both accuracy and fairness when clustering, covering the gap as desired.

%Lastly, by setting $\gamma_1, \gamma_2 \in (0,1)$, we can effectively account for both accuracy and \texttt{SP} fairness when clustering: when $\gamma_1 = 0.3, \gamma_2 = 0.1$, the clusters are \{2,4,5\} and \{1,3,6,7,8\}; when $\gamma_1 = 0.5, \gamma_2 = 0.3$, the clusters are \{2,4,5,6\} and \{1,3,7,8\}; and when $\gamma_1 = 0.8, \gamma_2 = 0.6$, the clusters are \{2,4,5,6,7\} and \{1,3,8\}.

\section{Personalization alone can also improve fairness} \label{sec:numerical}
%\vspace{-0.1in}
Our findings in Section~\ref{subsec:comparison_gamma0} demonstrated that our proposed algorithms effectively improve local fairness by setting their tuneable parameter to $\gamma=0$. In Section~\ref{subsec:tradeoff}, we further show that these algorithms attain a tuneable fairness-accuracy trade-off to $\gamma\in(0,1)$. In this section, we now move to the extreme of $\gamma=1$, which leads to the existing \texttt{IFCA} \citep{ghosh2020efficient} and \texttt{FL+HC} \citep{briggs2020federated} algorithms that our methods build on. Interestingly, we show that even without fairness considerations, \emph{personalization alone can still enhance local fairness as an unintended benefit}. One of the advantages of our proposed methods is therefore that they are exploiting this alignment when present.

We consider several classes of personalized FL methods to illustrate this alignment; %: three that use some form of clustering (including the two our methods are built on), one that uses local fine-tuning; 
this is to show that our insights on the alignment of personalization and fairness hold irrespective of how personalization is achieved. In more detail, we run experiments on the following personalized FL methods. The \texttt{IFCA} algorithm \citep{ghosh2020efficient} alternates between clustering clients based on model performance and optimizing parameters within each cluster. The \texttt{FL+HC} algorithm \citep{briggs2020federated} employs hierarchical clustering to minimize intra-cluster variance, measured by the Euclidean distance between models. The \texttt{CFL-Cosine} algorithm \citep{sattler2020clustered} partitions clients into two clusters by minimizing the maximum cosine similarity of their gradient updates. Beyond clustering, the \texttt{MAML-FL} algorithm \citep{fallah2020personalized} extends \texttt{FedAvg} by allowing clients to fine-tune the global model through extra local gradient steps. Similarly, the \texttt{FedProx} algorithm \citep{li2020federated} adds $l_2$ regularization to balance local and global model learning. We compare the local accuracy and local fairness of these algorithms against FedAvg and standalone learning. % (where each client trains its own local model from its local data, without any collaboration). 
%We begin by numerically illustrating the (unintended) fairness benefits of personalization, and elaborate on the reasons behind it. 
We will run these experiments on the \texttt{Adult} \citep{Dua:2019} and \texttt{Retiring Adult} \citep{ding2021retiring} datasets, comparing the average local statistical parity (\texttt{SP}) fairness achieved by different FL algorithms. %In addition to the ``Retiring Adult'' dataset, we will also present numerical experiments on the ``Adult'' dataset \citep{Dua:2019} in Section~\ref{sec:adult-numerical}. 
Similar experiments supporting our findings under other notions of fairness (\texttt{EqOp}, \texttt{EO}) are given in Appendix~\ref{app:numerical_other_notion}. We then substantiate our findings with analytical support in Section~\ref{sec:analytical-support-overview}. 

\vspace{-0.1in}
\subsection{Imbalanced groups: statistical advantages of collaboration} \label{subsec: imbalanced}
\vspace{-0.1in}

We first consider the ACSEmployment task in the \texttt{Retiring Adult} dataset with ``race'' as the protected attribute. Fig~\ref{fig:race_emp_a} shows the fraction of samples in each group-label, from several states, highlighting an imbalance between samples from the White and Non-White groups. This is further evident in Figure~\ref{fig:race_emp_b},  which shows that most states have only $\sim 10\%$ qualified (label 1) samples from the Non-White group, in contrast to $\sim 35\%$ qualified samples from the White group. 

\begin{figure}[ht]
\vspace{-0.1in}
	\centering
	\subfigure[Fraction of samples]{
		\includegraphics[width=0.3\textwidth]{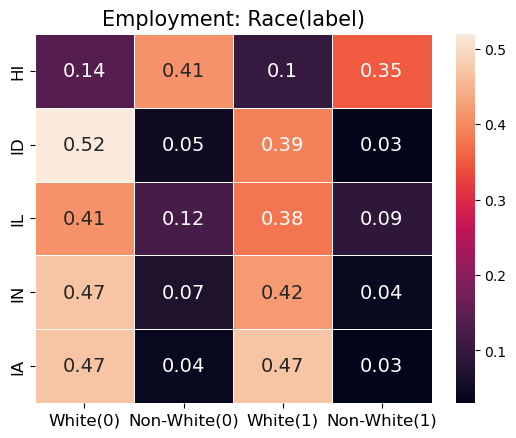} \label{fig:race_emp_a}
	}
	\hspace{-0.15in}
	\subfigure[Normalized sample frequency]{		\includegraphics[width=0.31\textwidth]{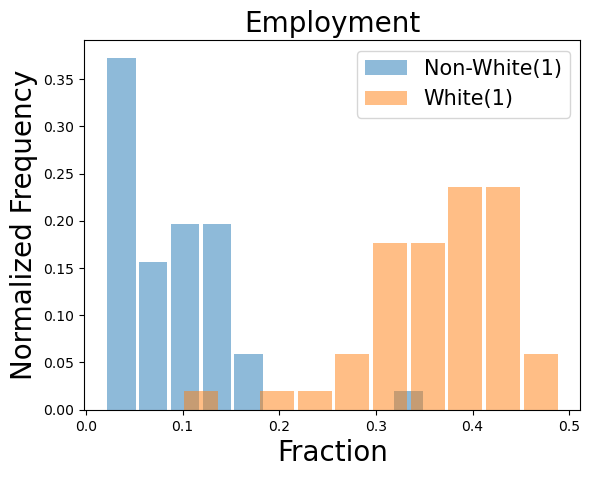} \label{fig:race_emp_b}
	}
        \hspace{-0.05in}
	\subfigure[Local accuracy vs. fairness gap]{
\includegraphics[width=0.31\textwidth]{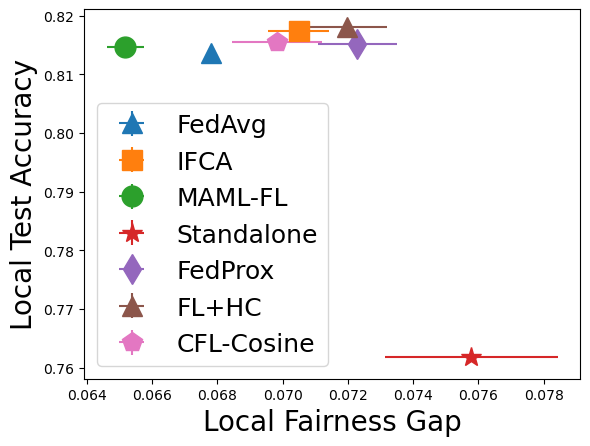} \label{fig:race_emp_c}
	}
 \vspace{-0.1in}
	\caption{Experiments on the ACSEmployment task with imbalanced groups (race).}
	\label{fig:race_emp}
  \vspace{-0.1in}
\end{figure}

Fig~\ref{fig:race_emp_c} shows that all collaborative training algorithms (\texttt{FedAvg}, \texttt{MAML-FL}, \texttt{IFCA}, \texttt{FedProx}, \texttt{FL+HC}, and \texttt{CFL-Cosine}) achieve better local fairness (smaller gap) compared to \texttt{Standalone} learning. This is due to the \emph{statistical benefits} of collaboration: each client has limited samples in the non-White group, leading to poorly trained models with high local fairness gap (and low accuracy). In contrast, collaborative training in essence has access to more data, improving both metrics. For the same reason, the \texttt{IFCA}, \texttt{FL+HC} and \texttt{CFL-Cosine} algorithms, which partition clients into multiple clusters, has (slightly) worse local fairness compared to \texttt{FedAvg}. Similarly, the \texttt{FedProx} algorithm imposes a regularization term that prevents the local updates from deviating too much from the global model, making it less fair compared to \texttt{FedAvg}. In comparison, the \texttt{MAML-FL} algorithm, which effectively sees the global dataset (when training the global model that is later fine-tuned by each client), has better local fairness compared to \texttt{FedAvg}, indicating that personalization can improve both local accuracy (as intended) and local fairness (as a side benefit). 

\vspace{-0.1in}
\subsection{Better-balanced groups: computational advantages of collaboration}
\vspace{-0.1in}

We next consider better-balanced data, to show advantages of collaborative and personalized training beyond the statistical benefits of (effectively) expanding training data. We again consider the ACSEmployment task, but now with ``sex'' as the protected attribute. Fig~\ref{fig:sex_emp_a} shows that data samples are more evenly distributed across groups and labels in this problem. Figure~\ref{fig:sex_emp_b} further confirms that clients exhibit similar sample fractions of label 1 individuals in male and female groups. 

\begin{figure}[ht]
\vspace{-0.1in}
	\centering
	\subfigure[Fraction of samples]{
		\includegraphics[width=0.3\textwidth]{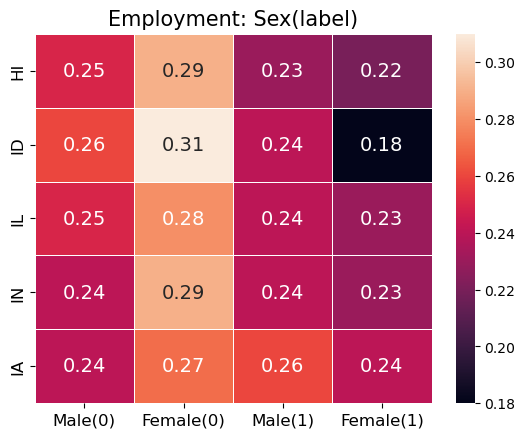} \label{fig:sex_emp_a}
	}
	\hspace{-0.1in}
	\subfigure[Normalized sample frequency]{		\includegraphics[width=0.31\textwidth]{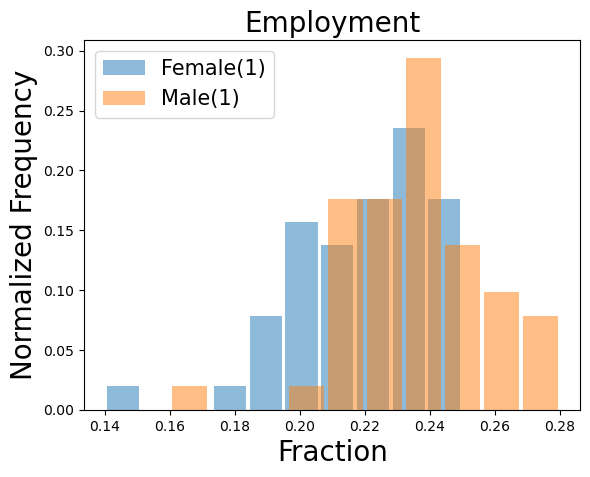} \label{fig:sex_emp_b}
	}
        \hspace{-0.05in}
	\subfigure[Local accuracy vs. fairness gap]{
\includegraphics[width=0.31\textwidth]{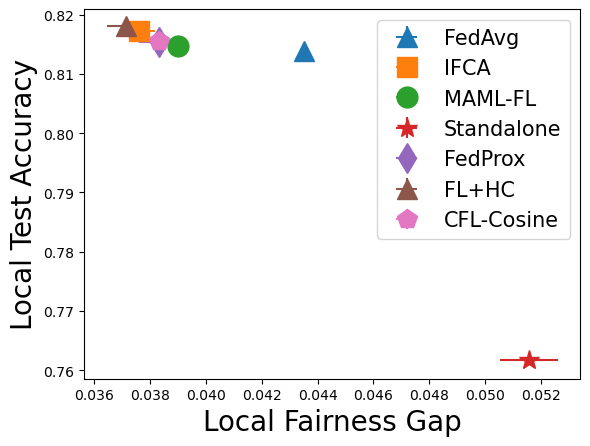} \label{fig:sex_emp_c}
	}
 \vspace{-0.1in}
	\caption{Experiments on the ACSEmployment task with better-balanced groups (sex).}
	\label{fig:sex_emp}
  \vspace{-0.1in}
\end{figure}

Fig~\ref{fig:sex_emp_c} shows that all collaborative training algorithms still have better local fairness compared to \texttt{Standalone} learning. Furthermore, we observe that all personalized learning algorithms (\texttt{IFCA}, \texttt{FL+HC}, \texttt{CFL-Cosine}, \texttt{MAML-FL}, and \texttt{FedProx}) improve both local accuracy and local fairness compared to \texttt{FedAvg}. This is due to the \emph{computational advantages} of (personalized) collaborative learning: for each client, due to similarity of the data for the male and female groups (as seen in Figure~\ref{fig:sex_emp_b}) the objective of maximizing local accuracy is aligned with reducing the local fairness gap. Therefore, collaboration improves local fairness, with personalization further enhancing the model's local accuracy and therefore its fairness.  

We also note that (local) accuracy and fairness may not necessarily be aligned. Our next experiment shows that personalization can still improve fairness in such tasks compared to non-personalized \texttt{FedAvg}, which (we interpret) is driven by a combination of statistical and computational benefits. Specifically, we conduct experiments on another task, ACSIncome, with ``sex'' as the protected attribute. Fig~\ref{fig:sex_inc_a} shows that for this task, the fraction of samples is comparable across groups for label 0 data, but differs for label 1 data. From Fig~\ref{fig:sex_inc_c}, we observe that this time, all collaborative training algorithms improve accuracy but have \emph{worse} local fairness compared to \texttt{Standalone} learning; this is because improving (local) accuracy is not aligned with fairness in this task. That said, we observe that the personalized FL algorithms slightly improve local fairness compared to \texttt{FedAvg}. We interpret this as the statistical advantage of (effectively) observing more label 1 data (as \texttt{FedAvg} does, too), combined with a computational advantage of not overfitting a global model to the majority label 0 data (unlike what \texttt{FedAvg} may be doing). 

\begin{figure}[ht]
\vspace{-0.1in}
	\centering
	\subfigure[Fraction of samples]{
		\includegraphics[width=0.3\textwidth]{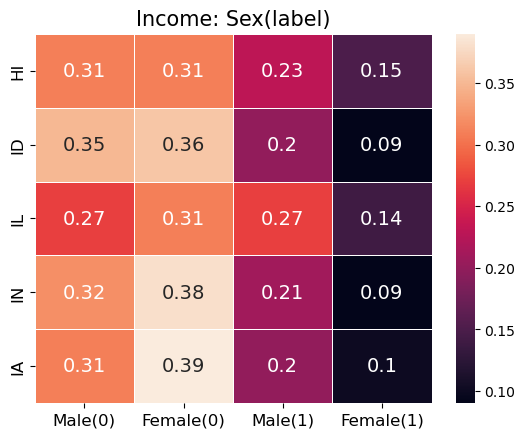} \label{fig:sex_inc_a}
	}
	\hspace{-0.1in}
	\subfigure[Normalized sample frequency]{		\includegraphics[width=0.31\textwidth]{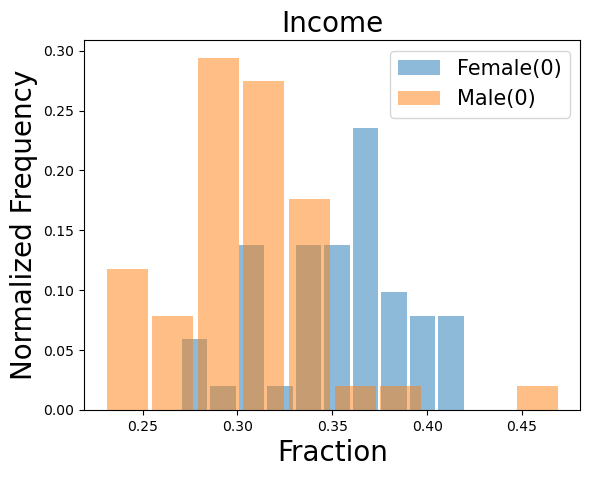} \label{fig:sex_inc_b}
	}
        \hspace{-0.05in}
	\subfigure[Local accuracy vs. fairness gap]{
\includegraphics[width=0.31\textwidth]{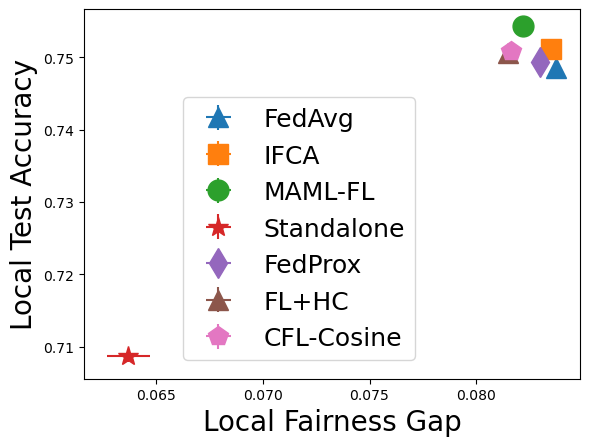} \label{fig:sex_inc_c}
	}
 \vspace{-0.1in}
	\caption{Experiments on the ACSIncome task with sex as the protected attribute.}
	\label{fig:sex_inc}
  \vspace{-0.1in}
\end{figure}

\vspace{-0.1in}
\subsection{Experiments on the \texttt{Adult} dataset}\label{sec:adult-numerical}
\vspace{-0.1in}

We also contrast these methods on the \texttt{Adult} dataset \citep{Dua:2019}. Among those 48842 samples, 41762 samples belong to the White group, while 7080 samples are from the Non-White groups. Given the nature of this data's heterogeneity, we randomly generate 5 clients each with an unbalanced number of samples based on race. Additionally, in Appendix~\ref{app:numerical_other_data}, we conduct experiments where samples are distributed with less heterogeneity across 5 clients, as done in other existing FL studies (e.g.~\citep{ezzeldin2023fairfed}). 

\begin{figure}[ht]
\vspace{-0.1in}
	\centering
	\subfigure[Number of samples]{
    \includegraphics[width=0.51\textwidth]{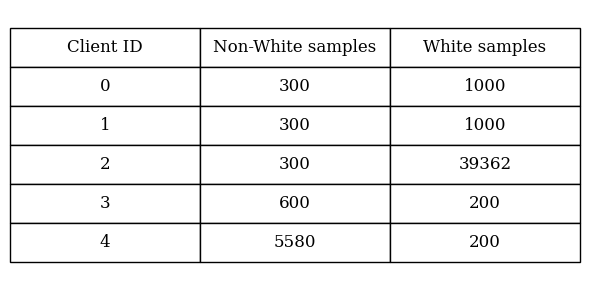} \label{table:adult_client_info}
	}
    \hspace{-0.05in}
	\subfigure[Local accuracy vs. fairness gap]{
\includegraphics[width=0.31\textwidth]{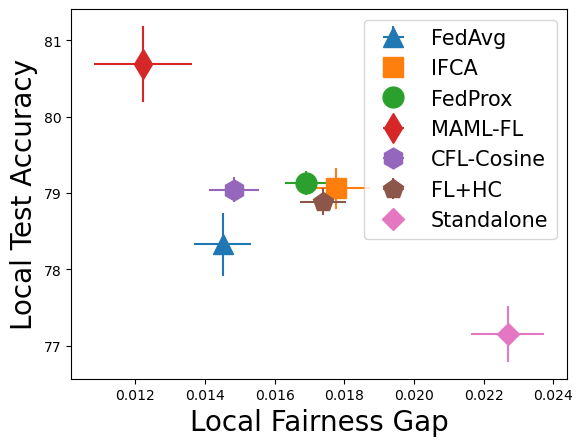} \label{fig:adult_acc_fair}
	}
 \vspace{-0.1in}
	\caption{Experiments on the \texttt{Adult} dataset with race as the protected attribute.}
	\label{fig:adult}
  \vspace{-0.1in}
\end{figure}

From Fig~\ref{fig:adult_acc_fair}, we can see that the results are consistent with our findings in Section~\ref{subsec: imbalanced}. Interestingly, we observe that the \texttt{IFCA} algorithm clusters clients by grouping those with more White samples into one cluster and those with more Non-White samples into another (i.e., \{0,1,2\}, \{3,4\}). In contrast, the \texttt{FL+HC} clusters clients by grouping those with more samples into one cluster and those with less samples into another (i.e., \{0,1,3,4\}, \{2\}). As a result, these two variants of clustering-based algorithms have different performance, having statistical advantages but for different reasons.

\subsection{Analytical support}\label{sec:analytical-support-overview}

To further support our numerical findings, we analytically show that personalized federated clustering algorithms, which group similar clients together to enhance local accuracy, can improve local fairness compared to a non-personalized global model under certain conditions. Specifically, we consider a setting with single-dimensional features that lead to optimal threshold-based classifiers and assume clients are grouped into two clusters $\{C_\alpha, C_\beta\}$ based on similarities in their local datasets. 

Let $\theta^*_{G}$ and $\theta^*_{i}, i\in\{\alpha, \beta\}$ denote the optimal decision threshold for the FedAvg algorithm and for clusters $C_\alpha$ and $C_\beta$, respectively. We define $\Delta_{f}(\theta^*_C)$ as the $f$-fairness performance, $f \in \{\texttt{EqOp}, \texttt{SP}, \texttt{EO}\}$ when using personalized federated clustering algorithms. Specifically, it quantifies the overall $f$-fairness achieved by applying the optimal decision threshold $\theta^*_{i}$ to its corresponding cluster $C_i$. Similarly, we define $\Delta^i_{f}(\theta^*_i)$ as the $f$-fairness measured within cluster $C_i$.

\begin{prop}[Improved overall \texttt{EqOp} through clustering] ~\label{prop3}
Let $\mathcal{G}^{y,c}_g(x)$ be unimodal distributions for $y\in\{0,1\}, g\in\{a,b\}, c\in\{C_\alpha, C_\beta\}$, with modes $m^{y,c}_g$ satisfying $m^{y,c}_{b}\leq m^{y,c}_a, \forall g, c$, and $\alpha^{1,C_\alpha}_g \geq \alpha^{0, C_\alpha}_g, \forall g$. Suppose  $\theta^{*}_{\alpha} < \theta^{*}_{\beta}$. Then, there exists a cluster size $\hat{p}$ such that for $p\geq \hat{p}$, we have $\Delta_{\texttt{EqOp}}(\theta^*_C)\leq\Delta_{\texttt{EqOp}}(\theta^*_G)$, where 
$\hat{p}$ is the solution to $\hat{p}=\min\{1, | \tfrac{\int_{\theta^*_G}^{\theta^*_\beta} \mathcal{G}^{1,\beta}_a(x)-\mathcal{G}^{1,\beta}_b(x)dx}{\int_{\theta^*_\alpha}^{\theta^*_G} \mathcal{G}^{1,\alpha}_a(x)-\mathcal{G}^{1,\alpha}_b(x)dx}|\}$ and $\theta^*_{G}$ is obtained as: $\theta^*_{G} =\arg\min_{\theta} \hat{p}*\sum_{j\in \mathcal{C}_\alpha} \mathcal{L}_j(\theta) + (1-\hat{p})*\sum_{j\in \mathcal{C}_\beta} \mathcal{L}_j(\theta)$.
\end{prop}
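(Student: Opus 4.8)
The plan is to compare the $\texttt{EqOp}$ gap of the personalized (two-cluster) solution against that of the FedAvg solution by reducing everything to one-dimensional threshold classifiers and tracking how the true-positive rates of groups $a,b$ move as the threshold shifts. Since features are single-dimensional and the Bayes-optimal classifier is a threshold rule, for a cluster $C_i$ with threshold $\theta$ the true positive rate on group $g$ is $\mathrm{TPR}_g^i(\theta)=\int_{\theta}^{\infty}\mathcal{G}^{1,i}_g(x)\,dx$, so $\Delta^i_{\texttt{EqOp}}(\theta)=|\mathrm{TPR}_a^i(\theta)-\mathrm{TPR}_b^i(\theta)|$. The mode condition $m^{y,c}_b\le m^{y,c}_a$ is a (weak) stochastic-dominance-type assumption meaning group $a$'s qualified density sits to the right of group $b$'s; combined with unimodality, this pins down the sign of $\mathrm{TPR}_a^i(\theta)-\mathrm{TPR}_b^i(\theta)$ on the relevant range and makes the integrand $\mathcal{G}^{1,i}_a(x)-\mathcal{G}^{1,i}_b(x)$ have a controlled sign pattern (single crossing). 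First I would establish that, under these hypotheses, on the interval between $\theta^*_\alpha$ and $\theta^*_\beta$ the quantity $\mathrm{TPR}_a^i(\theta)-\mathrm{TPR}_b^i(\theta)$ is monotone in $\theta$ for each cluster, so the per-cluster $\texttt{EqOp}$ gaps behave predictably as we interpolate the threshold.

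Next I would write the \emph{overall} $\texttt{EqOp}$ of the clustered solution. Because $C_\alpha$ uses $\theta^*_\alpha$ and $C_\beta$ uses $\theta^*_\beta$, the aggregate true positive rate on group $g$ is the size-weighted combination $p\cdot\mathrm{TPR}_g^\alpha(\theta^*_\alpha)+(1-p)\cdot\mathrm{TPR}_g^\beta(\theta^*_\beta)$, and $\Delta_{\texttt{EqOp}}(\theta^*_C)$ is the absolute difference of these across $g=a,b$. For the FedAvg model, the key observation is that $\theta^*_G$ minimizes the $p$-weighted sum of cluster losses, hence $\theta^*_G$ lies between $\theta^*_\alpha$ and $\theta^*_\beta$ (a standard consequence of convex combination of loss minimizers for well-behaved, e.g. unimodal-density-induced, loss landscapes), and the aggregate TPR on group $g$ under FedAvg is $p\cdot\mathrm{TPR}_g^\alpha(\theta^*_G)+(1-p)\cdot\mathrm{TPR}_g^\beta(\theta^*_G)$ — the \emph{same} threshold applied to both clusters. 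Then
\[
\Delta_{\texttt{EqOp}}(\theta^*_C)-\Delta_{\texttt{EqOp}}(\theta^*_G)
\]
reduces, after cancellation and using the fundamental theorem of calculus, to a comparison between $p\int_{\theta^*_\alpha}^{\theta^*_G}\bigl(\mathcal{G}^{1,\alpha}_a-\mathcal{G}^{1,\alpha}_b\bigr)\,dx$ and $(1-p)\int_{\theta^*_G}^{\theta^*_\beta}\bigl(\mathcal{G}^{1,\beta}_a-\mathcal{G}^{1,\beta}_b\bigr)\,dx$; the threshold $\hat p$ in the statement is exactly the cluster size at which these two contributions balance, so for $p\ge\hat p$ the clustered solution's gap is no larger. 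I would make this precise by showing the map $p\mapsto\Delta_{\texttt{EqOp}}(\theta^*_C)-\Delta_{\texttt{EqOp}}(\theta^*_G)$ (noting $\theta^*_G$ itself depends on $p$) is monotone decreasing and crosses zero at $\hat p$, which is where the sign conditions on the integrands and the $\alpha^{1,C_\alpha}_g\ge\alpha^{0,C_\alpha}_g$ hypothesis (controlling the base-rate weighting inside the loss so the TPR terms dominate) get used.

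The main obstacle I anticipate is the implicit dependence of $\theta^*_G$ on $p$: the definition $\hat p=\min\{1,|\cdots|\}$ has $\hat p$ appearing inside the integration limit $\theta^*_G=\theta^*_G(\hat p)$, so this is a fixed-point characterization rather than a closed form, and one must argue such a $\hat p$ exists and that the inequality holds for all $p\ge\hat p$ (not merely at the fixed point). I would handle this by a continuity/monotonicity argument: as $p$ ranges over $[0,1]$, $\theta^*_G(p)$ moves monotonically from $\theta^*_\beta$ to $\theta^*_\alpha$, the left- and right-hand integrals vary continuously and with opposite monotonicity in $p$, so their ratio is monotone, guaranteeing a unique crossing point, which we take as $\hat p$ (truncated at $1$ if no interior crossing exists). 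The secondary technical nuisance is justifying the sign/monotonicity of $\mathrm{TPR}_a^i-\mathrm{TPR}_b^i$ from unimodality plus the mode ordering — this is a single-crossing lemma for the difference of two unimodal densities, which I would state and prove separately and then invoke, since it is the analytic engine behind the whole argument.
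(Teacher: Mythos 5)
Your proposal follows essentially the same route as the paper's proof: establish that $\theta^*_G$ lies between $\theta^*_\alpha$ and $\theta^*_\beta$ via convexity, use unimodality and the mode ordering (a single-crossing argument) together with $\alpha^{1,C_\alpha}_g\ge\alpha^{0,C_\alpha}_g$ to control the sign of the true-positive-rate gap as the threshold moves, and reduce $\Delta_{\texttt{EqOp}}(\theta^*_C)-\Delta_{\texttt{EqOp}}(\theta^*_G)$ to comparing $p\int_{\theta^*_\alpha}^{\theta^*_G}(\mathcal{G}^{1,\alpha}_a-\mathcal{G}^{1,\alpha}_b)\,dx$ with $(1-p)\int_{\theta^*_G}^{\theta^*_\beta}(\mathcal{G}^{1,\beta}_a-\mathcal{G}^{1,\beta}_b)\,dx$, which is exactly the paper's balance condition defining $\hat p$ (the paper organizes the sign analysis as two per-cluster lemmas and defines the overall gap as the $p$-weighted average of within-cluster gaps, which coincides with your aggregated-TPR formulation precisely because the single-crossing structure makes each within-cluster gap nonnegative). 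Your explicit handling of the fixed-point dependence of $\hat p$ on $\theta^*_G(p)$, via continuity and monotonicity of $\theta^*_G$ in $p$, is if anything more careful than the paper, which leaves that step implicit.
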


The proof is provided in Appendix~\ref{app_proof:prop_eqop}. Intuitively, clients in $C_\alpha$ benefit from their personalized model because increasing the decision threshold (i.e., shifting from $\theta^*_\alpha$ to $\theta^*_G$) reduces the true positive rate of the disadvantaged group $b$ faster than that of the advantaged group $a$, increasing the fairness gap in $C_\alpha$. For clients in $C_\beta$, the opposite effect occurs. However, under the given conditions, the fairness improvement in $C_\beta$ is insufficient to compensate for the fairness degradation in $C_\alpha$, resulting in a unfairer outcome when using $\theta^*_G$. 

\begin{prop}[Improved \texttt{SP} for $C_\alpha$ through clustering] ~\label{prop1}
Let $\mathcal{G}^{y,c}_g(x)$ be Gaussian distributions with equal variance for $y\in\{0,1\}, g\in\{a,b\}, c=C_\alpha$, with means $\mu^{y,c}_g$ satisfying $\mu^{0,c}_g \leq \mu^{1,c}_g, \forall g$. Suppose $\theta^{*}_{\alpha} < \theta^{*}_{\beta}$. If $\alpha^{1,c}_g \geq \alpha^{0,c}_g, \forall g$ and $\alpha^{0,c}_a\exp(\frac{(\bar{\theta} - \mu^{0,c}_a)^2}{-2\sigma^2}) (\bar{\theta} - \mu^{0,c}_a) - \alpha^{1,c}_b\exp(\frac{(\bar{\theta} - \mu^{1,c}_b)^2}{-2\sigma^2}) (\bar{\theta} - \mu^{1,c}_b) \geq \alpha^{0,c}_b\exp(\frac{(\bar{\theta} - \mu^{0,c}_b)^2}{-2\sigma^2}) (\bar{\theta} - \mu^{0,c}_b) - \alpha^{1,c}_a\exp(\frac{(\bar{\theta} - \mu^{1,c}_a)^2}{-2\sigma^2}) (\bar{\theta} - \mu^{1,c}_a)$, where $\bar{\theta}:=\tfrac{\mu^{1,c}_a + \mu^{0,c}_b+\mu^{1,c}_b + \mu^{0,c}_a}{4}$, then there exists a $\hat{p}$ such that for $p\geq \hat{p}$, $\Delta^\alpha_{\texttt{SP}}(\theta^*_\alpha)\leq\Delta^\alpha_{\texttt{SP}}(\theta^*_G)$.
\end{prop}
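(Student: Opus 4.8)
The plan is to reduce the statistical-parity gap to an explicit function of the decision threshold, and then show that this function is increasing (in absolute value) on the interval between $\theta^*_\alpha$ and $\theta^*_G$, which forces $\Delta^\alpha_{\texttt{SP}}(\theta^*_\alpha) \leq \Delta^\alpha_{\texttt{SP}}(\theta^*_G)$ as soon as $\theta^*_G$ is pushed far enough toward $\theta^*_\beta$ (i.e. for $p \geq \hat p$). Throughout I work inside cluster $C_\alpha$, so I drop the cluster superscript. First I would write the selection rate of group $g$ under threshold $\theta$ as a mixture over labels: $\Pr(\hat y = 1 \mid g) = \alpha^0_g\, \overline{\Phi}\!\big(\tfrac{\theta - \mu^0_g}{\sigma}\big) + \alpha^1_g\, \overline{\Phi}\!\big(\tfrac{\theta - \mu^1_g}{\sigma}\big)$, where $\overline{\Phi}$ is the Gaussian survival function and $\alpha^y_g$ is the label-$y$ fraction within group $g$ (which I would note sums appropriately). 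Then $\Delta_{\texttt{SP}}(\theta) = \big| S_a(\theta) - S_b(\theta) \big|$ with $S_g$ the group-$g$ selection rate. The key object is $D(\theta) := S_a(\theta) - S_b(\theta)$; I would show $D$ has constant sign near the relevant thresholds (using $\mu^y_b \leq \mu^y_a$, so group $a$ is shifted right and hence "advantaged" in the sense of having higher $\mu$'s), so that $\Delta_{\texttt{SP}} = D$ up to sign and it suffices to control $D$.

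Next I would differentiate: $D'(\theta) = -\tfrac{1}{\sigma}\big[ \alpha^0_a \phi(\tfrac{\theta - \mu^0_a}{\sigma}) + \alpha^1_a \phi(\tfrac{\theta - \mu^1_a}{\sigma}) - \alpha^0_b \phi(\tfrac{\theta - \mu^0_b}{\sigma}) - \alpha^1_b \phi(\tfrac{\theta - \mu^1_b}{\sigma}) \big]$, where $\phi$ is the standard normal density. The point $\bar\theta := \tfrac{\mu^1_a + \mu^0_b + \mu^1_b + \mu^0_a}{4}$ is chosen so that the Gaussian exponents can be compared cleanly there; evaluating $\sigma^2 \phi(\tfrac{\bar\theta - \mu}{\sigma})$-type terms and multiplying through by $\sigma\sqrt{2\pi}$, the sign of $D'(\bar\theta)$ is governed exactly by the displayed inequality in the hypothesis — that inequality is precisely the statement that the "group-$b$ density mass minus advantaged-label-$a$ mass" dominates the complementary combination, which makes $D$ monotone in the right direction at $\bar\theta$. (The extra factors $(\bar\theta - \mu^y_g)$ appearing in the hypothesis come from writing the comparison in a normalized, common-exponent form; I would make that bookkeeping explicit.) Using the equal-variance assumption, each pairwise density ratio $\phi(\tfrac{\theta-\mu}{\sigma})/\phi(\tfrac{\theta-\mu'}{\sigma})$ is monotone in $\theta$, so the sign of $D'$, once established at $\bar\theta$, persists on an interval containing $[\theta^*_\alpha, \theta^*_\beta]$; combined with $\alpha^1_g \geq \alpha^0_g$ (so the label-1 component, with the larger mean, carries more weight) this pins down that $|D|$ is nondecreasing as $\theta$ moves from $\theta^*_\alpha$ toward $\theta^*_\beta$.

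Finally I would invoke the structure of $\theta^*_G$: since $\theta^*_G = \arg\min_\theta\, p \sum_{j \in \mathcal C_\alpha}\mathcal L_j(\theta) + (1-p)\sum_{j \in \mathcal C_\beta}\mathcal L_j(\theta)$ is a $p$-weighted combination of the two clusters' loss minimizers, and the per-cluster losses are (under the Gaussian model) convex with minimizers $\theta^*_\alpha < \theta^*_\beta$, the global minimizer $\theta^*_G$ lies between $\theta^*_\alpha$ and $\theta^*_\beta$ and moves continuously from $\theta^*_\beta$ toward $\theta^*_\alpha$ as $p$ increases from $0$ to $1$. Hence for every target position $\theta^*_\alpha \le \theta^*_G \le \theta^*_\beta$ there is a threshold value $\hat p$ of the weight such that $p \geq \hat p$ keeps $\theta^*_G$ on the side where the monotonicity of $|D|$ gives $\Delta^\alpha_{\texttt{SP}}(\theta^*_\alpha) = |D(\theta^*_\alpha)| \leq |D(\theta^*_G)| = \Delta^\alpha_{\texttt{SP}}(\theta^*_G)$. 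I expect the main obstacle to be Step 2: verifying rigorously that the sign of $D'$ determined at the single point $\bar\theta$ actually propagates to the whole interval $[\theta^*_\alpha,\theta^*_\beta]$ — this needs the equal-variance monotone-likelihood-ratio argument done carefully for a four-term (two labels $\times$ two groups) mixture, and it is where the precise form of the hypothesis inequality, rather than a cruder sufficient condition, really earns its keep. A secondary subtlety is ruling out a sign change of $D$ itself on that interval (so that $\Delta_{\texttt{SP}} = |D|$ behaves monotonically and not like a "V"); I would handle this by showing $D$ does not vanish between $\theta^*_\alpha$ and $\theta^*_\beta$ using the same density-ordering facts.
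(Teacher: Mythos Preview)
Your proposal misreads the role of the hypothesis inequality. You claim it governs the sign of $D'(\bar\theta)$, and you dismiss the linear factors $(\bar\theta-\mu^y_g)$ as bookkeeping from ``a normalized, common-exponent form.'' They are not: those factors are exactly what appear when you differentiate a Gaussian density once, since $\tfrac{d}{d\theta}\exp\!\big(-\tfrac{(\theta-\mu)^2}{2\sigma^2}\big) = -\tfrac{\theta-\mu}{\sigma^2}\exp(\cdots)$. The hypothesis is therefore a condition on $D''(\bar\theta)$, not on $D'(\bar\theta)$. This distinction is the crux of the paper's argument. In the balanced baseline (equal $r_g$, equal $\alpha^y_g$, equalized mean gaps) one has $\theta^*_\alpha = \bar\theta$ exactly, and at that point $D'(\bar\theta) = 0$, because the paired terms $\psi_1 := f^1_b - f^0_a$ and $\psi_2 := f^1_a - f^0_b$ both vanish at $\bar\theta$. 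First-order information at $\bar\theta$ is thus vacuous; the hypothesis supplies the second-order information $\psi_1'(\bar\theta) \ge \psi_2'(\bar\theta)$, which makes $\bar\theta$ a local minimum of $\Delta_{\texttt{SP}}$, so any nearby $\theta^*_G$ yields a larger gap. The assumption $\alpha^1_g \ge \alpha^0_g$ is then used not to weight the label-$1$ density as you suggest, but to argue that in the unbalanced case $\theta^*_\alpha$ shifts \emph{left} of $\bar\theta$, so that moving rightward toward $\theta^*_G \in (\theta^*_\alpha,\theta^*_\beta)$ still climbs the $\Delta_{\texttt{SP}}$ curve.

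Your monotone-likelihood-ratio plan for propagating the sign of $D'$ from a single point is aimed at the wrong object (you would need to control $D''$, equivalently $\psi_1-\psi_2$, near $\bar\theta$), and an MLR argument for a four-term signed combination of equal-variance Gaussians does not yield a clean monotonicity statement in any case. The paper sidesteps this entirely: it uses the explicit $\psi_1,\psi_2$ pairing, the boundary facts $\Delta_{\texttt{SP}}(\pm\infty)=0$, and the two-peak shape of $\Delta_{\texttt{SP}}$ (peaks near $\tfrac{\mu^0_a+\mu^0_b}{2}$ and $\tfrac{\mu^1_a+\mu^1_b}{2}$) rather than any likelihood-ratio reasoning. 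Your final step---placing $\theta^*_G$ between $\theta^*_\alpha$ and $\theta^*_\beta$ via convexity of the weighted loss and continuity in $p$---is correct and matches the paper's lemma on relative threshold locations.
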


\begin{wrapfigure}[9]{r}{0.45\textwidth}\vspace{-0.3in}
	\centering
	\subfigure[\texttt{SP}]{
 \includegraphics[width=0.2\textwidth]{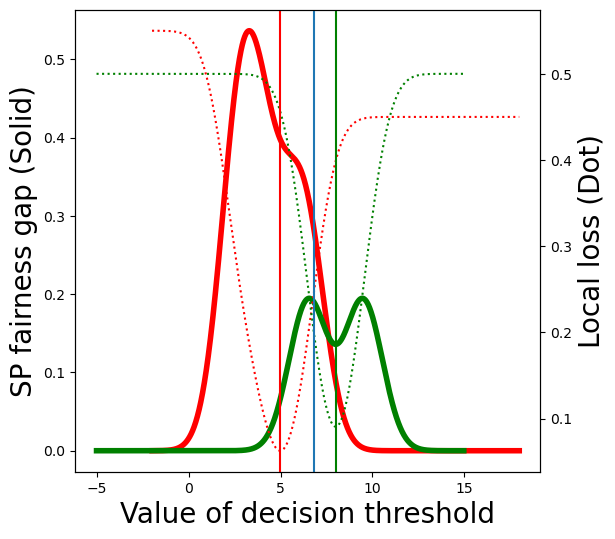}
 \label{Fig:SP_fairness_plot}
	}
 \subfigure[\texttt{EqOp}]{
 \includegraphics[width=0.2\textwidth]{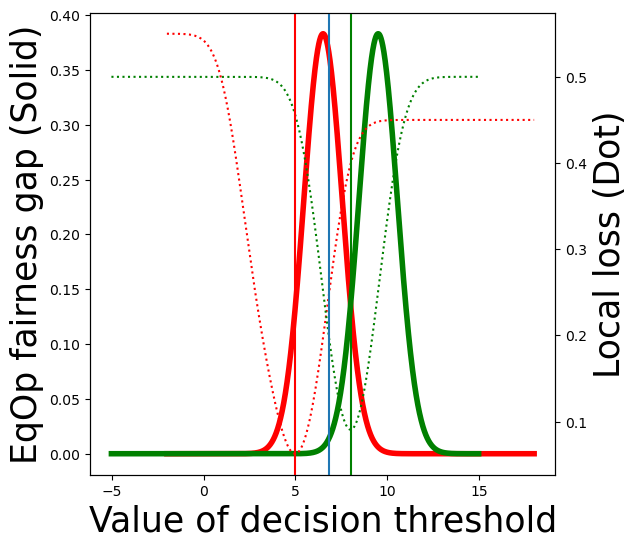}
 \label{Fig:EqOp_fairness_plot}
	}
        \vspace{-0.15in}
	\caption{Fairness gap vs $\theta$.}
	\label{fig:SP-is-hard-in-paper}
\end{wrapfigure}
The proof is provided in Appendix~\ref{app_proof:prop_sp}, and we can reach similar conditions for the cluster $C_\beta$. Intuitively, when there are more label 1 samples in both groups, the global model $\theta^*_G$ will pull the $C_\alpha$ cluster model $\theta^*_\alpha$ up to account for the label imbalance, resulting in a deterioration in both fairness and accuracy for clients in this cluster. Note that Proposition~\ref{prop1} considers the \texttt{SP} fairness, which is impacted by both the group $a$ vs. $b$ feature distributions \emph{as well as} the label rates, rendering it more stringent than \texttt{EqOp} fairness of Proposition~\ref{prop3}. Figure~\ref{fig:SP-is-hard-in-paper} illustrates this by plotting the fairness gap vs. the decision threshold $\theta$ for \texttt{SP} vs. \texttt{EqOp}, showing that \texttt{SP} exhibits less structured changes as the decision threshold moves (e.g., due to the use of a global model). 
\vspace{-0.1in}
\section{Conclusion}\label{sec:conclusion}
\vspace{-0.1in}

%Inspired by the personalization technique in FL, 
We proposed new fairness-aware federated clustering algorithms, \texttt{Fair-FCA} and \texttt{Fair-FL+HC}, which take both fairness and accuracy into account when clustering clients. Our methods effectively span the space between personalized FL and locally fair FL. We find that they can lead to improved local fairness (matching or exceeding existing locally fair FL methods, without any explicit fairness interventions) while allowing for a tunable trade-off between accuracy and fairness at the client level. We have also both numerically and analytically shown that there can be (unintended) fairness benefits to personalization in FL, which our clustering-based fair FL algorithms are exploiting whenever possible. Identifying methods to integrate fairness considerations into other (non-clustering based) personalized FL algorithms, and extending our analytical findings (both for clustered FL algorithms, and to other classes of personalized FL methods), are main directions of future work.

\clearpage

\ack{This work is supported by Cisco Research, and by the National Science Foundation (NSF) program on Fairness in AI in collaboration with Amazon under Award No. IIS-2040800. Any opinions, findings, and conclusions or recommendations expressed in this material are those of the authors and do not necessarily reflect the views of the NSF, Amazon, or Cisco.}
\bibliographystyle{unsrtnat}
\bibliography{reference}

\begin{thebibliography}{67}
\providecommand{\natexlab}[1]{#1}
\providecommand{\url}[1]{\texttt{#1}}
\expandafter\ifx\csname urlstyle\endcsname\relax
  \providecommand{\doi}[1]{doi: #1}\else
  \providecommand{\doi}{doi: \begingroup \urlstyle{rm}\Url}\fi

\bibitem[Kairouz et~al.(2021)Kairouz, McMahan, Avent, Bellet, Bennis, Bhagoji, Bonawitz, Charles, Cormode, Cummings, et~al.]{kairouz2021advances}
Peter Kairouz, H~Brendan McMahan, Brendan Avent, Aur{\'e}lien Bellet, Mehdi Bennis, Arjun~Nitin Bhagoji, Kallista Bonawitz, Zachary Charles, Graham Cormode, Rachel Cummings, et~al.
\newblock Advances and open problems in federated learning.
\newblock \emph{Foundations and Trends{\textregistered} in Machine Learning}, 14\penalty0 (1--2):\penalty0 1--210, 2021.

\bibitem[Li et~al.(2019{\natexlab{a}})Li, Huang, Yang, Wang, and Zhang]{li2019convergence}
Xiang Li, Kaixuan Huang, Wenhao Yang, Shusen Wang, and Zhihua Zhang.
\newblock On the convergence of fedavg on non-iid data.
\newblock \emph{arXiv preprint arXiv:1907.02189}, 2019{\natexlab{a}}.

\bibitem[Zhao et~al.(2018)Zhao, Li, Lai, Suda, Civin, and Chandra]{zhao2018federated}
Yue Zhao, Meng Li, Liangzhen Lai, Naveen Suda, Damon Civin, and Vikas Chandra.
\newblock Federated learning with non-iid data.
\newblock \emph{arXiv preprint arXiv:1806.00582}, 2018.

\bibitem[Tan et~al.(2022)Tan, Yu, Cui, and Yang]{tan2022towards}
Alysa~Ziying Tan, Han Yu, Lizhen Cui, and Qiang Yang.
\newblock Towards personalized federated learning.
\newblock \emph{IEEE Transactions on Neural Networks and Learning Systems}, 2022.

\bibitem[Karimireddy et~al.(2020)Karimireddy, Kale, Mohri, Reddi, Stich, and Suresh]{karimireddy2020scaffold}
Sai~Praneeth Karimireddy, Satyen Kale, Mehryar Mohri, Sashank Reddi, Sebastian Stich, and Ananda~Theertha Suresh.
\newblock Scaffold: Stochastic controlled averaging for federated learning.
\newblock In \emph{International conference on machine learning}, pages 5132--5143. PMLR, 2020.

\bibitem[Li et~al.(2020)Li, Sahu, Zaheer, Sanjabi, Talwalkar, and Smith]{li2020federated}
Tian Li, Anit~Kumar Sahu, Manzil Zaheer, Maziar Sanjabi, Ameet Talwalkar, and Virginia Smith.
\newblock Federated optimization in heterogeneous networks.
\newblock \emph{Proceedings of Machine learning and systems}, 2:\penalty0 429--450, 2020.

\bibitem[Ghosh et~al.(2020)Ghosh, Chung, Yin, and Ramchandran]{ghosh2020efficient}
Avishek Ghosh, Jichan Chung, Dong Yin, and Kannan Ramchandran.
\newblock An efficient framework for clustered federated learning.
\newblock \emph{Advances in Neural Information Processing Systems}, 33:\penalty0 19586--19597, 2020.

\bibitem[Briggs et~al.(2020)Briggs, Fan, and Andras]{briggs2020federated}
Christopher Briggs, Zhong Fan, and Peter Andras.
\newblock Federated learning with hierarchical clustering of local updates to improve training on non-iid data.
\newblock In \emph{2020 international joint conference on neural networks (IJCNN)}, pages 1--9. IEEE, 2020.

\bibitem[Sattler et~al.(2020)Sattler, M{\"u}ller, and Samek]{sattler2020clustered}
Felix Sattler, Klaus-Robert M{\"u}ller, and Wojciech Samek.
\newblock Clustered federated learning: Model-agnostic distributed multitask optimization under privacy constraints.
\newblock \emph{IEEE transactions on neural networks and learning systems}, 32\penalty0 (8):\penalty0 3710--3722, 2020.

\bibitem[Fallah et~al.(2020)Fallah, Mokhtari, and Ozdaglar]{fallah2020personalized}
Alireza Fallah, Aryan Mokhtari, and Asuman Ozdaglar.
\newblock Personalized federated learning: A meta-learning approach.
\newblock \emph{arXiv preprint arXiv:2002.07948}, 2020.

\bibitem[Mansour et~al.(2020)Mansour, Mohri, Ro, and Suresh]{mansour2020three}
Yishay Mansour, Mehryar Mohri, Jae Ro, and Ananda~Theertha Suresh.
\newblock Three approaches for personalization with applications to federated learning.
\newblock \emph{arXiv preprint arXiv:2002.10619}, 2020.

\bibitem[Barocas et~al.(2019)Barocas, Hardt, and Narayanan]{barocas-hardt-narayanan}
Solon Barocas, Moritz Hardt, and Arvind Narayanan.
\newblock \emph{Fairness and Machine Learning: Limitations and Opportunities}.
\newblock fairmlbook.org, 2019.
\newblock \url{http://www.fairmlbook.org}.

\bibitem[Shi et~al.(2023)Shi, Yu, and Leung]{shi2023towards}
Yuxin Shi, Han Yu, and Cyril Leung.
\newblock Towards fairness-aware federated learning.
\newblock \emph{IEEE Transactions on Neural Networks and Learning Systems}, 2023.

\bibitem[Salazar et~al.(2024)Salazar, Ara{\'u}jo, Cano, and Abreu]{salazar2024survey}
Teresa Salazar, Helder Ara{\'u}jo, Alberto Cano, and Pedro~Henriques Abreu.
\newblock A survey on group fairness in federated learning: Challenges, taxonomy of solutions and directions for future research.
\newblock \emph{arXiv preprint arXiv:2410.03855}, 2024.

\bibitem[Swift(2002)]{swift2002guidance}
Elaine~K Swift.
\newblock Guidance for the national healthcare disparities report.
\newblock 2002.

\bibitem[Hamman and Dutta(2023)]{hamman2023demystifying}
Faisal Hamman and Sanghamitra Dutta.
\newblock Demystifying local and global fairness trade-offs in federated learning using information theory.
\newblock In \emph{Federated Learning and Analytics in Practice: Algorithms, Systems, Applications, and Opportunities}, 2023.

\bibitem[Meerza et~al.(2024)Meerza, Liu, Zhang, and Liu]{meerza2024glocalfair}
Syed Irfan~Ali Meerza, Luyang Liu, Jiaxin Zhang, and Jian Liu.
\newblock Glocalfair: Jointly improving global and local group fairness in federated learning.
\newblock \emph{arXiv preprint arXiv:2401.03562}, 2024.

\bibitem[Zhang et~al.(2025)Zhang, Li, Wu, Zhao, and Palaiahnakote]{zhang2025sffl}
Jiale Zhang, Ye~Li, Di~Wu, Yanchao Zhao, and Shivakumara Palaiahnakote.
\newblock Sffl: Self-aware fairness federated learning framework for heterogeneous data distributions.
\newblock \emph{Expert Systems with Applications}, 269:\penalty0 126418, 2025.

\bibitem[Makhija et~al.(2024)Makhija, Han, Ghosh, and Kim]{makhija2024achieving}
Disha Makhija, Xing Han, Joydeep Ghosh, and Yejin Kim.
\newblock Achieving fairness across local and global models in federated learning.
\newblock \emph{arXiv preprint arXiv:2406.17102}, 2024.

\bibitem[Zhou and Goel(2025)]{zhou2025post}
Yi~Zhou and Naman Goel.
\newblock A post-processing-based fair federated learning framework.
\newblock \emph{arXiv preprint arXiv:2501.15318}, 2025.

\bibitem[Dua and Graff(2017)]{Dua:2019}
Dheeru Dua and Casey Graff.
\newblock {UCI} machine learning repository, 2017.
\newblock URL \url{http://archive.ics.uci.edu/ml}.

\bibitem[Ding et~al.(2021)Ding, Hardt, Miller, and Schmidt]{ding2021retiring}
Frances Ding, Moritz Hardt, John Miller, and Ludwig Schmidt.
\newblock Retiring adult: New datasets for fair machine learning.
\newblock \emph{Advances in neural information processing systems}, 34:\penalty0 6478--6490, 2021.

\bibitem[Li et~al.(2019{\natexlab{b}})Li, Sanjabi, Beirami, and Smith]{li2019fair}
Tian Li, Maziar Sanjabi, Ahmad Beirami, and Virginia Smith.
\newblock Fair resource allocation in federated learning.
\newblock \emph{arXiv preprint arXiv:1905.10497}, 2019{\natexlab{b}}.

\bibitem[Li et~al.(2021{\natexlab{a}})Li, Hu, Beirami, and Smith]{li2021ditto}
Tian Li, Shengyuan Hu, Ahmad Beirami, and Virginia Smith.
\newblock Ditto: Fair and robust federated learning through personalization.
\newblock In \emph{International Conference on Machine Learning}, pages 6357--6368. PMLR, 2021{\natexlab{a}}.

\bibitem[Zhang et~al.(2021)Zhang, Kuang, Liu, Chen, Wu, Wu, Lu, Shao, and Xiao]{zhang2021unified}
Fengda Zhang, Kun Kuang, Yuxuan Liu, Long Chen, Chao Wu, Fei Wu, Jiaxun Lu, Yunfeng Shao, and Jun Xiao.
\newblock Unified group fairness on federated learning.
\newblock \emph{arXiv preprint arXiv:2111.04986}, 2021.

\bibitem[Wang et~al.(2021)Wang, Fan, Qi, Wen, Wang, and Yu]{wang2021federated}
Zheng Wang, Xiaoliang Fan, Jianzhong Qi, Chenglu Wen, Cheng Wang, and Rongshan Yu.
\newblock Federated learning with fair averaging.
\newblock \emph{arXiv preprint arXiv:2104.14937}, 2021.

\bibitem[Abay et~al.(2020)Abay, Zhou, Baracaldo, Rajamoni, Chuba, and Ludwig]{abay2020mitigating}
Annie Abay, Yi~Zhou, Nathalie Baracaldo, Shashank Rajamoni, Ebube Chuba, and Heiko Ludwig.
\newblock Mitigating bias in federated learning.
\newblock \emph{arXiv preprint arXiv:2012.02447}, 2020.

\bibitem[G{\'a}lvez et~al.(2021)G{\'a}lvez, Granqvist, van Dalen, and Seigel]{galvez2021enforcing}
Borja~Rodr{\'\i}guez G{\'a}lvez, Filip Granqvist, Rogier van Dalen, and Matt Seigel.
\newblock Enforcing fairness in private federated learning via the modified method of differential multipliers.
\newblock In \emph{NeurIPS 2021 Workshop Privacy in Machine Learning}, 2021.

\bibitem[Wang et~al.(2023)Wang, Payani, Lee, and Kompella]{wang2023mitigating}
Ganghua Wang, Ali Payani, Myungjin Lee, and Ramana Kompella.
\newblock Mitigating group bias in federated learning: Beyond local fairness.
\newblock \emph{arXiv preprint arXiv:2305.09931}, 2023.

\bibitem[Zeng et~al.(2021)Zeng, Chen, and Lee]{zeng2021improving}
Yuchen Zeng, Hongxu Chen, and Kangwook Lee.
\newblock Improving fairness via federated learning.
\newblock \emph{arXiv preprint arXiv:2110.15545}, 2021.

\bibitem[Ezzeldin et~al.(2023)Ezzeldin, Yan, He, Ferrara, and Avestimehr]{ezzeldin2023fairfed}
Yahya~H Ezzeldin, Shen Yan, Chaoyang He, Emilio Ferrara, and A~Salman Avestimehr.
\newblock Fairfed: Enabling group fairness in federated learning.
\newblock In \emph{Proceedings of the AAAI conference on artificial intelligence}, volume~37, pages 7494--7502, 2023.

\bibitem[Liu et~al.(2025)Liu, Sarkani, and Mazzuchi]{liu2025fairness}
Ziyan Liu, Shahram Sarkani, and Thomas Mazzuchi.
\newblock Fairness-optimized dynamic aggregation (foda): A novel approach to equitable federated learning in heterogeneous environments.
\newblock \emph{Available at SSRN 5145003}, 2025.

\bibitem[Wang et~al.(2024)Wang, Zhang, Cai, Gong, Choo, and Guo]{wang2024analyzing}
Tongnian Wang, Kai Zhang, Jiannan Cai, Yanmin Gong, Kim-Kwang~Raymond Choo, and Yuanxiong Guo.
\newblock Analyzing the impact of personalization on fairness in federated learning for healthcare.
\newblock \emph{Journal of Healthcare Informatics Research}, 8\penalty0 (2):\penalty0 181--205, 2024.

\bibitem[Nafea et~al.(2022)Nafea, Shin, and Yener]{nafea2022proportional}
Mohamed Nafea, Eugine Shin, and Aylin Yener.
\newblock Proportional fair clustered federated learning.
\newblock In \emph{2022 IEEE International Symposium on Information Theory (ISIT)}. IEEE, 2022.

\bibitem[Kyllo and Mashhadi(2023)]{kyllo2023inflorescence}
Alex Kyllo and Afra Mashhadi.
\newblock Inflorescence: A framework for evaluating fairness with clustered federated learning.
\newblock In \emph{Adjunct Proceedings of the 2023 ACM International Joint Conference on Pervasive and Ubiquitous Computing \& the 2023 ACM International Symposium on Wearable Computing}, pages 374--380, 2023.

\bibitem[Li et~al.(2021{\natexlab{b}})Li, Li, and Varshney]{li2021federated}
Chengxi Li, Gang Li, and Pramod~K Varshney.
\newblock Federated learning with soft clustering.
\newblock \emph{IEEE Internet of Things Journal}, 9\penalty0 (10):\penalty0 7773--7782, 2021{\natexlab{b}}.

\bibitem[Chung et~al.(2022)Chung, Lee, and Ramchandran]{chung2022federated}
Jichan Chung, Kangwook Lee, and Kannan Ramchandran.
\newblock Federated unsupervised clustering with generative models.
\newblock In \emph{AAAI 2022 international workshop on trustable, verifiable and auditable federated learning}, volume~4, 2022.

\bibitem[Huang et~al.(2023)Huang, Shi, Feng, Niu, Cheng, Huang, and Liu]{huang2023active}
Honglan Huang, Wei Shi, Yanghe Feng, Chaoyue Niu, Guangquan Cheng, Jincai Huang, and Zhong Liu.
\newblock Active client selection for clustered federated learning.
\newblock \emph{IEEE Transactions on Neural Networks and Learning Systems}, 2023.

\bibitem[Ma et~al.(2024)Ma, Zhou, Long, Jiang, and Zhang]{ma2024structured}
Jie Ma, Tianyi Zhou, Guodong Long, Jing Jiang, and Chengqi Zhang.
\newblock Structured federated learning through clustered additive modeling.
\newblock \emph{Advances in Neural Information Processing Systems}, 36, 2024.

\bibitem[Jothimurugesan et~al.(2023)Jothimurugesan, Hsieh, Wang, Joshi, and Gibbons]{jothimurugesan2023federated}
Ellango Jothimurugesan, Kevin Hsieh, Jianyu Wang, Gauri Joshi, and Phillip~B Gibbons.
\newblock Federated learning under distributed concept drift.
\newblock In \emph{International Conference on Artificial Intelligence and Statistics}, pages 5834--5853. PMLR, 2023.

\bibitem[Luo et~al.(2023)Luo, Fu, Luo, Liu, Deng, and Wang]{luo2023privacy}
Songwei Luo, Shaojing Fu, Yuchuan Luo, Lin Liu, Yanxiang Deng, and Shixiong Wang.
\newblock Privacy-preserving federated learning with hierarchical clustering to improve training on non-iid data.
\newblock In \emph{International Conference on Network and System Security}, pages 195--216. Springer, 2023.

\bibitem[Li et~al.(2023)Li, Wang, and An]{li2023hierarchical}
Youpeng Li, Xuyu Wang, and Lingling An.
\newblock Hierarchical clustering-based personalized federated learning for robust and fair human activity recognition.
\newblock \emph{Proceedings of the ACM on Interactive, Mobile, Wearable and Ubiquitous Technologies}, 7\penalty0 (1):\penalty0 1--38, 2023.

\bibitem[Sun et~al.(2024)Sun, Kountouris, and Zhang]{sun2024collaborate}
Yuchang Sun, Marios Kountouris, and Jun Zhang.
\newblock How to collaborate: Towards maximizing the generalization performance in cross-silo federated learning.
\newblock \emph{arXiv preprint arXiv:2401.13236}, 2024.

\bibitem[Dwork et~al.(2012)Dwork, Hardt, Pitassi, Reingold, and Zemel]{dwork2012fairness}
Cynthia Dwork, Moritz Hardt, Toniann Pitassi, Omer Reingold, and Richard Zemel.
\newblock Fairness through awareness.
\newblock In \emph{Proceedings of the 3rd innovations in theoretical computer science conference}, pages 214--226, 2012.

\bibitem[Hardt et~al.(2016)Hardt, Price, and Srebro]{hardt2016equality}
Moritz Hardt, Eric Price, and Nati Srebro.
\newblock Equality of opportunity in supervised learning.
\newblock \emph{Advances in neural information processing systems}, 29, 2016.

\bibitem[McMahan et~al.(2017)McMahan, Moore, Ramage, Hampson, and y~Arcas]{mcmahan2017communication}
Brendan McMahan, Eider Moore, Daniel Ramage, Seth Hampson, and Blaise~Aguera y~Arcas.
\newblock Communication-efficient learning of deep networks from decentralized data.
\newblock In \emph{Artificial intelligence and statistics}, pages 1273--1282. PMLR, 2017.

\bibitem[Nardi et~al.(2022)Nardi, Valerio, and Passarella]{nardi2022anomaly}
Mirko Nardi, Lorenzo Valerio, and Andrea Passarella.
\newblock Anomaly detection through unsupervised federated learning.
\newblock \emph{arXiv preprint arXiv:2209.04184}, 2022.

\bibitem[Zheng et~al.(2022)Zheng, Naghizadeh, and Yener]{zheng2022diple}
Xue Zheng, Parinaz Naghizadeh, and Aylin Yener.
\newblock Diple: Learning directed collaboration graphs for peer-to-peer personalized learning.
\newblock In \emph{2022 IEEE Information Theory Workshop (ITW)}, pages 446--451. IEEE, 2022.

\bibitem[Arivazhagan et~al.(2019)Arivazhagan, Aggarwal, Singh, and Choudhary]{arivazhagan2019federated}
Manoj~Ghuhan Arivazhagan, Vinay Aggarwal, Aaditya~Kumar Singh, and Sunav Choudhary.
\newblock Federated learning with personalization layers.
\newblock \emph{arXiv preprint arXiv:1912.00818}, 2019.

\bibitem[Jiang and Lin(2022)]{jiang2022test}
Liangze Jiang and Tao Lin.
\newblock Test-time robust personalization for federated learning.
\newblock \emph{arXiv preprint arXiv:2205.10920}, 2022.

\bibitem[Hanzely and Richt{\'a}rik(2020)]{hanzely2020federated}
Filip Hanzely and Peter Richt{\'a}rik.
\newblock Federated learning of a mixture of global and local models.
\newblock \emph{arXiv preprint arXiv:2002.05516}, 2020.

\bibitem[Sahu et~al.(2018)Sahu, Li, Sanjabi, Zaheer, Talwalkar, and Smith]{sahu2018convergence}
Anit~Kumar Sahu, Tian Li, Maziar Sanjabi, Manzil Zaheer, Ameet Talwalkar, and Virginia Smith.
\newblock On the convergence of federated optimization in heterogeneous networks.
\newblock \emph{arXiv preprint arXiv:1812.06127}, 3:\penalty0 3, 2018.

\bibitem[T~Dinh et~al.(2020)T~Dinh, Tran, and Nguyen]{t2020personalized}
Canh T~Dinh, Nguyen Tran, and Josh Nguyen.
\newblock Personalized federated learning with moreau envelopes.
\newblock \emph{Advances in Neural Information Processing Systems}, 33:\penalty0 21394--21405, 2020.

\bibitem[Huang et~al.(2021)Huang, Chu, Zhou, Wang, Liu, Pei, and Zhang]{huang2021personalized}
Yutao Huang, Lingyang Chu, Zirui Zhou, Lanjun Wang, Jiangchuan Liu, Jian Pei, and Yong Zhang.
\newblock Personalized cross-silo federated learning on non-iid data.
\newblock In \emph{AAAI}, pages 7865--7873, 2021.

\bibitem[Deng et~al.(2020)Deng, Kamani, and Mahdavi]{deng2020adaptive}
Yuyang Deng, Mohammad~Mahdi Kamani, and Mehrdad Mahdavi.
\newblock Adaptive personalized federated learning.
\newblock \emph{arXiv preprint arXiv:2003.13461}, 2020.

\bibitem[Zec et~al.(2020)Zec, Martinsson, Mogren, S{\"u}tfeld, and Gillblad]{zec2020federated}
Edvin~Listo Zec, John Martinsson, Olof Mogren, Leon~Ren{\'e} S{\"u}tfeld, and Daniel Gillblad.
\newblock Federated learning using mixture of experts.
\newblock 2020.

\bibitem[Peterson et~al.(2019)Peterson, Kanani, and Marathe]{peterson2019private}
Daniel Peterson, Pallika Kanani, and Virendra~J Marathe.
\newblock Private federated learning with domain adaptation.
\newblock \emph{arXiv preprint arXiv:1912.06733}, 2019.

\bibitem[Rafi et~al.(2024)Rafi, Noor, Hussain, and Chae]{rafi2024fairness}
Taki~Hasan Rafi, Faiza~Anan Noor, Tahmid Hussain, and Dong-Kyu Chae.
\newblock Fairness and privacy preserving in federated learning: A survey.
\newblock \emph{Information Fusion}, 105:\penalty0 102198, 2024.

\bibitem[Mohri et~al.(2019)Mohri, Sivek, and Suresh]{mohri2019agnostic}
Mehryar Mohri, Gary Sivek, and Ananda~Theertha Suresh.
\newblock Agnostic federated learning.
\newblock In \emph{International Conference on Machine Learning}, pages 4615--4625. PMLR, 2019.

\bibitem[Cui et~al.(2021)Cui, Pan, Liang, Zhang, and Wang]{cui2021addressing}
Sen Cui, Weishen Pan, Jian Liang, Changshui Zhang, and Fei Wang.
\newblock Addressing algorithmic disparity and performance inconsistency in federated learning.
\newblock \emph{Advances in Neural Information Processing Systems}, 34:\penalty0 26091--26102, 2021.

\bibitem[Papadaki et~al.(2021)Papadaki, Martinez, Bertran, Sapiro, and Rodrigues]{papadaki2021federating}
Afroditi Papadaki, Natalia Martinez, Martin Bertran, Guillermo Sapiro, and Miguel Rodrigues.
\newblock Federating for learning group fair models.
\newblock \emph{arXiv preprint arXiv:2110.01999}, 2021.

\bibitem[Huang et~al.(2020)Huang, Li, Wang, Du, and Zhang]{huang2020fairness}
Wei Huang, Tianrui Li, Dexian Wang, Shengdong Du, and Junbo Zhang.
\newblock Fairness and accuracy in federated learning.
\newblock \emph{arXiv preprint arXiv:2012.10069}, 2020.

\bibitem[Chu et~al.(2021)Chu, Wang, Dong, Pei, Zhou, and Zhang]{chu2021fedfair}
Lingyang Chu, Lanjun Wang, Yanjie Dong, Jian Pei, Zirui Zhou, and Yong Zhang.
\newblock Fedfair: Training fair models in cross-silo federated learning.
\newblock \emph{arXiv preprint arXiv:2109.05662}, 2021.

\bibitem[Zhang et~al.(2022)Zhang, Malekmohammadi, Chen, and Yu]{zhang2022proportional}
Guojun Zhang, Saber Malekmohammadi, Xi~Chen, and Yaoliang Yu.
\newblock Proportional fairness in federated learning.
\newblock \emph{arXiv preprint arXiv:2202.01666}, 2022.

\bibitem[Lyu et~al.(2020)Lyu, Xu, Wang, and Yu]{lyu2020collaborative}
Lingjuan Lyu, Xinyi Xu, Qian Wang, and Han Yu.
\newblock Collaborative fairness in federated learning.
\newblock \emph{Federated Learning: Privacy and Incentive}, pages 189--204, 2020.

\bibitem[Corbett-Davies et~al.(2017)Corbett-Davies, Pierson, Feller, Goel, and Huq]{corbett2017algorithmic}
Sam Corbett-Davies, Emma Pierson, Avi Feller, Sharad Goel, and Aziz Huq.
\newblock Algorithmic decision making and the cost of fairness.
\newblock In \emph{Proceedings of the 23rd acm sigkdd international conference on knowledge discovery and data mining}, pages 797--806, 2017.

\bibitem[Raab and Liu(2021)]{raab2021unintended}
Reilly Raab and Yang Liu.
\newblock Unintended selection: Persistent qualification rate disparities and interventions.
\newblock \emph{Advances in Neural Information Processing Systems}, 34:\penalty0 26053--26065, 2021.

\end{thebibliography}

%\newpage
%\input{checklist}

\clearpage
\appendix
\section{Limitations and extensions}\label{app:limit}
\textbf{Personalization techniques and theoretical support.} Our numerical and analytical results primarily focus on the \texttt{IFCA} framework. Although we also evaluate the performance of the \texttt{MAML-FL} algorithm in our experiments, additional experimental results using other types of local fine-tuning methods could be valuable to explore. Additionally, extending our analytical findings (both for clustered FL algorithms, and to other classes of personalized FL methods), and identifying methods to integrate fairness considerations into other (non-clustering based) personalized FL algorithms, are main directions of future work. 

\textbf{Combining our algorithm with existing FairFL methods.} Our proposed algorithms focus on modifying the cluster selection process by incorporating fairness considerations. Once clusters are formed, standard weighted aggregation is used to compute the cluster models. These algorithms can be further combined with existing in-processing FairFL methods. For example, fairness constraints can be introduced into the local optimization problems, and fairness-aware aggregation techniques can be applied when constructing the cluster models. Exploring such combinations represents a promising direction for future work.

\textbf{FL setting.} Our theoretical support for the idea that personalization can improve fairness assumes identical clients within two clusters. Extending this framework to include multiple clusters and account for client heterogeneity within each cluster is a straightforward extension. Additionally, allowing for client drop-out, a scenario not considered in the current FL setting, presents another avenue for exploration.

\section{Additional related works}\label{app:related}
Our work is at the intersection of two literatures: personalized FL, and achieving fairness in FL.

\textbf{Personalized FL:} Existing literature can be categorized based on how personalization is achieved. We investigate personalization through \emph{clustering},  \emph{local fine-tuning}, \emph{model regularization}, \emph{model interpolation} and \emph{data interpolation}.\\
\textbf{\textit{Clustering:}} \citet{mansour2020three} use a hypothesis-based clustering approach by minimizing the sum of loss over all clusters. \citet{sattler2020clustered} use the idea that cosine similarity between weight updates of different clients is highly indicative of the similarity of data distribution. \citet{nardi2022anomaly} use a decentralized learning idea by exchanging local models with other clients to find the neighbor/group which has a high accuracy even using other clients' models. \citet{zheng2022diple} learn a weighted and directed graph that indicates the relevance between clients. \citet{ghosh2020efficient} use a distributed learning idea by broadcasting all clients' models to others, and collecting back the cluster identity from clients who can identify good performance when using others' models.\\
\textbf{\textit{Local fine-tuning:}} \citet{fallah2020personalized} propose using a Model Agnostic Meta Learning (MAML) framework, where clients run additional local gradient steps to personalize the global model. \citet{arivazhagan2019federated,jiang2022test} propose using deep learning models with a combination of feature extraction layers (base) and global/local head (personalization). \citet{jiang2022test}, inspired by \citet{arivazhagan2019federated}, further consider robustifying against distribution shifts.\\
\textbf{\textit{Model regularization:}} \citet{hanzely2020federated,sahu2018convergence,li2020federated, li2021ditto} add a regularization term with mixing parameters to penalize the distance between the local and global models. In particular, \citet{sahu2018convergence} has a pre-set regularization parameter and allows for system heterogeneity. \citet{li2021ditto} consider improving accuracy, while being robust to data and model poisoning attacks,  and fair. Similarly, \citet{t2020personalized} formulate a bi-level optimization problem, which helps decouple personalized model optimization from learning the global model. \citet{huang2021personalized} propose the FedAMP algorithm which also introduces an additional regularization term, but differs from previous works in that they encourage similar clients to collaborate more.\\
\textbf{\textit{Model interpolation:}} \citet{hanzely2020federated} also study a mixed model (local and global model) with a tuning parameter. In their model, as the mixing parameter decreases, it relaxes the local model to be similar to the global model, which can be more personalized. \citet{mansour2020three} propose an idea to combine the global and local model with weight $\alpha$, and \citet{deng2020adaptive} adaptively find the optimal $\alpha^*$ as a trade-off at each round for the best performance. \citet{zec2020federated, peterson2019private} both consider using a gating model as a mixing parameter between local and global models. However, \citet{peterson2019private} consider a linear gating model and differentially private FL under domain adaptation, while \citet{zec2020federated} split data into two parts used for local and global learning, and they further consider a dropout scenario and the same gating model structure as local and global models. \\
\textbf{\textit{Data interpolation:}} As also suggested in \citet{mansour2020three}, in addition to the model interpolation, it is possible to combine the local and global data and train a model on their combination. \citet{zhao2018federated} create a subset of data that is globally shared across all clients. However, this method is facing the risk of information leaking. \\

\textbf{FL with Fairness.} This literature, surveyed recently in \citet{shi2023towards,rafi2024fairness}, can also be categorized depending on the adopted notion of fairness. In addition to works considering social (group) fairness in FL reviewed in the main paper, we review other types of FL fairness in detail below.\\
\textbf{\textit{Global fairness:}} 
Our work is related to improving global group fairness in FL. One approach to achieving this is through regularization techniques. \citet{abay2020mitigating} propose both pre-processing (reweighting with differential privacy) and in-processing (adding fairness-aware regularizer) methods to mitigate biases while protecting data privacy. \citet{galvez2021enforcing} extend the modified method of differential multipliers to empirical risk minimization, enforcing group fairness through fairness constraints. \citet{wang2023mitigating} show that global fairness can be expressed in terms of proper group-based local fairness and data heterogeneity levels, proposing FedGFT to improve global fairness by directly optimizing a regularized objective function. Another approach to improve global fairness involves adjusting aggregation weights. \citet{zeng2021improving} refine weights through a bi-level optimization approach. \citet{ezzeldin2023fairfed} propose fairness-aware aggregation by adjusting weights to prioritize clients with lower local fairness. Similarly, \citet{liu2025fairness} incorporate fairness considerations into aggregation by reweighting based on distributional differences from the global distribution. Unlike these works, which primarily focus on achieving global group fairness in FL, our study emphasizes local group fairness, as the learned models are ultimately deployed at the client level.\\
\textbf{\textit{Performance fairness:}} This line of work measures fairness based on how well the learned model(s) can achieve uniform accuracy across all clients. \citet{li2019fair} propose the $q$-fair FL algorithm which minimizes the aggregate reweighted loss. The idea is that the clients with higher loss will be assigned a higher weight so as to encourage more uniform accuracy across clients. \citet{li2021ditto} further extend this by considering robustness and poisoning attacks; here, performance fairness and robustness are achieved through a personalized FL method. 
\citet{zhang2021unified} aim to achieve small disparity in accuracy across the groups of client-wise, attribute-wise, and potential clients with agnostic distribution, simultaneously. \citet{wang2021federated} discuss the (performance) unfairness caused by conflicting gradients. They detect this conflict through the notion of cosine similarity, and iteratively eliminate it before aggregation by modifying the direction and magnitude of the gradients.\\
\textbf{\textit{Good-Intent fairness:}} The good-intent fairness aims to minimize the maximum loss for the protected group. \citet{mohri2019agnostic} propose a new framework of agnostic FL to mitigate the bias in the training procedure via minimax optimization. Similarly, \citet{cui2021addressing} consider a constrained multi-objective optimization problem to enforce the fairness constraint on all clients. They then maximize the worst client with fairness constraints through a gradient-based procedure. \citet{papadaki2021federating} show that a model that is minimax fair w.r.t. clients is equivalent to a relaxed minimax fair model w.r.t. demographic group. They also show their proposed algorithm leads to the same minimax group fairness performance guarantee as the centralized approaches.\\
\textbf{\textit{Other types of fairness:}} There are also other types of fairness considered in the FL literature. For instance, \citet{huang2020fairness} studied the unfairness caused by the heterogeneous nature of FL, which leads to the possibility of preference for certain clients in the training process. They propose an optimization algorithm combined with a double momentum gradient and weighting strategy to create a fairer and more accurate model. \citet{chu2021fedfair} measure fairness as the absolute loss difference between protected groups and labels, a variant of equality opportunity fairness constraint. They propose an estimation method to accurately measure fairness without violating data privacy and incorporate fairness as a constraint to achieve a fairer model with high accuracy performance. Similarly, \citet{zhang2022proportional} study a new notion of fairness, proportional fairness, in FL, which is based on the relative change of each client's performance. They connect with the Nash bargaining solution in the cooperative gaming theory and maximize the product of client utilities, where the total relative utility cannot be improved. Similarly, \citet{lyu2020collaborative} study collaborative fairness, meaning that a client who has a higher contribution to learning should be rewarded with a better-performing local model. They introduce a collaborative fair FL framework that incorporates with reputation mechanism to enforce clients with different contributions converge to different models. Their approach could also be viewed as a variant of clustering that separates clients based on their contributions.  
%\section{Pseudo-codes for \texttt{IFCA} and \texttt{Fair-FL+HC}}\label{app:pesudocode}
\section{Pseudo-codes for \texttt{Fair-FL+HC}}\label{app:pesudocode}

% \SetKwComment{Comment}{/* }{ */}
% \RestyleAlgo{ruled}
% \begin{algorithm}[t]
% \caption{\texttt{Fair-FCA}}\label{alg:one}

% \textbf{Input}: Number of clusters $K$, number of clients $n$, number of local updates $E$, cluster model initialization $\Theta_{1:K}$, learning rate $\eta$, fairness-accuracy tradeoff $\gamma$, fairness $f \in \{\texttt{SP}, \texttt{EqOp}, \texttt{EO}\}$.\\
% \textbf{Initialize}: Start clusters $k\in[K]$ by randomly selecting one client for each.\\
% \While{not converge}{ 
   
%   \For{client $i \in [n]$}{
%     \textbf{Find} cluster identity:\\
%     \hspace{0.3in} $c(i) = \arg\min_{k \in [K]} \gamma F(Z_i,\Theta_k) + (1-\gamma)\Psi^{f}(Z_i,\Theta_k)$\\
%     \textbf{Initialize} $\theta_i = \Theta_{c(i)}$\\
%     \textbf{Perform} $E$ steps of local update \\
%     \hspace{0.3in} $\theta_i = \theta_i - \eta \nabla_{\theta_i} F(Z_i,\theta_{i})$ \\
%     \textbf{Upload} $\theta_i$ to server
%   }
%   \textbf{Update} the cluster model $\Theta_{1:K}$\\
%   \hspace{0.3in} $\Theta_k = \Theta_k - \sum_{i \in C_k} \frac{n_i}{\sum_i n_i} (\Theta_k  - \theta_i)$\\
%   \textbf{Send} new cluster models $\Theta_{1:K}$ to all clients
% }
% \textbf{Output}: Cluster models $\Theta_{1:K}$, Cluster identity $c(i), \forall i \in [n]$
% \end{algorithm}

The pseudo-code for \texttt{Fair-FL+HC} is shown in Algorithm~\ref{alg:two}. 

\SetKwComment{Comment}{/* }{ */}
\RestyleAlgo{ruled}
\begin{algorithm}[h!]
\caption{\texttt{Fair-FL+HC}}\label{alg:two}

\textbf{Input}: Number of rounds $k_1$/$k_2$ before/after clustering, number of clients $n$, number of local updates $E$, learning rate $\eta$, Model initialization $\theta$, Set of hyperparameters for the
hierarchical clustering algorithm $P$, fairness-accuracy tradeoff $\gamma$, fairness $f \in \{\texttt{SP}, \texttt{EqOp}, \texttt{EO}\}$.\\

\For{each round $t \in [k_1]$}{
    \textbf{Output} \texttt{FedAvg} model: $\theta = \texttt{FedAvg}(\theta, n)$
  }
  
\For{each client $i \in [n]$ in parallel}{
    \textbf{Initialize} $\theta_i = \theta$\\
    \textbf{Perform} $E$ steps of local update: $\theta_i = \theta_i - \eta \nabla_{\theta_i} F(Z_i,\theta_{i})$ \\
  }
$\text{Clusters} = \text{HierarchicalClustering}(\gamma \mathbf{D} + (1-\gamma) \mathbf{\Psi}^f(Z, \theta), P)$\\
\For{each cluster $c \in \text{Cluster}$ in parallel}{
\For{each client $i \in c$ in parallel}{
    \textbf{Perform} \texttt{FedAvg} procedures: $\texttt{FedAvg}(\theta_i, n_c), n_c \in c$ \\}
  }
\end{algorithm}

\section{Proofs} \label{app:proof}
\subsection{Proof of Proposition~\ref{prop3}}\label{app_proof:prop_eqop}

Our proof follows these steps: First, we establish the relative positions of the optimal decision thresholds for the clustered FL algorithm $(\theta^*_\alpha, \theta^*_\beta)$ and the FedAvg algorithm $\theta^*_G$ (Lemma~\ref{lemma1:relative_location}). Next, we analyze the cluster-wise fairness performance when using cluster-specific optimal decision thresholds compared to the FedAvg solution (Lemma~\ref{lemma2:improved_eqop} and \ref{lemma3:deteriorated_eqop}). Finally, we integrate these results to demonstrate that the overall cluster-wise average fairness performance, based on cluster-specific optimal decision thresholds, outperforms the FedAvg solution (Proposition 1).

Let $f^{y,c}_g(x)$ denote the feature-label distribution of group $g$ label $y$ data, for clients in cluster $c$. We assume that clients use their local data to train threshold-based binary classifiers $h_{\theta}(x):\mathbb{R}\rightarrow \{0,1\}$,\footnote{Our analysis assumes one-dimensional features and threshold classifiers. The former can be viewed as the one-dimensional representation of multi-dimensional features obtained from the last layer outputs of a neural network. For the latter, existing works \citep{corbett2017algorithmic, raab2021unintended} show that threshold classifiers are optimal when multi-dimensional features can be properly mapped into a one-dimensional space.} where samples with features $x \geq \theta$ are classified as label 1  (i.e., $\hat{y}(\theta)=1$). Clients select these thresholds to minimize classification errors. Formally, consider a client $j$ from cluster $c$. Let $r^c_g$ denote the fraction of its samples belonging to group $g$, and let $\alpha^{y,c}_g$ represent the fraction of samples from group $g$ with true label $y$. The client determines its optimal decision threshold ${\theta_j^*}$ by empirically solving the following optimization problem:
\begin{align}
\theta_j^* = \arg\min_{{\theta}} ~~ \sum_{g\in \{a,b\}} r^c_g \Big( \alpha^{1,c}_g \int_{-\infty}^{\theta} f^{1,c}_{g}(x)\mathrm{d}x + \alpha^{0,c}_g\int_{\theta}^{+\infty} f^{0,c}_{g}(x)\mathrm{d}x \Big)~.
 \label{eq:alg-obj}
\end{align}

\begin{lemma}[Relative Location of Optimal Thresholds]\label{lemma1:relative_location}
Under the assumptions of our problem setup, the optimal solution $\theta^*_{G}$ for the FedAvg algorithm will lie between $\theta^{*}_\alpha$ and $\theta^{*}_\beta$. 
\end{lemma}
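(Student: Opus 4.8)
\textbf{Proof proposal for Lemma~\ref{lemma1:relative_location}.}

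The plan is to exploit the fact that the FedAvg objective, whose minimizer is $\theta^*_G$, is (up to the per-cluster weights $\hat{p}$ and $1-\hat{p}$) a convex combination of the two cluster objectives whose minimizers are $\theta^*_\alpha$ and $\theta^*_\beta$. First I would write the FedAvg population objective explicitly as
\begin{equation}
\mathcal{L}_G(\theta) = \hat{p}\,\mathcal{L}_\alpha(\theta) + (1-\hat{p})\,\mathcal{L}_\beta(\theta),
\end{equation}
where $\mathcal{L}_c(\theta) := \sum_{g} r^c_g\big(\alpha^{1,c}_g\int_{-\infty}^\theta f^{1,c}_g + \alpha^{0,c}_g\int_\theta^{+\infty} f^{0,c}_g\big)$ is the (aggregated) cluster-$c$ classification error of Eq.~\eqref{eq:alg-obj}, and $\theta^*_c = \arg\min_\theta \mathcal{L}_c(\theta)$.

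The key observation is about the sign of the derivative $\mathcal{L}_c'(\theta)$. Differentiating, $\mathcal{L}_c'(\theta) = \sum_g r^c_g\big(\alpha^{1,c}_g f^{1,c}_g(\theta) - \alpha^{0,c}_g f^{0,c}_g(\theta)\big)$. Under the stated assumptions (unimodality of the $\mathcal{G}^{y,c}_g$, and $\alpha^{1,c}_g \ge \alpha^{0,c}_g$), I would argue that each $\mathcal{L}_c$ is quasi-convex with a single interior stationary point $\theta^*_c$, so that $\mathcal{L}_c'(\theta) < 0$ for $\theta < \theta^*_c$ and $\mathcal{L}_c'(\theta) > 0$ for $\theta > \theta^*_c$; this is the ``single-crossing'' property that the optimality and unimodality assumptions are there to guarantee. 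Granting this, the argument is short: at any $\theta \le \theta^*_\alpha < \theta^*_\beta$ (assumed), both $\mathcal{L}_\alpha'(\theta) \le 0$ and $\mathcal{L}_\beta'(\theta) < 0$, hence $\mathcal{L}_G'(\theta) = \hat{p}\,\mathcal{L}_\alpha'(\theta) + (1-\hat{p})\,\mathcal{L}_\beta'(\theta) < 0$, so $\theta^*_G$ cannot lie at or below $\theta^*_\alpha$. Symmetrically, for any $\theta \ge \theta^*_\beta > \theta^*_\alpha$ both derivatives are $\ge 0$ (one strictly), so $\mathcal{L}_G'(\theta) > 0$ and $\theta^*_G$ cannot lie at or above $\theta^*_\beta$. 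Since $\mathcal{L}_G$ is continuous and a convex combination of quasi-convex functions with a unique crossing each, its minimizer exists and must therefore lie strictly between $\theta^*_\alpha$ and $\theta^*_\beta$, i.e.\ $\theta^*_\alpha < \theta^*_G < \theta^*_\beta$.

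I expect the main obstacle to be the justification of the single-crossing / quasi-convexity of $\mathcal{L}_c$: unimodality of $f^{1,c}_g$ and $f^{0,c}_g$ individually does not immediately force the weighted difference $\sum_g r^c_g(\alpha^{1,c}_g f^{1,c}_g - \alpha^{0,c}_g f^{0,c}_g)$ to change sign exactly once. The hypotheses $m^{y,c}_b \le m^{y,c}_a$ and $\alpha^{1,c}_g \ge \alpha^{0,c}_g$ (and, for Proposition~\ref{prop1}, the Gaussian/equal-variance structure) are presumably exactly what is needed to pin this down, so the bulk of the work is a monotonicity/log-concavity argument on the density ratios showing $f^{1,c}_g(\theta)/f^{0,c}_g(\theta)$ is monotone and the aggregate crosses the relevant threshold once. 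Once that structural fact is in hand, the convex-combination step above is immediate, and I would present it as the clean final paragraph of the proof.
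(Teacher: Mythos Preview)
Your argument is correct and rests on the same structural fact the paper uses: each cluster loss $\mathcal{L}_c$ is ``bowl-shaped'' with its unique minimum at $\theta^*_c$, so the minimizer of any positive combination cannot fall outside $[\theta^*_\alpha,\theta^*_\beta]$. The paper's presentation differs slightly: rather than tracking the sign of $\mathcal{L}_G'$, it argues by contradiction at the function-value level --- assuming $\theta^*_G>\theta^*_\beta$, convexity of $\mathcal{L}_\alpha$ and optimality of $\theta^*_\beta$ give $\mathcal{L}_\alpha(\theta^*_G)>\mathcal{L}_\alpha(\theta^*_\beta)$ and $\mathcal{L}_\beta(\theta^*_G)>\mathcal{L}_\beta(\theta^*_\beta)$, so $\theta^*_\beta$ beats $\theta^*_G$ for the combined loss. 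Your derivative-sign version and their value-comparison version are equivalent once the shape assumption is in place. Where you are actually more careful than the paper is in flagging the quasi-convexity/single-crossing step as the nontrivial part: the paper simply asserts ``it is easy to verify that the objective function is convex in $\theta$'' and proceeds, without the monotone-likelihood-ratio or log-concavity discussion you anticipate needing. So your worry about that step is legitimate in general, but for the purposes of matching the paper you can take convexity of $\mathcal{L}_c$ as a standing assumption and present the short convex-combination argument directly.
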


\begin{proof}
We prove this by contradiction. By definition, the optimal threshold for FedAvg algorithm is given by $\theta^*_{G}:=\arg\min p*\sum_{j\in \mathcal{C}_\alpha} \mathcal{L}_j(\theta) + (1-p)*\sum_{j\in \mathcal{C}_\beta} \mathcal{L}_j(\theta)$, while the optimal thresholds for the clustered FL algorithm are $\theta^{*}_i:=\arg\min \sum_{j\in \mathcal{C}_i} \mathcal{L}_j(\theta)$, where $i \in \{\alpha, \beta\}$ and $\mathcal{L}_j$ is the objective function defined in Eq.~\ref{eq:alg-obj}. Without loss of generality, assume $\theta^{*}_\alpha < \theta^{*}_\beta$. We consider the case where $\theta^*_{G} > \theta^{*}_\beta$ (the reverse case can be shown similarly).

First, it is easy to verify that the objective function is convex in $\theta$. If $\theta^*_{G} > \theta^{*}_\beta$, then it $\sum_{j\in \mathcal{C}_\beta} \mathcal{L}_j(\theta^*_{G}) > \sum_{j\in \mathcal{C}_\beta} \mathcal{L}_j(\theta^{*}_\beta)$ since $\theta^{*}_\beta$ minimizes the loss for clients in $\mathcal{C}_\beta$. Similarly, due to convexity, we have $\sum_{j\in \mathcal{C}_\alpha} \mathcal{L}_j(\theta^*_{G}) > \sum_{j\in \mathcal{C}_\alpha} \mathcal{L}_j(\theta^{*}_\beta) > \sum_{j\in \mathcal{C}_\alpha} \mathcal{L}_j(\theta^{*}_\alpha)$. Therefore, $\theta^*_{G}$ cannot be the optimal solution, leading to a contradiction. Therefore, the FedAvg solution must lie between $\theta^{*}_\alpha$ and $\theta^{*}_\beta$.
\end{proof}

\begin{lemma}[Improved \texttt{EqOp} through clustering for $\mathcal{C}_\alpha$]\label{lemma2:improved_eqop}
For cluster $\mathcal{C}_\alpha$, assume $f^{y,c}_g(x), y\in\{0,1\}, g\in\{a,b\}$, are unimodal distributions, with modes $m^{y,c}_g$ such that $m^{0,c}_{g}\leq m^{1,c}_g, \forall g$. Suppose $\theta^{*}_\alpha < \theta^{*}_\beta$. If $\alpha^{1,c}_g \geq \alpha^{0, c}_g, \forall g$, then there exists a cluster size $\hat{p}$ such that for $p\geq \hat{p}$, we have $\Delta^\alpha_{\texttt{EqOp}}(\theta^*_\alpha)\leq \Delta^\alpha_{\texttt{EqOp}}(\theta^*_G)$; that is, the global model is unfairer than the cluster-specific model for $C_\alpha$.
\end{lemma}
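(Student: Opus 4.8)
The plan is to turn the claim into a one-dimensional monotonicity statement: I will show that, on a suitable interval starting at $\theta^*_\alpha$, the cluster-$\alpha$ \texttt{EqOp} gap is a nondecreasing function of the threshold, and then use Lemma~\ref{lemma1:relative_location} to argue that whenever cluster $C_\alpha$ is large enough (i.e.\ $p \geq \hat p$), the \texttt{FedAvg} threshold $\theta^*_G$ still lies inside that interval, so $\theta^*_\alpha$ must be the fairer of the two.

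First I would set up notation. With threshold classifiers, the cluster-$\alpha$ true positive rate of group $g$ at threshold $\theta$ is $T_g(\theta):=\int_\theta^{+\infty} f^{1,\alpha}_g(x)\,\mathrm{d}x$, so $\Delta^\alpha_{\texttt{EqOp}}(\theta) = |T_a(\theta)-T_b(\theta)|$ and $\frac{\mathrm d}{\mathrm d\theta}(T_a(\theta)-T_b(\theta)) = f^{1,\alpha}_b(\theta)-f^{1,\alpha}_a(\theta)$. Taking (without loss of generality, using the mode ordering inherited from Proposition~\ref{prop3}) $m^{1,\alpha}_b \le m^{1,\alpha}_a$, so that $b$ is the group whose label-$1$ features are shifted left, I would argue: (i) $T_a(\theta)-T_b(\theta) \ge 0$ on the relevant range (group $b$ is disadvantaged), since $T_a-T_b$ vanishes at $-\infty$ and its derivative $f^{1,\alpha}_b-f^{1,\alpha}_a$ is nonnegative to the left of the point $x_0$ where the two unimodal label-$1$ densities cross; and (ii) $\theta^*_\alpha \le x_0$, so that on $[\theta^*_\alpha, x_0]$ the derivative of $T_a-T_b$ is nonnegative and hence $\Delta^\alpha_{\texttt{EqOp}}(\theta) = T_a(\theta)-T_b(\theta)$ is nondecreasing there. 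Claim (ii) is where the first-order optimality condition for $\theta^*_\alpha$ coming from \eqref{eq:alg-obj}, the ordering $m^{0,\alpha}_g \le m^{1,\alpha}_g$, and the label-imbalance hypothesis $\alpha^{1,\alpha}_g \ge \alpha^{0,\alpha}_g$ enter: more label-$1$ mass makes false negatives costlier and pushes the error-minimizing threshold to the left, keeping $\theta^*_\alpha$ below the crossing point $x_0$ where the gap is still increasing.

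Next I would invoke Lemma~\ref{lemma1:relative_location}: since $\theta^*_\alpha < \theta^*_\beta$, the \texttt{FedAvg} threshold obeys $\theta^*_\alpha \le \theta^*_G \le \theta^*_\beta$, and because $\theta^*_G$ minimizes the convex objective $p\sum_{j\in C_\alpha}\mathcal{L}_j(\theta) + (1-p)\sum_{j\in C_\beta}\mathcal{L}_j(\theta)$, it varies continuously (and monotonically) from $\theta^*_\beta$ toward $\theta^*_\alpha$ as $p$ increases from $0$ to $1$, with $\theta^*_G \to \theta^*_\alpha$ as $p \to 1$. Hence there is a $\hat p \in [0,1)$ such that $p \ge \hat p$ implies $\theta^*_G \in [\theta^*_\alpha, x_0]$, and combining this with the monotonicity of $\Delta^\alpha_{\texttt{EqOp}}$ on that interval gives $\Delta^\alpha_{\texttt{EqOp}}(\theta^*_\alpha) \le \Delta^\alpha_{\texttt{EqOp}}(\theta^*_G)$, as claimed.

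The main obstacle is step (ii): showing that $\theta^*_\alpha$ genuinely sits on the increasing side of the \texttt{EqOp}-gap curve (below $x_0$) rather than past its peak — a purely one-dimensional but somewhat delicate localization of the error-minimizing threshold relative to the modes and the density-crossing point, and the one place where unimodality, the mode ordering, and the label-rate condition are all used together. Everything after that localization is elementary, and a parallel one-dimensional analysis (now tracking $C_\beta$'s gap curve as $\theta$ decreases from $\theta^*_\beta$) yields the companion statement Lemma~\ref{lemma3:deteriorated_eqop} used in the proof of Proposition~\ref{prop3}.
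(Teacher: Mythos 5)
Your plan follows essentially the same route as the paper's proof: you use the same gap derivative $(\Delta^\alpha_{\texttt{EqOp}})'(\theta)=f^{1,\alpha}_b(\theta)-f^{1,\alpha}_a(\theta)$, the same observation that the gap vanishes at $\pm\infty$ and is nondecreasing up to the crossing point of the two label-$1$ densities (your $x_0$ is the paper's $I_{a^1,b^1}$), and the same final step combining Lemma~\ref{lemma1:relative_location} with the fact that $\theta^*_G\to\theta^*_\alpha$ as $p\to 1$ to extract $\hat p$. However, the step you explicitly defer --- your step (ii), the localization $\theta^*_\alpha\le x_0$, equivalently $(\Delta^\alpha_{\texttt{EqOp}})'(\theta^*_\alpha)\ge 0$ --- is precisely where all the substance of the paper's proof lies, so as written the proposal has a genuine gap there rather than merely a routine verification.

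Concretely, the paper establishes (ii) in stages: in the fully balanced, equalized-distance case the first-order condition for \eqref{eq:alg-obj} gives the closed form $\theta^*_\alpha=I_{a^1,b^0}=I_{b^1,a^0}$, which sits to the left of $I_{a^1,b^1}$ because $m^{0,c}_b\le m^{1,c}_b$; relaxing equalized distances is handled by a two-case analysis on the mode of $f^{0,c}_b$ (in the worst case $\theta^*_\alpha$ is still bounded by $I_{a^0,a^1}\le I_{a^1,b^1}$); unequal group rates only move $\theta^*_\alpha$ between $I_{b^0,b^1}$ and $I_{a^0,a^1}$, both left of $I_{a^1,b^1}$; and finally the hypothesis $\alpha^{1,c}_g\ge\alpha^{0,c}_g$ shifts $\theta^*_\alpha$ further left. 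Note that your closing heuristic (``more label-$1$ mass pushes the threshold left'') is only this last piece: a leftward shift preserves $\theta^*_\alpha\le x_0$ only once you know the balanced baseline already satisfies it, which is what the closed-form and case analysis provide; so you would need to supply that baseline argument to complete the proof. A minor additional point, shared with the paper: both arguments implicitly assume the two label-$1$ densities cross exactly once (so that $I_{a^1,b^1}$ is the unique interior critical point of the gap), which unimodality with ordered modes does not by itself guarantee and should be stated as an assumption.
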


\begin{proof}
In cluster $c = \mathcal{C}_\alpha$, we begin by considering the scenario where $r^c_a = r^c_b$, balanced label participation rates, and equalized distances between modes. We assume $m^{y,c}_b < m^{y,c}_a, \forall y$ (the reverse case follows similarly). Let $\Delta^\alpha_{\texttt{EqOp}}(\theta)$ denote the \texttt{EqOp} fairness gap within cluster $\mathcal{C}_\alpha$ at a given decision threshold $\theta$. By definition, it can be expressed as: 
\[\Delta^\alpha_{\texttt{EqOp}}(\theta)= \int_{\theta}^{\infty} f^{1,c}_{a}(x)\mathrm{d}x  - \int_{\theta}^{\infty} f^{1,c}_{b}(x)\mathrm{d}x. \]

Using the Leibniz integral rule, we can find the derivative of $\Delta^\alpha_{\texttt{EqOp}}(\theta)$ w.r.t. $\theta$ as follows:
\[(\Delta^\alpha_{\texttt{EqOp}})^{'}(\theta) = f^{1,c}_{b}(\theta) - f^{1,c}_{a}(\theta)\]
Let $I_{g^y,{g'}^{y'}}$ denote the value of $x$ where the densities $f^{y,c}_g(x)$ and $f^{y',c}_{g'}(x)$ are equal. It follows that the optimal decision threshold $\theta^{*}_\alpha$, obtained from ~\ref{eq:alg-obj}, can be written in closed form as:
\[\theta^{*}_\alpha = I_{a^1,{b}^{0}} = I_{b^1,{a}^{0}}\] 
When $(\Delta^\alpha_{\texttt{EqOp}})^{'}(\theta)=0$, $\theta = \infty, -\infty$ or $I_{a^1,{b}^{1}}$. Furthermore, in the extreme cases where $\theta \rightarrow \infty$ or $-\infty$, the \texttt{EqOp} fairness gap satisfies $\Delta^\alpha_{\texttt{EqOp}}(\infty) = \Delta^\alpha_{\texttt{EqOp}}(-\infty) = 0$. Therefore, to analyze the impact of the FedAvg solution $\theta^{*}_G$ on the \texttt{EqOp} fairness gap, it suffices to examine the sign of $(\Delta^\alpha_{\texttt{EqOp}})^{'}(\theta)$ at the optimal decision threshold $\theta^{*}_\alpha$ obtained from Eq.~\ref{eq:alg-obj}. 

To relax the assumption of equalized distances, we fix the modes of $f^{1,c}_a, f^{0,c}_a, f^{1,c}_b$, while varying the mode of $f^{0,c}_b$. Thus, there are two cases we can discuss: 
\begin{enumerate}
    \item  Case 1: $I_{a^1,{b}^{1}} - I_{a^0,{b}^{1}} < I_{a^0,{b}^{1}} - I_{a^0,{b}^{0}}$\\
    In this case, we could consider a smaller value of the mode of $f^0_b$. As a result, the optimal decision threshold $\theta^{*}_\alpha$ will shift to the left, resulting in a smaller value compared to the equalized distance case. In other words, it means $\theta^{*}_\alpha \leq I_{a^1,{b}^{1}}$, indicating $(\Delta^\alpha_{\texttt{EqOp}})^{'}(\theta^{*}_\alpha) \geq 0$.
    \item Case 2: $I_{a^1,{b}^{1}} - I_{a^0,{b}^{1}} > I_{a^0,{b}^{1}} - I_{a^0,{b}^{0}}$\\
    In this case, we could consider a larger value of the mode of $f^0_b$, while keeping it smaller than that of $f^0_a$, as per our assumption. In the extreme cases where they are equal, the optimal decision threshold determined from ~\ref{eq:alg-obj} would be smaller than $I_{a^0,{a}^{1}}$ because the mode of $f^{1,c}_b$ is less than that of $f^{1,c}_a$. Since $I_{a^0,{a}^{1}}$ is also smaller than $I_{a^1,{b}^{1}}$, we can still conclude $(\Delta^\alpha_{\texttt{EqOp}})^{'}(\theta^{*}_\alpha) \geq 0$.
\end{enumerate}
For the scenario where $r^c_a \neq r^c_b$, changes in $r^c_g$ affect the location of $\theta^{*}_\alpha$, but not the value of $\Delta^\alpha_{\texttt{EqOp}}(\theta)$. According to our distributional assumptions, if $r_a \geq (\text{resp.} \leq) r_b$,  the optimal threshold $\theta^{*}_\alpha$ shifts to favor group  $a$ (resp. $b$), leading to a right (resp. left) shift relative to the equalized case. In the extreme cases where $r_a \rightarrow$ 1 (resp. 0), we have $\theta^{*}_\alpha \rightarrow I_{a^0,{a}^{1}}$ (resp. $I_{b^0,{b}^{1}}$), which remains less than $I_{a^1,{b}^{1}}$, ensuring that $(\Delta^\alpha_{\texttt{EqOp}})^{'}(\theta^{*}_\alpha) \geq 0$.

Additionally, if the majority of samples are labeled as 1 (i.e., $\alpha^{1,c}_g \geq \alpha^{0,c}_g$), the optimal decision threshold $\theta^{*}_\alpha$ will shift leftward to account for label imbalance. Consequently, the sign of $(\Delta^\alpha_{\texttt{EqOp}})^{'}(\theta^{*}_\alpha)$ remains positive. Since $\theta^{*}_\alpha < \theta^*_G <\theta^{*}_\beta$, there exists a cluster size $\hat{p}$ such that for $p\geq \hat{p}$, the FedAvg solution $\theta^{*}_G$ will make the cluster $\mathcal{C}_\alpha$ unfairer (i.e., $\Delta^\alpha_{\texttt{EqOp}}(\theta^*_\alpha)\leq \Delta^\alpha_{\texttt{EqOp}}(\theta^*_G)$).
\end{proof}

\begin{lemma}[Deteriorated \texttt{EqOp} through clustering for $\mathcal{C}_\beta$]\label{lemma3:deteriorated_eqop}
For cluster $\mathcal{C}_\beta$, assume $f^{y,c}_g(x), y\in\{0,1\}, g\in\{a,b\}$, are unimodal distributions, with modes $m^{y,c}_g$ such that $m^{0,c}_{g}\leq m^{1,c}_g, \forall g$. Suppose $\theta^{*}_\alpha < \theta^{*}_\beta$. We have $\Delta^\beta_{\texttt{EqOp}}(\theta^*_\beta)\geq \Delta^\beta_{\texttt{EqOp}}(\theta^*_G)$; that is, the global model is fairer than the cluster-specific model for $C_\beta$.
\end{lemma}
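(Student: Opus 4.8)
The plan is to follow the structure of the proof of Lemma~\ref{lemma2:improved_eqop}, but to exploit the fact that for $\mathcal{C}_\beta$ the FedAvg threshold lies to the \emph{left} of the cluster-specific one, so that the move $\theta^*_\beta \to \theta^*_G$ \emph{enters} (rather than risks leaving) the region where the within-cluster \texttt{EqOp} gap is monotone. First I would invoke Lemma~\ref{lemma1:relative_location} to get $\theta^*_\alpha < \theta^*_G < \theta^*_\beta$. Then, writing $f^{y,c}_g$ for the label-$y$, group-$g$ feature density in cluster $c$ and assuming WLOG $m^{1,\beta}_b \le m^{1,\beta}_a$ so that group $a$ is advantaged, I would express the within-cluster gap as $\Delta^\beta_{\texttt{EqOp}}(\theta) = \int_\theta^\infty f^{1,\beta}_a(x)\,\mathrm{d}x - \int_\theta^\infty f^{1,\beta}_b(x)\,\mathrm{d}x$, so that by the Leibniz rule $(\Delta^\beta_{\texttt{EqOp}})'(\theta) = f^{1,\beta}_b(\theta) - f^{1,\beta}_a(\theta)$, exactly as in Lemma~\ref{lemma2:improved_eqop}. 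Unimodality of $f^{1,\beta}_a, f^{1,\beta}_b$ together with $m^{1,\beta}_b \le m^{1,\beta}_a$ gives a single crossing at $I_{a^1,b^1}$, hence $\Delta^\beta_{\texttt{EqOp}}(\cdot)$ is non-decreasing on $(-\infty, I_{a^1,b^1}]$ and non-increasing thereafter, with $\Delta^\beta_{\texttt{EqOp}}(\pm\infty)=0$ (so $a$ indeed stays weakly advantaged and the absolute value is harmless).

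The crux is to show $\theta^*_\beta \le I_{a^1,b^1}$, i.e.\ $(\Delta^\beta_{\texttt{EqOp}})'(\theta^*_\beta)\ge 0$. Here I would reuse the closed-form and case analysis from Lemma~\ref{lemma2:improved_eqop}: the accuracy-optimal threshold solving \eqref{eq:alg-obj} is pinned down by intersections of label-$1$ densities with label-$0$ densities (it equals $I_{a^1,b^0}=I_{b^1,a^0}$ in the symmetric base case), and the assumption $m^{0,c}_g \le m^{1,c}_g$ pushes every such intersection — and therefore $\theta^*_\beta$ — weakly to the left of $I_{a^1,b^1}$. I would then walk through the same relaxations as in Lemma~\ref{lemma2:improved_eqop} (unequal mode spacings via two sub-cases on the mode of $f^{0,\beta}_b$, unequal group fractions $r^\beta_a \ne r^\beta_b$, and the label rates $\alpha^{y,\beta}_g$), verifying in each case that the induced shift of $\theta^*_\beta$ keeps it in $(-\infty, I_{a^1,b^1}]$.

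Combining the two parts finishes the proof: since $\theta^*_G < \theta^*_\beta \le I_{a^1,b^1}$, both thresholds lie in the non-decreasing branch of $\Delta^\beta_{\texttt{EqOp}}(\cdot)$, hence $\Delta^\beta_{\texttt{EqOp}}(\theta^*_G) \le \Delta^\beta_{\texttt{EqOp}}(\theta^*_\beta)$, as claimed. Unlike Lemma~\ref{lemma2:improved_eqop}, no lower bound on the cluster size $p$ is required here, because the global threshold moves toward the interior of the monotone region rather than potentially overshooting the peak $I_{a^1,b^1}$. I expect the main obstacle to be the case analysis establishing $\theta^*_\beta \le I_{a^1,b^1}$ — in particular controlling how label imbalance within $\mathcal{C}_\beta$ moves the accuracy-optimal threshold — which is precisely where unimodality, the single-crossing structure, and the mode-ordering assumption $m^{0,c}_g\le m^{1,c}_g$ are indispensable.
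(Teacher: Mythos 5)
Your proposal is correct and follows essentially the same route as the paper: the paper's own proof is a one-line appeal to Lemma~\ref{lemma2:improved_eqop}, noting that $(\Delta^\beta_{\texttt{EqOp}})'(\theta^*_\beta)\geq 0$ and $\theta^*_\alpha<\theta^*_G<\theta^*_\beta$ imply the claim, which is exactly the single-crossing/monotone-branch argument you spell out (including the observation that no cluster-size condition is needed since $\theta^*_G$ moves into, not past, the increasing region). Your flagged concern—that establishing $\theta^*_\beta\leq I_{a^1,b^1}$ requires repeating Lemma~\ref{lemma2:improved_eqop}'s case analysis for $\mathcal{C}_\beta$, including its label-rate condition—is exactly the part the paper leaves implicit in "follows the same reasoning," so your write-up is, if anything, more explicit than the paper's.
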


\begin{proof}
    The proof follows the same reasoning as Lemma~\ref{lemma2:improved_eqop}. Since $(\Delta^\beta_{\texttt{EqOp}})^{'}(\theta^{*}_\beta) \geq 0$ and $\theta^{*}_\alpha < \theta^*_G <\theta^{*}_\beta$, it follows that $\Delta^\beta_{\texttt{EqOp}}(\theta^*_\beta)\geq \Delta^\beta_{\texttt{EqOp}}(\theta^*_G)$. 
\end{proof}

Now, we are ready to show the proof of Proposition~\ref{prop3}.

\begin{proof}
    By definition, we have
    \[\Delta_{\texttt{EqOp}}(\theta^{*}_C) = p\Delta^\alpha_{\texttt{EqOp}}(\theta^{*}_\alpha) + (1-p)\Delta^\beta_{\texttt{EqOp}}(\theta^{*}_\beta)\]
    \[\Delta_{\texttt{EqOp}}(\theta^{*}_G) = p\Delta^\alpha_{\texttt{EqOp}}(\theta^{*}_G) + (1-p)\Delta^\beta_{\texttt{EqOp}}(\theta^{*}_G)\]
    From Lemma~\ref{lemma2:improved_eqop}, we know that there exists a cluster size $\hat{p}$ such that for $p \geq \hat{p}$, we have $\Delta^\alpha_{\texttt{EqOp}}(\theta^*_\alpha)\leq \Delta^\alpha_{\texttt{EqOp}}(\theta^*_G)$; From Lemma~\ref{lemma3:deteriorated_eqop}, we know that for any cluster size $p>0$, we have $\Delta^\beta_{\texttt{EqOp}}(\theta^*_\beta)\geq \Delta^\beta_{\texttt{EqOp}}(\theta^*_G)$. Therefore, to ensure $\Delta_{\texttt{EqOp}}(\theta^{*}_C) \leq \Delta_{\texttt{EqOp}}(\theta^{*}_G)$, it requires
    \[p\Delta^\alpha_{\texttt{EqOp}}(\theta^{*}_\alpha) + (1-p)\Delta^\beta_{\texttt{EqOp}}(\theta^{*}_\beta) \leq p\Delta^\alpha_{\texttt{EqOp}}(\theta^{*}_G) + (1-p)\Delta^\beta_{\texttt{EqOp}}(\theta^{*}_G)\]

    Notice that, when $p \rightarrow 0$, we obtain $\theta^*_G \rightarrow \theta^*_\beta$, leading to $\Delta_{\texttt{EqOp}}(\theta^{*}_C) \rightarrow \Delta_{\texttt{EqOp}}(\theta^{*}_G)$. Similarly, when $p \rightarrow 1$, we have $\theta^*_G \rightarrow \theta^*_\alpha$, leading to $\Delta_{\texttt{EqOp}}(\theta^{*}_C) \rightarrow \Delta_{\texttt{EqOp}}(\theta^{*}_G)$. After rearranging terms, we will have 
    \[p \geq \frac{\Delta^\beta_{\texttt{EqOp}}(\theta^{*}_\beta) - \Delta^\beta_{\texttt{EqOp}}(\theta^{*}_G)}{\Delta^\beta_{\texttt{EqOp}}(\theta^{*}_\beta) - \Delta^\beta_{\texttt{EqOp}}(\theta^{*}_G)+\Delta^\alpha_{\texttt{EqOp}}(\theta^{*}_G)-\Delta^\alpha_{\texttt{EqOp}}(\theta^{*}_\alpha)}\]
    
    Since $\Delta^\beta_{\texttt{EqOp}}(\theta^{*}_\beta) \geq \Delta^\beta_{\texttt{EqOp}}(\theta^{*}_G)$, we can further bound this expression as:
    \begin{align*}
        &\frac{\Delta^\beta_{\texttt{EqOp}}(\theta^{*}_\beta) - \Delta^\beta_{\texttt{EqOp}}(\theta^{*}_G)}{\Delta^\beta_{\texttt{EqOp}}(\theta^{*}_\beta) - \Delta^\beta_{\texttt{EqOp}}(\theta^{*}_G)+\Delta^\alpha_{\texttt{EqOp}}(\theta^{*}_G)-\Delta^\alpha_{\texttt{EqOp}}(\theta^{*}_\alpha)}\leq\frac{\Delta^\beta_{\texttt{EqOp}}(\theta^{*}_\beta) - \Delta^\beta_{\texttt{EqOp}}(\theta^{*}_G)}{\Delta^\alpha_{\texttt{EqOp}}(\theta^{*}_G)-\Delta^\alpha_{\texttt{EqOp}}(\theta^{*}_\alpha)}\\
        &\qquad = \frac{\int_{\theta^*_\beta}^{\infty} f^{1,\beta}_a(x) dx - \int_{\theta^*_\beta}^{\infty} f^{1,\beta}_b(x) dx - \int_{\theta^*_G}^{\infty} f^{1,\beta}_a(x) dx - \int_{\theta^*_G}^{\infty} f^{1,\beta}_b(x) dx}{\int_{\theta^*_\alpha}^{\infty} f^{1,\alpha}_a(x) dx - \int_{\theta^*_\alpha}^{\infty} f^{1,\alpha}_b(x) dx - \int_{\theta^*_G}^{\infty} f^{1,\alpha}_a(x) dx - \int_{\theta^*_G}^{\infty} f^{1,\alpha}_b(x) dx}\\
        &\qquad = \Big |\frac{\int_{\theta^*_G}^{\theta^*_\beta} f^{1,\beta}_a(x) - f^{1,\beta}_b(x) dx}{\int_{\theta^*_\alpha}^{\theta^*_G} f^{1,\alpha}_a(x) - f^{1,\alpha}_b(x)dx} \Big |
    \end{align*}
    Therefore, there exists a cluster size $\hat{p}$ such that for $p\geq \hat{p}$, we have $\Delta_{\texttt{EqOp}}(\theta^*_C)<\Delta_{\texttt{EqOp}}(\theta^*_G)$, where $\hat{p}$ is the solution to: $\hat{p}=\min\{1, |\tfrac{\int_{\theta^*_G}^{\theta^*_\beta} f^{1,\beta}_a(x)-f^{1,\beta}_b(x)dx}{\int_{\theta^*_\alpha}^{\theta^*_G} f^{1,\alpha}_a(x)-f^{1,\alpha}_b(x)dx}|\}$, and $\theta^*_{G}$ is obtained as: $\theta^*_{G} =\arg\min_{\theta} \hat{p}*\sum_{j\in \mathcal{C}_\alpha} \mathcal{L}_j(\theta) + (1-\hat{p})*\sum_{j\in \mathcal{C}_\beta} \mathcal{L}_j(\theta)$. 
\end{proof}
\subsection{Proof of Proposition~\ref{prop1}}\label{app_proof:prop_sp}

\begin{proof}
We start with the scenario where $r_a = r_b$, balanced label participation rate, and equalized distance between mean estimates. As the following analysis focuses on the cluster $\mathcal{C}_\alpha$, we drop the cluster notation from the derivation for notation simplicity. We assume $\mu^{y,c}_b < \mu^{y,c}_a, \forall y$ (the reverse case follows similarly). Let $\Delta_{\texttt{SP}}(\theta)$ be the \texttt{SP} fairness gap within the clsuter $\mathcal{C}_\alpha$ at the given decision threshold $\theta$. By definition, it can be expressed as: 
\[\Delta_{\texttt{SP}}(\theta)= \alpha^{1}_a \int_{\theta}^{\infty} f^{1}_{a}(x)\mathrm{d}x + \alpha^{0}_a \int_{\theta}^{\infty} f^{0}_{a}(x)\mathrm{d}x - \alpha^{1}_b \int_{\theta}^{\infty} f^{1}_{b}(x)\mathrm{d}x - \alpha^{0}_b \int_{\theta}^{\infty} f^{0}_{b}(x)\mathrm{d}x. \]
According to the Leibniz integral rule, we can find the derivative of $\Delta_{\texttt{SP}}(\theta)$ w.r.t. $\theta$ as following:
\[\Delta_{\texttt{SP}}^{'}(\theta) = \alpha^{1}_bf^{1}_{b}(\theta) + \alpha^{0}_bf^{0}_{b}(\theta) - \alpha^{1}_a f^{1}_{a}(\theta) - \alpha^{0}_af^{0}_{a}(\theta)\]
According to our distribution assumptions, we can write the above expression in the following closed form with $\alpha = \alpha^y_g \hspace{0.05in}\forall y, g$ 
\[\Delta_{\texttt{SP}}^{'}(\theta) = \frac{\alpha}{\sqrt{2\pi}\sigma}\Big( \exp(-\frac{(\theta-\mu^1_b)^2}{2\sigma^2}) + \exp(-\frac{(\theta-\mu^0_b)^2}{2\sigma^2}) - \exp(-\frac{(\theta-\mu^1_a)^2}{2\sigma^2}) - \exp(-\frac{(\theta-\mu^0_a)^2}{2\sigma^2}) \Big)\]
Furthermore, it is easy to verify that the optimal decision threshold $\theta^{*}_\alpha$ obtained by solving ~\ref{eq:alg-obj} could be written in the closed form such that 
\[\bar{\theta} = \theta^{*}_\alpha = \frac{\mu^1_a + \mu^0_b + \mu^1_b + \mu^0_a}{4}\]
At the optimal solution $\theta^{*}_\alpha$, $\Delta_{\texttt{SP}}^{'}(\theta^{*}_\alpha) = 0$. Similar to the proof of Proposition~\ref{prop3}, to investigate the impact of FedAvg solution $\theta^{*}_G$ on the \texttt{SP} fairness gap, it is equivalent to check how the $\Delta_{\texttt{SP}}^{'}(\theta^{*}_\alpha)$ change in the neighborhood of the optimal solution $\theta^{*}_\alpha$. Also, at extreme cases, we can easily find that the value of \texttt{SP} fairness gap $\Delta_{\texttt{SP}}(\infty) = \Delta_{\texttt{SP}}(-\infty) = 0$. Therefore, if $\Delta_{\texttt{SP}}^{'}(\theta^{*}_\alpha) \geq 0$, then we can conclude that the FedAvg solution $\theta^{*}_G$ would lead to a worse fairness performance compared to the optimal solution $\theta^{*}_\alpha$. Let $\psi_1(\theta) = \exp(-\frac{(\theta-\mu^1_b)^2}{2\sigma^2}) - \exp(-\frac{(\theta-\mu^0_a)^2}{2\sigma^2})$ and $\psi_2(\theta) = \exp(-\frac{(\theta-\mu^1_a)^2}{2\sigma^2}) - \exp(-\frac{(\theta-\mu^0_b)^2}{2\sigma^2})$. At the solution $\theta^{*}_\alpha$, we can find that $\psi_1(\theta^{*}_\alpha) = \psi_2(\theta^{*}_\alpha) = 0$. 

Hence, to investigate how the $\Delta_{\texttt{SP}}^{'}(\theta^{*}_\alpha)$ change, we can find the rate of change for both $\psi_1(\theta)$ and $\psi_2(\theta)$ in the neighborhood of $\theta^{*}_\alpha$ such that 
\[\psi^{'}_1(\theta^{*}_\alpha) = \exp(\tfrac{(\theta^{*}_\alpha-\mu^0_a)^2}{-2\sigma^2})\tfrac{\theta^{*}_\alpha-\mu^0_a}{\sigma} -\exp(\tfrac{(\theta^{*}_\alpha-\mu^1_b)^2}{-2\sigma^2})\tfrac{\theta^{*}_\alpha-\mu^1_b}{\sigma} = \tfrac{1}{\sigma}\exp(\tfrac{(\theta^{*}_\alpha-\mu^1_b)^2}{-2\sigma^2})(\mu^1_b - \mu^0_a)\]
\[\psi^{'}_2(\theta^{*}_\alpha) = \exp(\tfrac{(\theta^{*}_\alpha-\mu^0_b)^2}{-2\sigma^2})\tfrac{\theta^{*}_\alpha-\mu^0_b}{\sigma} -\exp(\tfrac{(\theta^{*}_\alpha-\mu^1_a)^2}{-2\sigma^2})\tfrac{\theta^{*}_\alpha-\mu^1_a}{\sigma} = \tfrac{1}{\sigma}\exp(\tfrac{(\theta^{*}_\alpha-\mu^0_b)^2}{-2\sigma^2})(\mu^1_a - \mu^0_b)\]
By setting $\psi^{'}_1(\theta^{*}_\alpha) \geq \psi^{'}_2(\theta^{*}_\alpha)$, it means the increment of $\psi_1$ is larger than the decrement of $\psi_2$. Therefore, with Lemma~\ref{lemma1:relative_location}, 
there exists a cluster size weight $p$ such that the FedAvg solution $\theta^{*}_G$ will make the cluster $\mathcal{C}_\alpha$ unfairer. 

To relax the assumption of equalized distances between mean estimates, $\Delta_{\texttt{SP}}^{'}(\theta^{*}_\alpha) \geq 0$ still holds following the same reasoning as cases shown in Lemma~\ref{lemma2:improved_eqop}.

For the scenario where $r_a \neq r_b$, similar to the proof of Proposition~\ref{prop3}, changes in $r_g$ affect the location of $\theta^{*}_\alpha$, but not the value of $\Delta_{\texttt{SP}}(\theta)$. According to our distributional assumptions, if $r_a \geq (\text{resp.} \leq) r_b$, the optimal solution $\theta^{*}_\alpha$ shifts to favor group $a$ (resp. $b$), leading to a right (resp. left) shift relative to the equalized case. In the extreme cases where $r_a \rightarrow$ 1 (resp. 0), we have $\theta^{*}_\alpha \rightarrow \frac{\mu^0_a + \mu^1_a}{2}$ (resp. $\frac{\mu^0_b + \mu^1_b}{2}$), which is limited within the range of $(\frac{\mu^0_a + \mu^0_b}{2}, \frac{\mu^1_a + \mu^1_b}{2})$. When $\theta = \frac{\mu^1_a + \mu^1_b}{2}$, we can easily find that $\Delta_{\texttt{SP}}^{'}(\theta) \approx 0$ especially when $\sigma$ is small. In other words, we can conclude that $\Delta_{\texttt{SP}}(\theta) \geq \Delta_{\texttt{SP}}(\theta^{*}_\alpha)$ for any $\theta^{*}_\alpha \in (\frac{\mu^0_b + \mu^1_b}{2}, \frac{\mu^0_a + \mu^1_a}{2})$. Therefore, the claim still holds. 

Additionally, similar to Proposition~\ref{prop3}, when the equalized label participation rate assumption is relaxed, the above proof strategy still holds by considering different $\alpha^y_g$ into the expression. It is worth noting that when the label participation rates are balanced, the fairness $\Delta_{\texttt{SP}}(\theta)$ has two equal-height peaks (e.g., $\Delta_{\texttt{SP}}'(\theta) = 0$) by symmetricity of the Gaussian distribution when $\theta \approx \frac{\mu^1_a+\mu^1_b}{2}$ and $\frac{\mu^0_a+\mu^0_b}{2}$. However, when the majority of samples are labeled as 1, we observe a shift in the decision threshold $\theta^{*}_\alpha \leq \bar{\theta}$ towards the left to account for label imbalance. Consequently, the sign of $\Delta_{\texttt{SP}}^{'}(\theta^{*}_\alpha)$ remains positive. Since $\theta^{*}_\alpha < \theta^*_G <\theta^{*}_\beta$, there exists a cluster size $\hat{p}$ such that for $p\geq \hat{p}$, the FedAvg solution $\theta^{*}_G$ will make the cluster $\mathcal{C}_\alpha$ unfairer (i.e., $\Delta^\alpha_{\texttt{SP}}(\theta^*_\alpha)\leq \Delta^\alpha_{\texttt{SP}}(\theta^*_G)$). In this case, the FedAvg solution pulls $\theta^{*}_\alpha$ upwards, favoring label 1, which results in both accuracy and fairness deteriorating. 
\end{proof}
\section{Experiment details and additional numerical experiments} \label{app:numerical}

\subsection{Dataset and models} \label{app: data_model}

In this section, we detail the data and model used in our experiments. 

\textbf{Retiring Adult dataset.} We use the pre-processed dataset provided by the folktables Python package \citep{ding2021retiring}, which provides access to datasets derived from the US Census. In this package, there are three tasks: ACSEmployment, ACSIncome, and ACSHealth. %Based on the multi-dimensional features, 
For the ACSEmployemnt task, the goal is to predict whether the person is employed based on its multi-dimensional features; for the ACSIncome task, the goal is to predict whether the person earns more than \$50,000 annually; and for the ACSHealth task, the goal is to predict whether the person is covered by insurance. 

\textbf{Model.} We train a fully connected two-layer neural network model for both tasks, where the hidden layer has 32 neurons for the ACSIncome task, and 64 neurons for the ACSEmployment and ACSHealth tasks. For all tasks, we use the RELU activation function and a batch size of 32. Furthermore, we utilize the SGD optimizer for training, with a learning rate of 0.001 for both FedAvg and MAML algorithms and 0.05 for the clustered FL algorithm. In FL, each client updates the global model for 10 epochs in the FedAvg and MAML algorithms and sends it back to the server, while the clustered FL algorithm that has a larger learning rate updates the global model for 1 epoch. We also follow the encoding procedure for categorical features provided by the folktables Python package. The input feature size is 54, 109 and 154 for the ACSIncome, ACSEmployment and ACSHealth tasks, respectively. In the experiments, we consider either sex (e.g., male and female) or race (e.g., White and Non-White) as the protected attribute.

\textbf{ACSEmployment task with different protected attributes}. 

\begin{figure}[ht]
\vspace{-0.1in}
	\centering
	\subfigure{
 \includegraphics[width=0.45\textwidth]{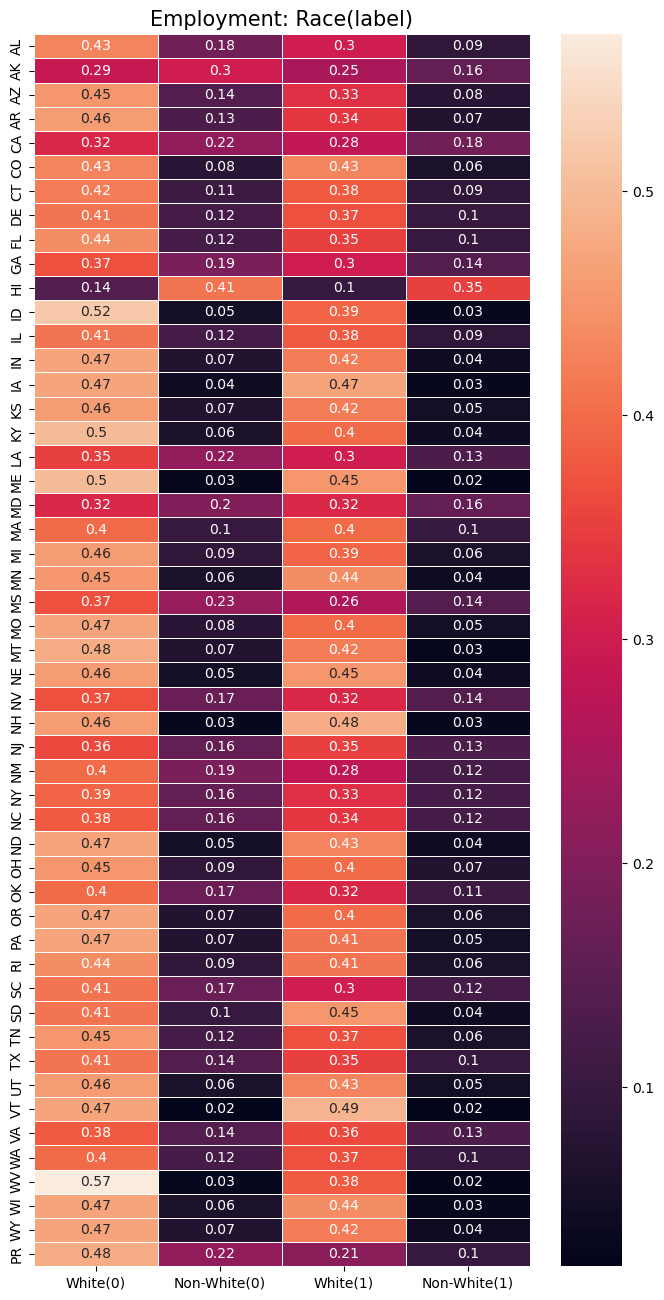}
	}
     \subfigure{
     \includegraphics[width=0.45\textwidth]{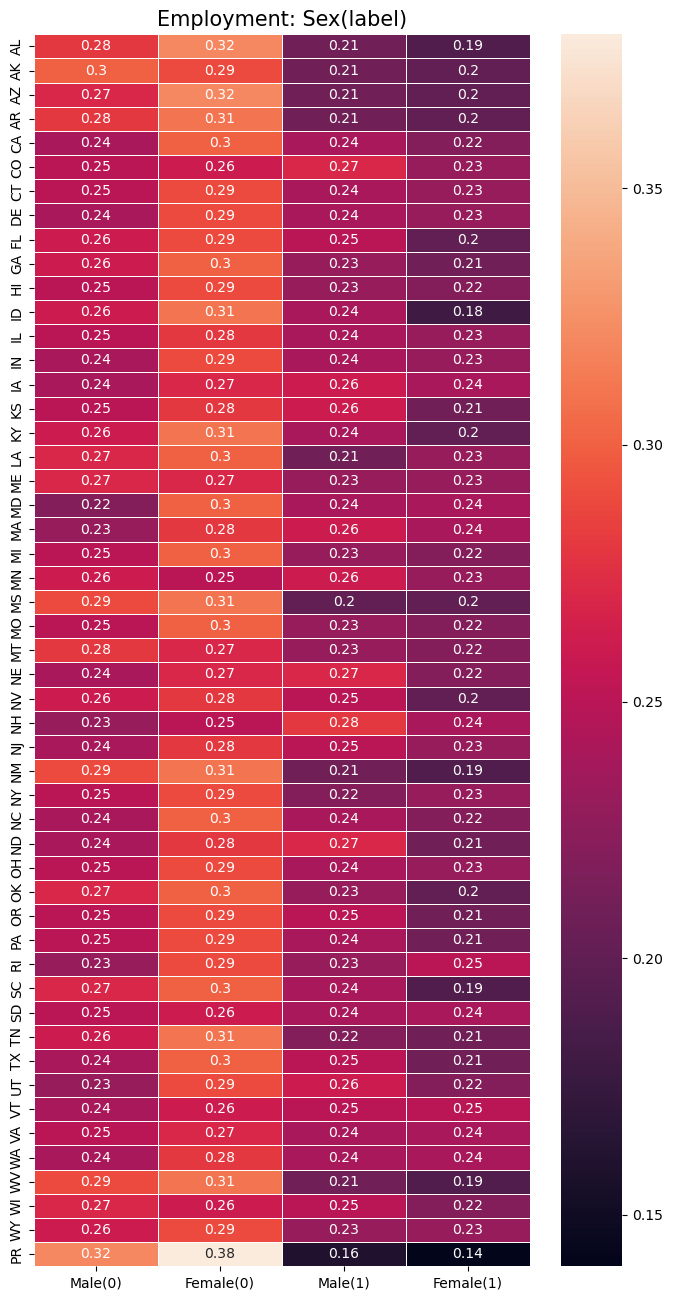}
    	}
     \vspace{-0.1in}
	\caption{Fraction of samples over all states for ACSEmployment}
	 \label{fig:race_emp_full}
\end{figure}

\begin{figure}[ht]
\vspace{-0.1in}
	\centering
	\subfigure[Protected attribute: Race]{
 \includegraphics[width=0.8\textwidth]{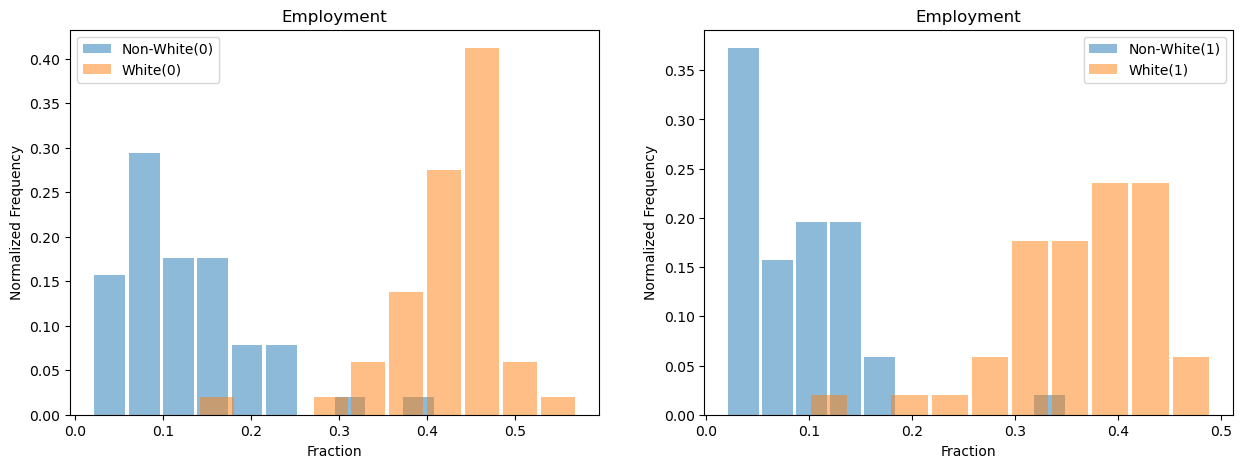}
	}
 \vspace{0.1in}
 \subfigure[Protected attribute: Sex]{
 \includegraphics[width=0.8\textwidth]{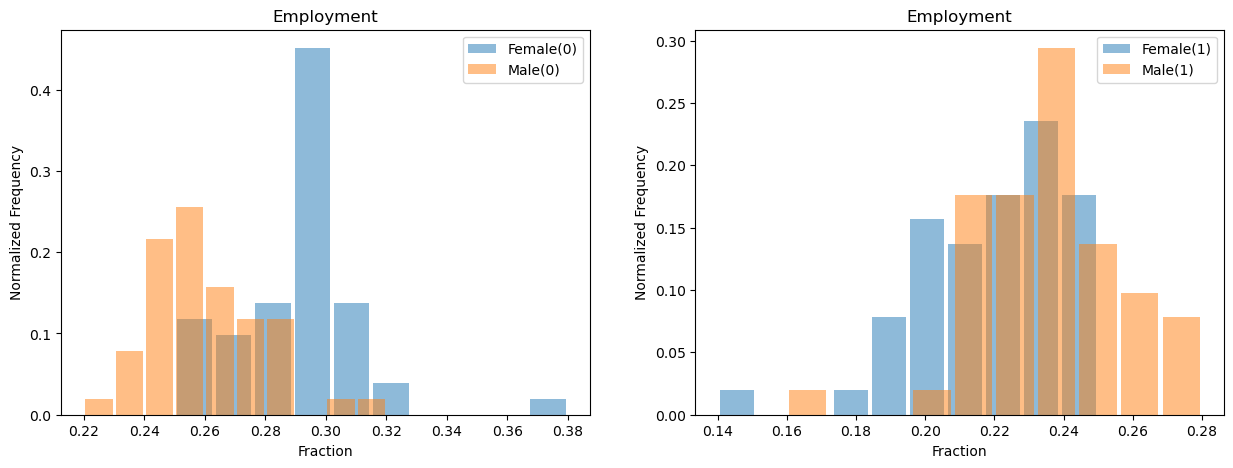}
	}
  \vspace{-0.1in}
	\caption{Normalized frequency of fraction of samples for ACSEmployment}
	 \label{fig:hist_emp_race_full}
\end{figure}

As shown in Figure~\ref{fig:race_emp_full} and \ref{fig:hist_emp_race_full}, within the same ACSEmployment task, the data distributions for race (left) and sex (right) are significantly different. For the protected attribute of sex, the number of samples is nearly even across groups and labels. However, for the protected attribute of race, the White group has significantly more samples compared to Non-White groups for both labels ${0,1}$.

\textbf{ACSIncome and ACSHealth tasks with protected attribute of sex}. 

We can see from Figure\ref{fig:hist_inc_race_full}, \ref{fig:hist_health_sex_full} and ~\ref{fig:sex_inc_full} that the fraction of samples in the ACSIncome task is similar across groups for label 0 data but differs significantly for label 1 data. Additionally, we can observe that the ACSHealth task has similar fractions of samples from each group, akin to the ACSEmployment task, in contrast to the ACSIncome task.

\begin{figure}[ht]
\vspace{-0.1in}
	\centering
 \subfigure[Protected attribute: Sex]{
 \includegraphics[width=0.8\textwidth]{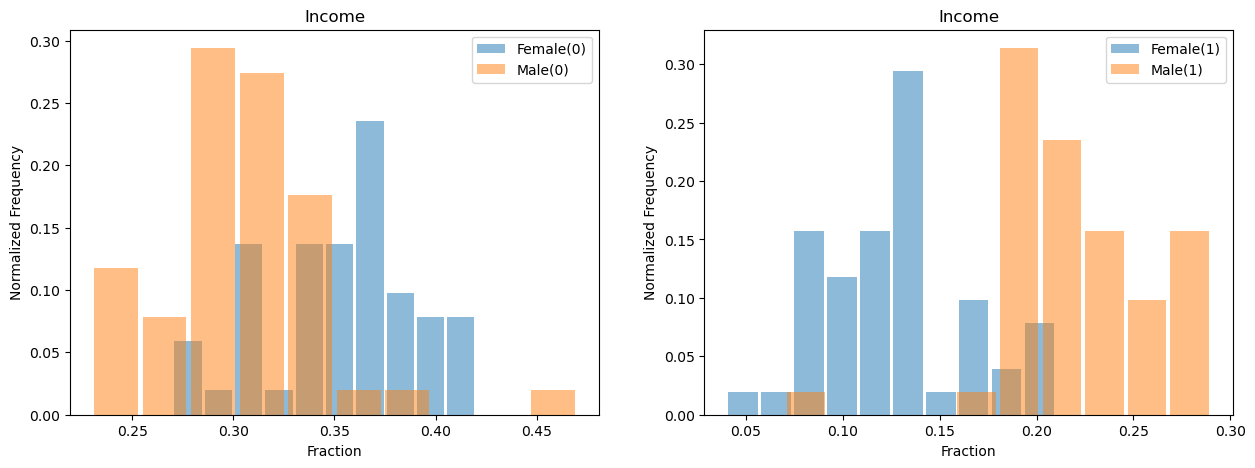}
	}
  \vspace{-0.1in}
	\caption{Normalized frequency of fraction of samples for ACSIncome}
	 \label{fig:hist_inc_race_full}
\end{figure}

\begin{figure}[ht]
\vspace{-0.1in}
	\centering
 \subfigure[Protected attribute: Sex]{
 \includegraphics[width=0.8\textwidth]{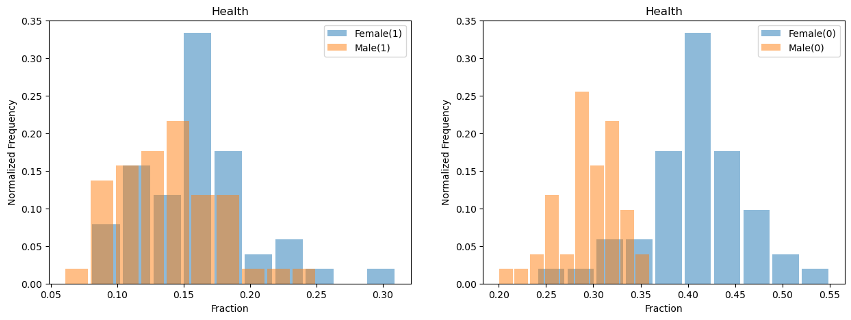}
	}
  \vspace{-0.1in}
	\caption{Normalized frequency of fraction of samples for ACSHealth}
	 \label{fig:hist_health_sex_full}
\end{figure}

\begin{figure}[ht]
\vspace{-0.1in}
	\centering
     \subfigure{
     \includegraphics[width=0.45\textwidth]{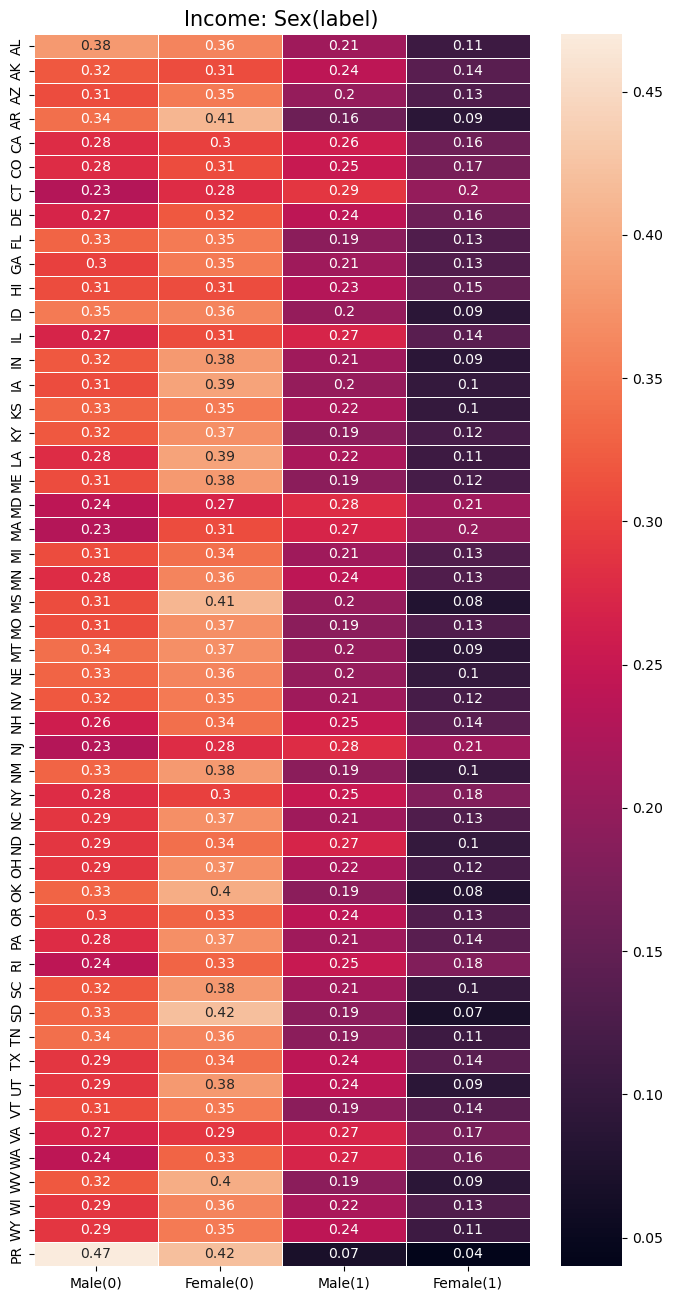}
    	}
     \subfigure{
     \includegraphics[width=0.45\textwidth]{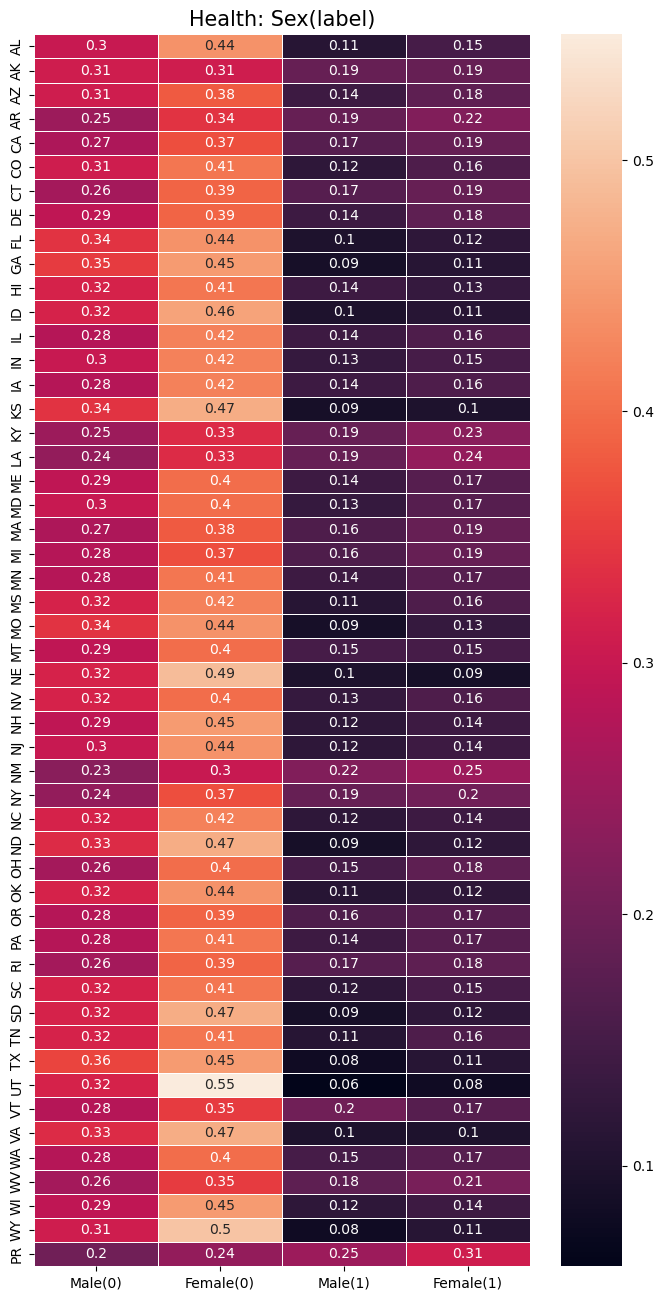}
    	}
     \vspace{-0.1in}
	\caption{Fraction of samples over all states for ACSIncome and ACSHealth}
	 \label{fig:sex_inc_full}
\end{figure}

\subsection{Additional Experiment and setup on the level of imbalance in the number of samples}\label{app_imbalance_tables_and sweep}

\begin{figure}[ht]
\vspace{-0.1in}
	\centering
	\subfigure[Group Imbalance]{		\includegraphics[width=0.32\textwidth]{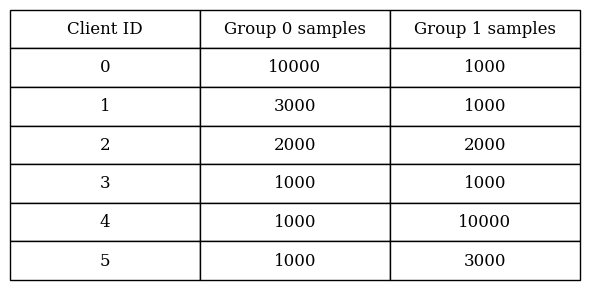} 
	}
        \hspace{-0.05in}
	\subfigure[Label Imbalance]{
\includegraphics[width=0.32\textwidth]{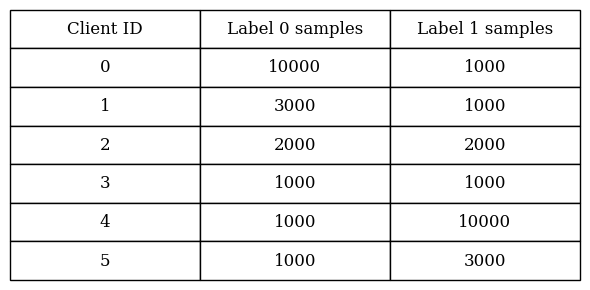} 
	}
     \hspace{-0.05in}
    	\subfigure[Feature Imbalance]{
		\includegraphics[width=0.32\textwidth]{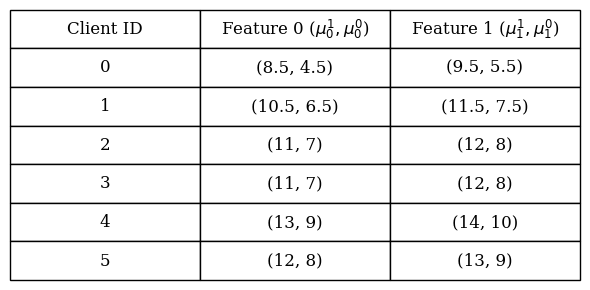} 
	}
	\hspace{-0.1in}
	\caption{Experiment setup details of the level of imbalance}
	\label{fig:table_imbalance}
\end{figure}

\begin{figure}[ht]
\vspace{-0.1in}
	\centering
	\subfigure[Adult (Sex)]{
		\includegraphics[width=0.5\textwidth]{Figs/adult_client_info.png} 
	}
	\hspace{-0.1in}
	\subfigure[Local acc. vs. fairness]{		\includegraphics[width=0.3\textwidth]{Figs/adult.png}
	}
 \vspace{-0.1in}
	\caption{\texttt{Fair-FCA} and \texttt{Fair-FL+HC} on \emph{Adult} dataset ($\gamma = 0, f=\texttt{SP}$)}
	%\label{fig:experiments_new_mix_dp}
  \vspace{-0.1in}
\end{figure}

The experiment setup for Fig.~\ref{fig:synthetic_imbalance} is shown in Fig.~\ref{fig:table_imbalance}. We now explore the impact of different parameter settings on the local group fairness performance. The experiment results shown in Fig.~\ref{fig:group_imbalance_sweep}-\ref{fig:feature_imbalance_sweep} are consistent with our findings. We also have the experiment results using the real-world dataset shown in Fig.~\ref{fig:experiments_retiring_0}.

\begin{figure}[ht]
\vspace{-0.1in}
	\centering
	\subfigure[Balanced Label Rate]{
		\includegraphics[width=0.3\textwidth]{Figs/synthetic_group.png} 
	}
	\hspace{-0.1in}
	\subfigure[Label 1:0 = 20\% : 80\%]{		\includegraphics[width=0.295\textwidth]{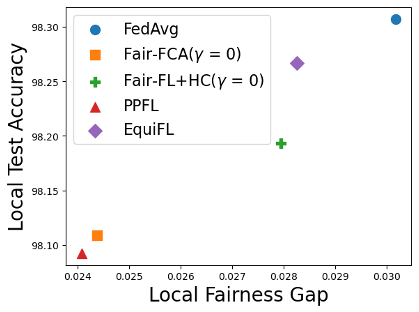}
	}
        \hspace{-0.05in}
	\subfigure[Label 1:0 = 20\% : 80\%]{
\includegraphics[width=0.295\textwidth]{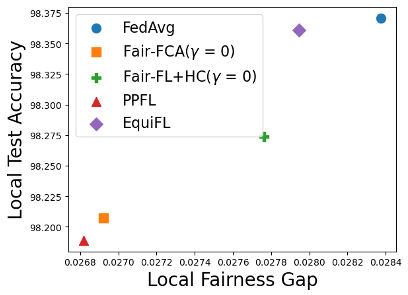} 
	}
 \vspace{-0.1in}
	\caption{\texttt{Fair-FCA} and \texttt{Fair-FL+HC} under different label rate combinations}
	\label{fig:group_imbalance_sweep}
  \vspace{-0.1in}
\end{figure}

\begin{figure}[ht]
\vspace{-0.1in}
	\centering
	\subfigure[Balanced Group Rate]{
		\includegraphics[width=0.3\textwidth]{Figs/synthetic_label.png} 
	}
	\hspace{-0.1in}
	\subfigure[Group 1:0 = 40\% : 60\%]{		\includegraphics[width=0.29\textwidth]{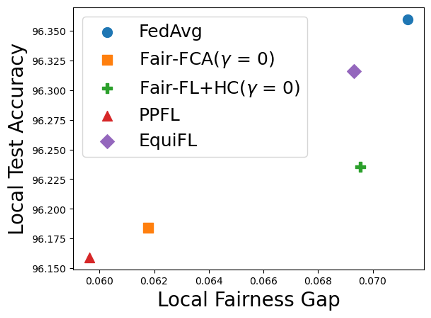}
	}
        \hspace{-0.05in}
	\subfigure[Group 1:0 = 60\% : 40\%]{
\includegraphics[width=0.3\textwidth]{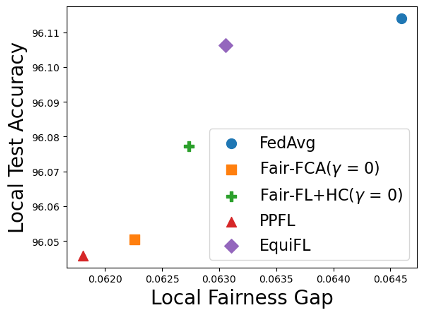} 
	}
 \vspace{-0.1in}
	\caption{\texttt{Fair-FCA} and \texttt{Fair-FL+HC} under different group rate combinations}
	\label{fig:label_imbalance_sweep}
  \vspace{-0.1in}
\end{figure}

\begin{figure}[ht]
\vspace{-0.1in}
	\centering
	\subfigure[Balanced Group/Label Rate]{
		\includegraphics[width=0.3\textwidth]{Figs/synthetic_feature.png} 
	}
	\hspace{-0.1in}
	\subfigure[Group 1:0 = 30\% : 70\%,  Label 1:0 = 75\% : 25\%]{		\includegraphics[width=0.3\textwidth]{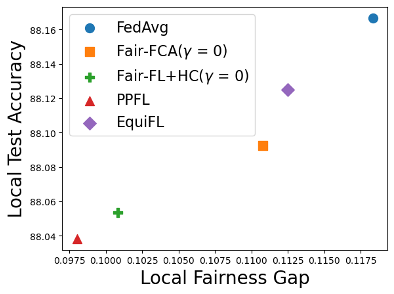}}
        \hspace{-0.05in}
	\subfigure[Group 1:0 = 70\% : 30\%,  Label 1:0 = 25\% : 75\%]{
\includegraphics[width=0.3\textwidth]{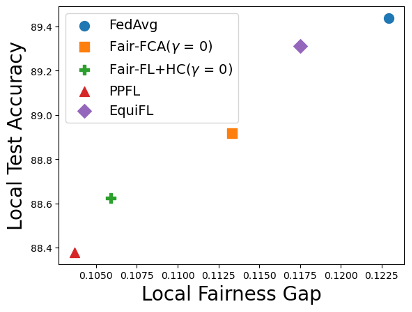} 
	}
 \vspace{-0.1in}
	\caption{\texttt{Fair-FCA} and \texttt{Fair-FL+HC} under different group/label rate combinations}
	\label{fig:feature_imbalance_sweep}
      \vspace{-0.1in}
\end{figure}

\begin{figure}[ht]
\vspace{-0.1in}
	\centering
	\subfigure[Employment (Race)]{
		\includegraphics[width=0.3\textwidth]{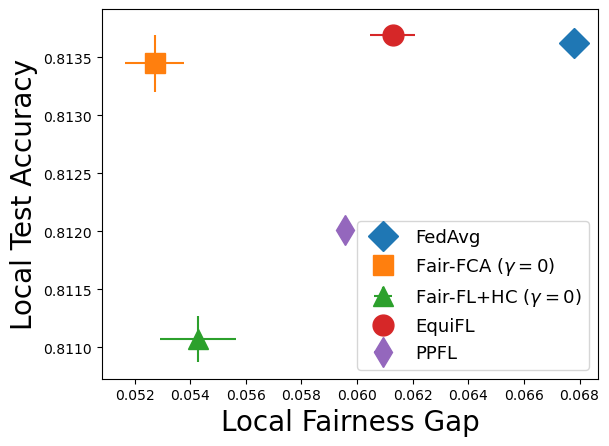} 
	}
	\hspace{-0.1in}
	\subfigure[Employment (Sex)]{		\includegraphics[width=0.3\textwidth]{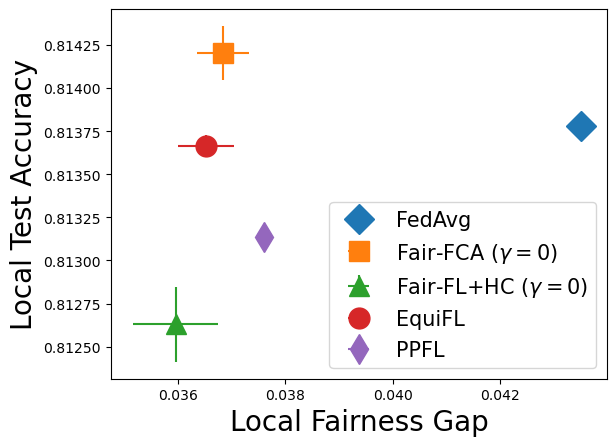}
	}
        \hspace{-0.05in}
	\subfigure[Income (Sex)]{
\includegraphics[width=0.3\textwidth]{Figs/inc_sex_0.png} 
	}
 \vspace{-0.1in}
	\caption{\texttt{Fair-FCA} and \texttt{Fair-FL+HC} with different attributes and tasks ($\gamma = 0, f=\texttt{SP}$)}
	\label{fig:experiments_retiring_0}
  \vspace{-0.1in}
\end{figure}

\subsection{Additional experiments on the fairness-accuracy trade-offs}\label{app_experiment_trade_offs}

We compare our proposed algorithms with $\gamma \in \{0, 0.5\}$ and with $f=\texttt{SP}$, in terms of local accuracy and local group fairness. We proceed with the same experiment setting as in Section~\ref{subsec:comparison_gamma0}. Figure \ref{fig:experiments_new_mix_dp} shows that across different datasets (ACSEmployment, ACSIncome) and protected attributes (race, sex), \texttt{Fair-FCA} and \texttt{Fair-FL+HC} (symbols in squares and triangles) exhibit a fairness-accuracy tradeoff, where improved fairness comes at the cost of some accuracy degradation. 

\begin{figure}[ht]
\vspace{-0.1in}
	\centering
	\subfigure[Employment (Race)]{
		\includegraphics[width=0.3\textwidth]{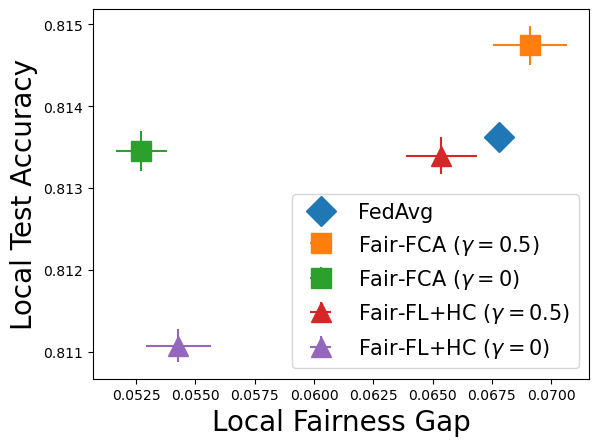} 
	}
	\hspace{-0.1in}
	\subfigure[Employment (Sex)]{		\includegraphics[width=0.3\textwidth]{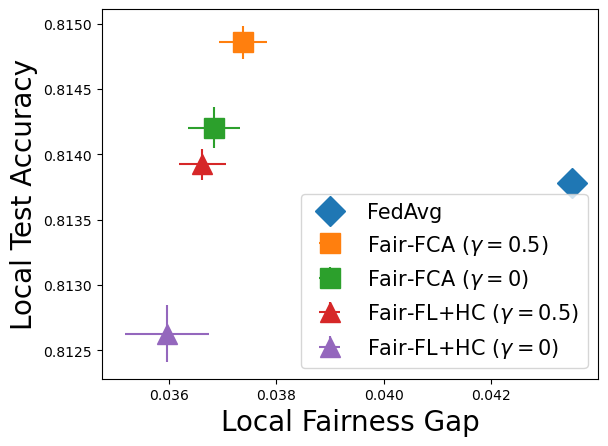} 
	}
        \hspace{-0.05in}
	\subfigure[Income (Sex)]{
\includegraphics[width=0.3\textwidth]{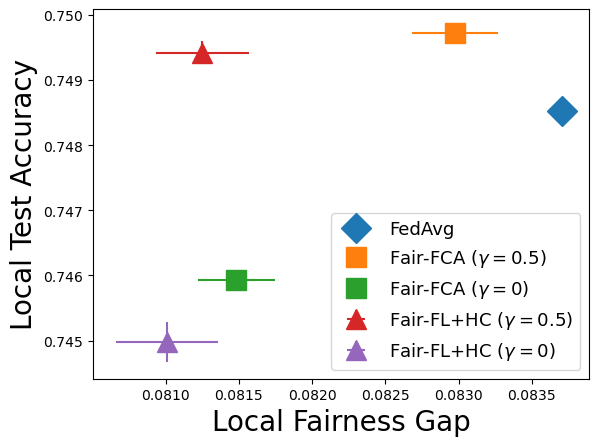} 
	}
 \vspace{-0.1in}
	\caption{\texttt{Fair-FCA} and \texttt{Fair-FL+HC} with different tasks and attributes ($\gamma = \{0, 0.5\}, f=\texttt{SP}$)}
	\label{fig:experiments_new_mix_dp}
  \vspace{-0.1in}
\end{figure}

We also compare our proposed algorithms with other existing personalized FL and locally fair FL methods with $\gamma \in \{0, 0.5, 1\}$ and with $f=\texttt{SP}$, in terms of local accuracy and local group fairness. Different from Figure \ref{fig:experiments_new_mix_dp}, we consider the original Retiring dataset without any feature scaling. Further, instead of considering 51 clients, we choose the top 3 and the bottom 3 clients, sorted by the White Race. Our results in Fig.~\ref{fig:group_retiring} are consistent with the findings in Fig.~\ref{fig:synthetic_tradeoff}.

\begin{figure}[ht]
	\centering
	\subfigure[Clients (Race)]{
		\includegraphics[width=0.4\textwidth]{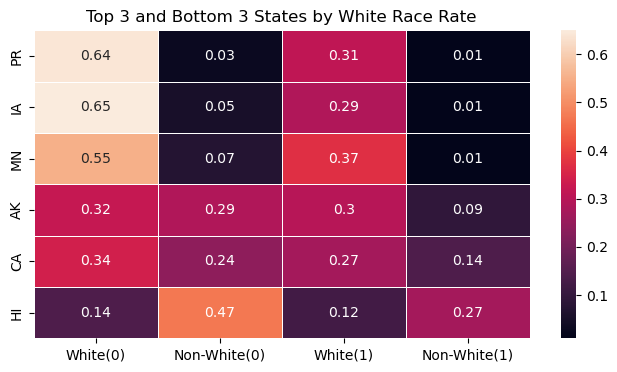} 
	}
	\hspace{-0.1in}
	\subfigure[Employment no scaling]{		\includegraphics[width=0.3\textwidth]{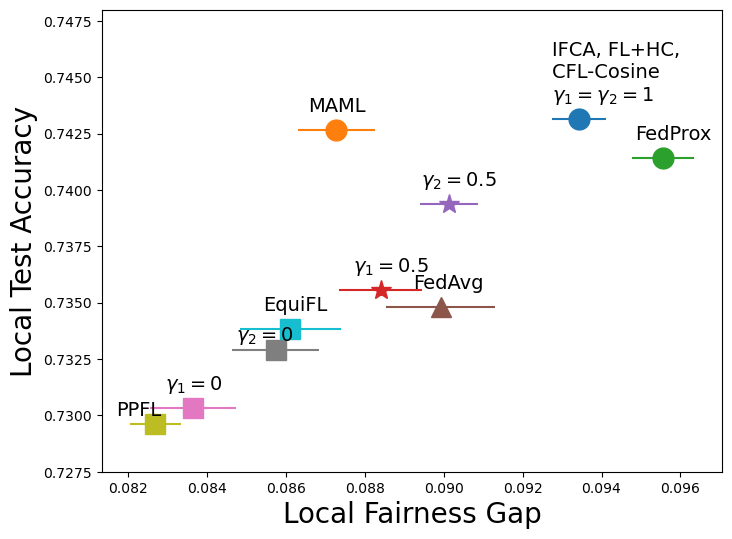} 
	}
 \vspace{-0.1in}
	\caption{Comparison of our methods with existing personalized FL and locally fair FL methods, under $\gamma = \{0, 0.5,1\}, f=\texttt{SP}$ on original Retiring datasets without feature scaling.}
	\label{fig:group_retiring}
  \vspace{-0.1in}
\end{figure}

\subsection{Additional experiments on other types of fairness notions}\label{app:numerical_other_notion}

\begin{figure}[ht]
\vspace{-0.1in}
	\centering
	\subfigure[Employment-sex: \texttt{EO}]{
 \includegraphics[width=0.3\textwidth]{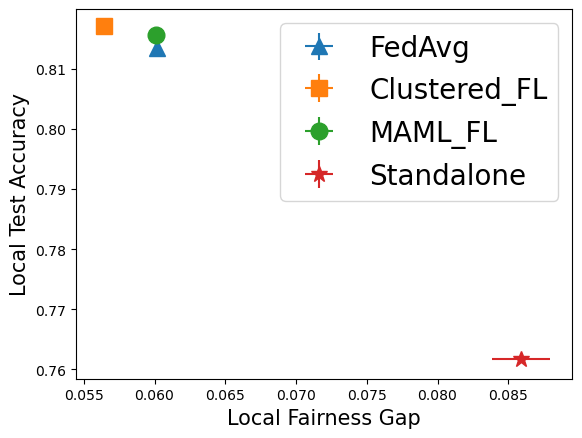}
	}
 \subfigure[Income-sex: \texttt{EO}]{
 \includegraphics[width=0.3\textwidth]{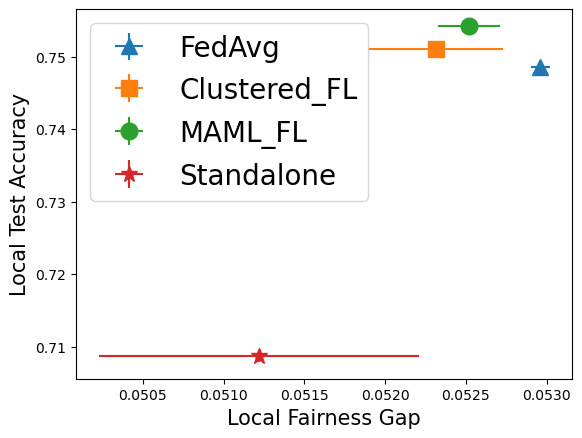}
	}
 \subfigure[Employment-sex: \texttt{EqOp}]{
 \includegraphics[width=0.3\textwidth]{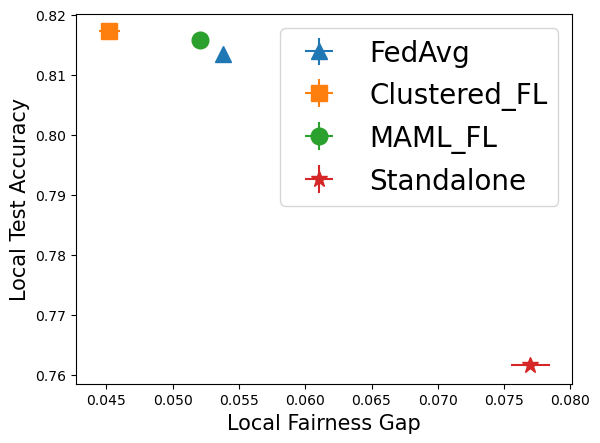}
	}
  \vspace{-0.1in}
	\caption{Personalization could also improve other fairness notions}
	 \label{fig:other_fairness_notions}
\end{figure}

In addition to the \texttt{SP} fairness investigated in Section~\ref{sec:numerical}, we also study the impact of personalization techniques on other types of fairness notions such as \texttt{EO} and \texttt{EqOp}. From Fig.~\ref{fig:other_fairness_notions}, we find that the introduction of personalization techniques can enhance other types of fairness due to the computational advantages of collaboration. However, compared to the improvement of \texttt{SP} and \texttt{EqOp} fairness, the local \texttt{EO} fairness improvement is less significant because the \texttt{EO} matches both the true and false positive rates across two protected groups, rendering it a more stringent criterion.

\subsection{Additional experiments on other dataset and tasks}\label{app:numerical_other_data}

\begin{figure}[ht]
\vspace{-0.1in}
	\centering
	\subfigure[Health-sex: \texttt{SP}]{
 \includegraphics[width=0.3\textwidth]{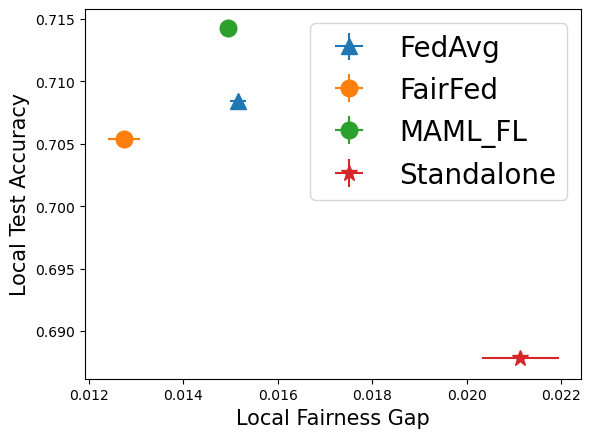}
 \label{Fig:health_sex_SP}
	}
 \subfigure[Income-race: \texttt{SP}]{
 \includegraphics[width=0.3\textwidth]{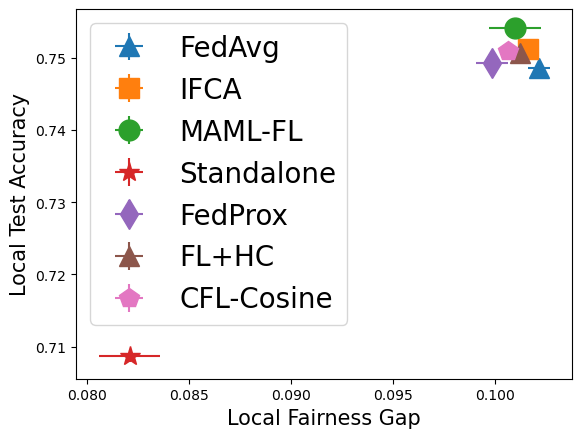}
 \label{Fig:income_race_SP}
	}
  \vspace{-0.1in}
	\caption{Additional experiments on other datasets with different protected attributes}
 \label{fig:additional_experiments_on_other_datasets}
\end{figure}

In addition to the ACSEmployment (sex, race) and ACSIncome (sex) experimental results presented in Section~\ref{sec:numerical}, we conducted additional experiments to explore the impact of \texttt{SP} fairness using new datasets, as illustrated in Fig.~\ref{fig:additional_experiments_on_other_datasets}. Examining the ACSHealth data with sex as the protected attribute, we can see from Fig.~\ref{fig:sex_inc_full} that the fractions of samples from each group across all states are similar to that of the ACSEmployment dataset shown in Fig.~\ref{fig:race_emp_full}, resulting in a similar performance. From Fig.~\ref{fig:additional_experiments_on_other_datasets}, we can see that personalization techniques can improve local fairness as an unintended benefit, similar to the observations from Section~\ref{sec:numerical}.

\begin{figure}[ht]
\vspace{-0.1in}
	\centering
	\subfigure[\texttt{Adult} (Sex): \texttt{SP}]{
 \includegraphics[width=0.22\textwidth]{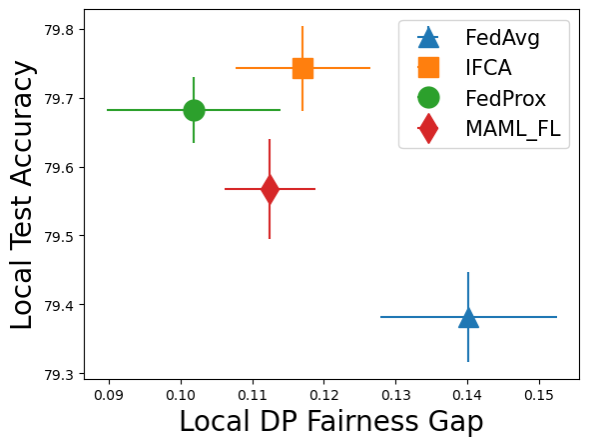}
 \label{Fig:adult-sex-sp-even}
	}
 \subfigure[\texttt{Adult} (Sex): \texttt{EqOp}]{
 \includegraphics[width=0.22\textwidth]{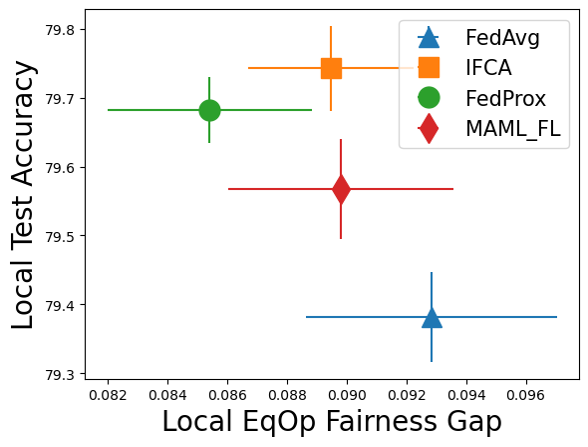}
 \label{Fig:adult-sex-eqop-even}
	}
 \subfigure[\texttt{Adult} (Race): \texttt{SP}]{
 \includegraphics[width=0.22\textwidth]{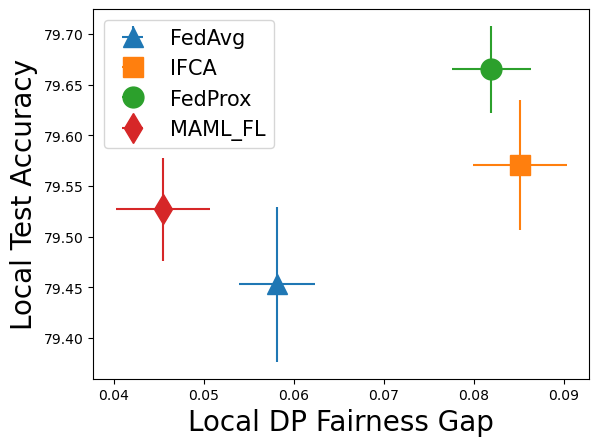}
 \label{Fig:adult-race-sp-even}
	}
 \subfigure[\texttt{Adult} (Race): \texttt{EqOp}]{
 \includegraphics[width=0.22\textwidth]{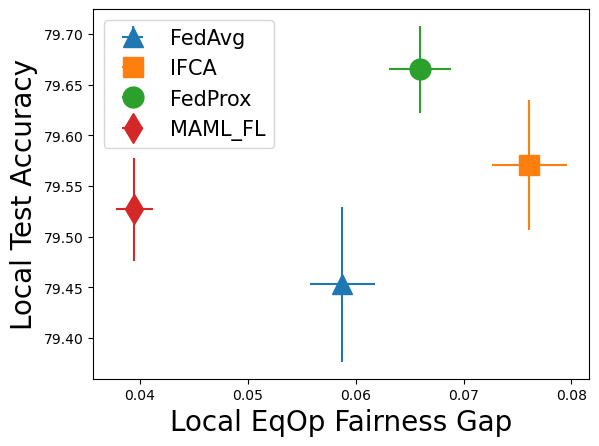}
 \label{Fig:adult-race-eqop-even}
	}
  \vspace{-0.1in}
	\caption{Additional experiments on \texttt{Adult} datasets with samples randomly and evenly distributed}
\label{fig:additional_experiments_on_adult_dataset_even}
\end{figure}

In the \texttt{Adult} dataset, where we randomly and evenly sample data into 5 clients. We could observe that the results are consistent with our findings in Section~\ref{sec:numerical}. That is, when groups are balanced (with sex as the protected attribute), the personalization could also improve the fairness as unintended benefit. However, when groups are unbalanced due to more White samples, the clustered FL algorithms have worse local fairness performance compared to \texttt{FedAvg}, but the \texttt{MAML-FL} algorithm could have a better performance.    

\subsection{Additional experiments on  \texttt{EqOp} and \texttt{EO} fairness}\label{app_experiment_eqop_retiring}
In Section~\ref{sec:new-algorithm}, we compare \texttt{SP} fairness between two algorithms: \texttt{ICFA} and \texttt{Fair-FCA}. Here, we also compare the \texttt{EqOp} and \texttt{EO} fairness between them. The observations from Table~\ref{table_manipulated_data} are also consistent with those from Fig~\ref{fig:experiments_new_mix_dp}, meaning that the \texttt{Fair-FCA} algorithm enables us to establish a better fairness-accuracy tradeoff (a drop in accuracy in return for improved fairness) compared to the \texttt{IFCA} algorithm.  
\begin{table}[ht]
\caption{\texttt{Fair-FCA} with different datasets and protected attributes}
\label{table_manipulated_data}
%\vspace{-0.1in}
\begin{center}
\begin{tabular}{c|c|cc|cc}
\multicolumn{1}{c}{\bf Dataset}  &\multicolumn{1}{c}{\bf Algorithm}  &\multirow{1}{*}{\bf \texttt{EqOp}} &\multicolumn{1}{c}{ \bf Acc. (\texttt{EqOp})} &\multirow{1}{*}{\bf \texttt{EO}} &\multicolumn{1}{c}{ \bf Acc. (\texttt{EO})}\\
\hline \\
\multirow{2}{*}{Employment-Race} 
    &\texttt{IFCA}  & 0.07764 &0.8188 & 0.09319 &0.8188 \\
   &\texttt{Fair-FCA}   & 0.07029  &0.8151 & 0.08946 &0.8151 \\
\hline \\
\multirow{2}{*}{Employment-Sex}   &\texttt{IFCA}  & 0.04521 &0.8188 & 0.05808 &0.8188 \\
   &\texttt{Fair-FCA}   & 0.04183 &0.8157  & 0.05655  &0.8151\\  
\hline \\
\multirow{2}{*}{Income-Sex}   &\texttt{IFCA}  & 0.05029 &0.7511 & 0.05231 &0.7511 \\
   &\texttt{Fair-FCA}   & 0.04932 &0.7491 & 0.05161 &0.7489
\end{tabular}
\end{center}
\vspace{-0.2in}
\end{table}

\subsection{Details of setup on synthetic experiment}\label{app_experiment_synthetic_details}

\begin{table}[ht]
\caption{Data distributions over 8 clients}
\begin{center}
\begin{tabular}{c|c|c|c|c}
\multicolumn{1}{c}{\bf Client ID}  &\multicolumn{1}{c}{\bf $f^1_1$}  &\multicolumn{1}{c}{\bf $f^0_1$} &\multicolumn{1}{c}{\bf $f^1_0$} &\multicolumn{1}{c}{\bf $f^0_0$}\\
\hline \\
\multirow{1}{*}{1} 
    &$N(8,1)$  & $N(6,1)$ &$N(8,1)$ &$N(6,1)$  \\
\hline \\
\multirow{1}{*}{2} 
    &$N(12,1)$  & $N(8,1)$ &$N(11,1)$ &$N(7,1)$  \\
\hline \\
\multirow{1}{*}{3} 
    &$N(7.5,1)$  & $N(5.5,1)$ &$N(7.5,1)$ &$N(5.5,1)$  \\
\hline \\
\multirow{1}{*}{4} 
    &$N(12,1)$  & $N(9,1)$ &$N(12,1)$ &$N(9,1)$  \\
\hline \\
\multirow{1}{*}{5} 
    &$N(12,1)$  & $N(8,1)$ &$N(11,1)$ &$N(7,1)$  \\
\hline \\
\multirow{1}{*}{6} 
    &$N(11.5,1)$  & $N(8.5,1)$ &$N(11.5,1)$ &$N(8.5,1)$  \\
\hline \\
\multirow{1}{*}{7} 
    &$N(11,1)$  & $N(8,1)$ &$N(11,1)$ &$N(8,1)$  \\
\hline \\
\multirow{1}{*}{8} 
    &$N(10.5,1)$  & $N(7.5,1)$ &$N(10.5,1)$ &$N(7.5,1)$  \\
\end{tabular}
\end{center}
\end{table}

According to the data distribution information, we can see that clients 2,4,5,6,7,8 have similar data distributions compared to clients 1,3. Also, we can find that clients 1,3,4,6,7,8 share identical data distribution across the two groups. We generate 1200 samples from each distribution and apply a logistic regression classifier for binary classification tasks. We report our experiment results for an average of 5 runs. When $\gamma = 1$, \texttt{Fair-FCA} prioritizes accuracy; by design, this is attained by grouping the 6 clients having similar data distributions together (\{1,3\} and \{2,4,5,6,7,8\}). Similarly, when $\gamma = 0$, \texttt{Fair-FCA} focuses only on \texttt{SP} fairness, this time clustering clients that have identical distributions on the two protected groups together (\{2,5\} and \{1,3,4,6,7,8\}). Lastly, by setting $\gamma \in (0,1)$, we can effectively account for both accuracy and \texttt{SP} fairness when clustering: when $\gamma = 0.3$, the clusters are \{2,4,5\} and \{1,3,6,7,8\}; when $\gamma = 0.5$, the clusters are \{2,4,5,6\} and \{1,3,7,8\}; and when $\gamma = 0.8$, the clusters are \{2,4,5,6,7\} and \{1,3,8\}.

\subsection{Additional experiments using original Retiring Adult dataset without feature scaling}\label{app_experiment_original_retiring}

\begin{table}[ht]
\caption{Algorithm performance comparisons using original Retiring adult dataset}
\label{table_original_data}
\begin{center}
\begin{tabular}{c|c|cc|cc}
\multicolumn{1}{c}{\bf Dataset}  &\multicolumn{1}{c}{\bf Algorithm}  &\multirow{1}{*}{\bf \texttt{SP}} &\multicolumn{1}{c}{ \bf Acc. (\texttt{SP})} &\multirow{1}{*}{\bf \texttt{EqOp}} &\multicolumn{1}{c}{\bf Acc. (\texttt{EqOp})}\\
\hline \\
\multirow{2}{*}{Employment-Sex} 
    &\texttt{IFCA}  & 0.03667 &0.8229 &0.04698 & 0.8229  \\
   &\texttt{Fair-FCA}   & 0.03594 $\downarrow$ &0.8223 $\downarrow$ &0.04633 $\downarrow$ & 0.8224 $\downarrow$ \\
\hline \\
\multirow{2}{*}{Employment-Race}   &\texttt{IFCA}  & 0.07257 &0.8229 &0.07315 & 0.8229  \\
   &\texttt{Fair-FCA}   & 0.07219 $\downarrow$ &0.8224 $\downarrow$&0.06527 $\downarrow$ & 0.8226 $\downarrow$\\  
\hline \\
\multirow{2}{*}{Income-Sex}   &\texttt{IFCA} & 0.08355 &0.7481 &0.04773 & 0.7481  \\
   &\texttt{Fair-FCA}   & 0.08227 $\downarrow$ &0.7469 $\downarrow$ &0.04767 $\downarrow$ & 0.7469 $\downarrow$ \\  
\hline \\
\multirow{2}{*}{Income-Race}   &\texttt{IFCA}  & 0.1012 &0.7481 &0.1100 & 0.7481  \\
   &\texttt{Fair-FCA}   & 0.1011 $\downarrow$ &0.7468 $\downarrow$&0.1086 $\downarrow$ & 0.7466 $\downarrow$ \\  
\end{tabular}
\end{center}
\end{table}

We can see from Table~\ref{table_original_data} that compared to the \texttt{IFCA} algorithm, our \texttt{Fair-FCA} algorithm is experiencing a degradation in accuracy but an improved fairness, meaning an accuracy-fairness tradeoff. These observations are also consistent with our findings when using the Retiring Adult dataset with feature scaling in Section~\ref{sec:new-algorithm}.

\section{Experiments on Synthetic Data}\label{app:numerical-synthetic}

To further validate our propositions, we conduct the following numerical experiments. In the experiments detailed in \ref{app_experiment_equal_distance_group_label}, the setup is the most restrictive, with equalized distance, balanced group rates, and equalized label rates. In subsequent experiments, we relax one factor at a time. Finally, in the experiments described in \ref{app_experiment}, all these assumptions are removed.

\subsection{Experiments under Gaussian distribution with equalized distance, balanced group rate, and equalized label rate}\label{app_experiment_equal_distance_group_label}

\textbf{Numerical illustration.} We now conduct numerical experiments to illustrate the findings in Prop.~\ref{prop3}-~\ref{prop1}. We drop the cluster notation $c$ whenever it is clear from the context. 
The results are presented in Tables~\ref{table1} and~\ref{table2}. We proceed as follows: 10000 random samples in cluster $\mathcal{C}_\alpha$ are drawn from Gaussian distribution for each group $g\in \{a,b\}$ with mean $\mu^{y,\mathcal{C}_\alpha}_g$ and standard deviation $\sigma$. The number of qualified ($y=1$) and unqualified ($y=0$) samples in each group is proportional to the label participation rate $\alpha^{y,\mathcal{C}_\alpha}_g$. Since samples were generated in a consistent manner across different parameter settings, we assumed an optimal decision threshold $\theta^{*}_\beta=8$ for cluster $\mathcal{C}_\beta$, obtained according to the distribution information: $(f^1_1,f^0_1,f^1_0,f^0_0,\sigma) = (10,7,9,6,1)$ with equalized group rate $r_g=0.5, \forall g$ and label participation rate $\alpha^{y,\mathcal{C}_\beta}_g=0.5, \forall g,y$. In Table~\ref{table1}, we consider the scenario where $\alpha^{y,\mathcal{C}_\alpha}_g = 0.5 \hspace{0.05in} \forall g, y$. In contrast, different values of $\alpha^{y,\mathcal{C}_\alpha}_g$ are applied in Table~\ref{table2}. Both results in Table~\ref{table1} and ~\ref{table2} consider an equalized group rate such that $r_a = r_b$ and an equalized distance between mean estimates. 

From Table~\ref{table1}, we can find that it offers crucial insights into the conditions required for Proposition~\ref{prop1} (\texttt{SP}) to hold. For fixed mean estimates $\mu^y_g$ (rows 1-2), we observe that smaller values of $\sigma$ are preferable to satisfy the specified conditions. Similarly, for fixed $\sigma$ (row 1, 3 and row 2, 4), larger differences between $\mu^1_g$ and $\mu^0_g$ are advantageous in fulfilling the conditions. This observation becomes intuitive at the extreme cases where samples are linearly separable with small $\sigma$ or large distance between $\mu^1_g$ and $\mu^0_g$. Therefore, the optimal decision threshold $\theta^{*}_\alpha$ could achieve a perfect classification as well as perfect fairness. Hence, the FedAvg solution $\theta^{*}_G$ deviated from the optimal solution will lead to worse performance in both accuracy and fairness. {We could also observe that for the \texttt{EqOp} fairness, under an equalized label rate, the FedAvg solutions consistently make the cluster $\mathcal{C}_\alpha$ unfairer, which is consistent with our findings in Prop.~\ref{prop3}.}

\begin{table}[ht]\scriptsize
\caption{Cluster $\mathcal{C}_\alpha$ fairness performance with equalized distance, group rate and label rate}
\label{table1}
\vspace{-0.15in}
\begin{center}
\begin{tabular}{ccccc|ccc}
\multicolumn{1}{c}{\bf Distribution}  &\multicolumn{1}{c}{\bf Condition}  &\multirow{2}{*}{\bf $\Delta^\alpha_{\texttt{SP}}(\theta^{*}_\alpha)$} &\multicolumn{2}{c}{\multirow{1}{*}{\bf $\Delta^\alpha_{\texttt{SP}}(\theta^{*}_G)$}} &\multirow{2}{*}{\bf $\Delta^\alpha_{\texttt{EqOp}}(\theta^{*}_\alpha)$} &\multicolumn{2}{c}{\multirow{1}{*}{\bf $\Delta^\alpha_{\texttt{EqOp}}(\theta^{*}_G)$}}\\
\multicolumn{1}{c}{($\mu^1_a$, $\mu^0_a$, $\mu^1_b$, 
$\mu^0_b$, $\sigma$)}&\multicolumn{1}{c}{\bf (\texttt{SP})} &\multicolumn{1}{c}{} &\multicolumn{1}{c}{$p=\frac{2}{3}$} &\multicolumn{1}{c}{$p=\frac{1}{2}$} &\multicolumn{1}{c}{} &\multicolumn{1}{c}{$p=\frac{2}{3}$} &\multicolumn{1}{c}{$p=\frac{1}{2}$} \\
\hline \vspace{-0.1in} \\
(7, 4, 6, 3, 1)   &Yes  & 0.1359 &0.1814 $\uparrow$ &0.1945 
 $\uparrow$ & 0.1359 &0.3413 $\uparrow$ &0.3829 
 $\uparrow$ \\
(7, 4, 6, 3, 2)   &No   & 0.1499 &0.1417 $\downarrow$ &0.1315 $\downarrow$ & 0.1499 &0.1915 $\uparrow$ &0.1974 
 $\uparrow$ \\
(7, 5, 6, 4, 1)   &No   & 0.2417 &0.2297 $\downarrow$ &0.2046 $\downarrow$ & 0.2417 &0.3781 $\uparrow$ &0.3721 
 $\uparrow$\\
(8, 3, 6, 1, 2)   &Yes  & 0.1866 &0.1968 $\uparrow$ &0.2033 $\uparrow$ & 0.1866 &0.3121 $\uparrow$ &0.3590 
 $\uparrow$
\end{tabular}
\end{center}
\vspace{-0.1in}
\end{table}

Table~\ref{table2} reveals insights regarding the influence of label distribution $\alpha^y_g$ on \texttt{SP} and \texttt{EqOp} fairness performance. Specifically, when the majority of samples in both groups are labeled as 1 (rows 1-2), the optimal decision threshold ($\theta^{*}_\alpha$) shifts leftward compared to the balanced scenario. However, with Lemma~\ref{lemma1:relative_location}, the FedAvg solution $\theta^{*}_G$ will be greater than $\theta^{*}_\alpha$. Therefore, we can find that $\theta$ will have even larger fairness gap when it is shifted to the right. Another intriguing observation is that in cases where the majority of samples have different labels (row 3), the FedAvg solution ($\theta^{*}_G$) yields worse fairness performance when $p = 2/3$ or $1/2$ but not when $p=1/3$ (0.1720 $\downarrow$) or $1/4$ (0.1391 $\downarrow$). This indicates the weight $p$ plays a significant role in shaping the overall cluster-wise average fairness performance, especially when assessing the overall cluster-wise average fairness performance. 

\begin{table}[ht]\small
\caption{Cluster $\mathcal{C}_\alpha$ fairness performance with equalized distance and group rate}
\label{table2}
\begin{center}
\begin{tabular}{cccccc}
\multicolumn{1}{c}{\bf Distribution} &\multicolumn{1}{c}{\bf Label rate}  &\multicolumn{1}{c}{\bf Condition}  &\multirow{2}{*}{\bf $\Delta^\alpha_{\texttt{SP}}(\theta^{*}_\alpha)$} &\multicolumn{2}{c}{\multirow{1}{*}{\bf $\Delta^\alpha_{\texttt{SP}}(\theta^{*}_G)$}}\\
\multicolumn{1}{c}{($\mu^1_a$, $\mu^0_a$, $\mu^1_b$, 
$\mu^0_b$, $\sigma$)} &\multicolumn{1}{c}{($\alpha^1_a$, $\alpha^0_a$, $\alpha^1_b$, 
$\alpha^0_b$)} &\multicolumn{1}{c}{\bf (\texttt{SP})} &\multicolumn{1}{c}{} &\multicolumn{1}{c}{$p=\frac{2}{3}$} &\multicolumn{1}{c}{$p=\frac{1}{2}$} \\
\hline \vspace{-0.1in} \\
\multirow{4}{*}{(7, 4, 6, 3, 1)} &(0.7, 0.3, 0.6, 0.4)  &Yes  & 0.2062 &0.2146 $\uparrow$ &0.2463 $\uparrow$\\
 &(0.6, 0.4, 0.7, 0.3)  &Yes  & 0.0453 &0.0514 $\uparrow$ &0.0813 $\uparrow$\\
 &(0.7, 0.3, 0.4, 0.6)  &Yes  & 0.3797 &0.3858 $\uparrow$ &0.3926 $\uparrow$\\
 &(0.6, 0.4, 0.3, 0.7)  &No   & 0.3797 &0.3748 $\downarrow$ &0.2804 $\downarrow$\\
\hline\\
&\multicolumn{1}{c}{} &\multicolumn{1}{c}{\bf (\texttt{EqOp})}  &\multicolumn{1}{c}{$\Delta^\alpha_{\texttt{EqOp}}(\theta^{*}_\alpha)$} &\multicolumn{2}{c}{$\Delta^\alpha_{\texttt{EqOp}}(\theta^{*}_G)$}\\
\multirow{4}{*}{(7, 4, 6, 3, 2)} &(0.7, 0.3, 0.6, 0.4)  &Yes  & 0.0998 &0.1807 $\uparrow$ &0.1923 $\uparrow$\\
&(0.6, 0.4, 0.7, 0.3)  &Yes  & 0.0975 &0.1198 $\uparrow$ &0.1796 $\uparrow$\\
&(0.1, 0.9, 0.5, 0.5)  &No   & 0.1965 &0.1650 $\downarrow$ &0.1574 $\downarrow$\\
&(0.3, 0.7, 0.2, 0.8)  &No   & 0.1974 &0.1645 $\downarrow$ &0.1574 $\downarrow$\\
    
\end{tabular}
\end{center}
\vspace{-0.2in}
\end{table}

{Since we assume clients within the same cluster are identical, and the local fairness performance for an algorithm can be computed as a weighted sum of the local fairness performance from each cluster, the cluster-wise \emph{average} local fairness gap under different models' optimal solution $\theta$ could be calculated as $\Delta_{f}(\theta) = p \Delta^\alpha_{f} + (1-p) \Delta^\beta_{f};  f\in\{\texttt{SP}, \texttt{EqOp}, \texttt{EO}\}$, where $p$ is the fraction of clients belonging to cluster $\mathcal{C}_\alpha$.}

In Table~\ref{table3} and ~\ref{table10}, we delve into different notions of cluster-wise average fairness gap achieved with different decision thresholds (optimal clustered FL solutions $\theta^{*}_C$ and FedAvg solutions $\theta^{*}_G$). In the following experiment, we keep the parameters in cluster $\mathcal{C}_\beta$ as constants while varying those in cluster $\mathcal{C}_\alpha$ to assess its impact on the corresponding fairness. From the results in Table~\ref{table3} and ~\ref{table10}, we can find that when both conditions are not satisfied (rows 5-6), there is a cluster size weight $p$ such that the FedAvg solutions would lead to better fairness performance for each cluster, consequently yielding a lower cluster-wise average fairness gap. However, when only one cluster satisfies the condition, meaning that there is a $p$ such that the FedAvg solutions would only make one cluster unfairer (rows 1-2 in Table~\ref{table3}), we could see that a relatively small $p$ would let the clustered FL solutions yield a better fairness performance because $\theta^*_G$ will move to the cluster with a smaller value of $p$ to account for the cluster size imbalance. Nevertheless, when $p$ is large, the FedAvg solutions will again have superior fairness performance than the clustered FL solutions, similar to the results in rows 3-4 in Table~\ref{table3} and ~\ref{table10}. Essentially, for each cluster $c$, there exists a range $(p^c_{low}, p^c_{high})$ such that, within this range, FedAvg solutions result in worse fairness performance compared to clustered FL solutions. Consequently, for any $p \in \cap_c (p^c_{low}, p^c_{high})$, clustered FL solutions yield a superior cluster-wise average fairness performance relative to FedAvg solutions. 

\begin{table}[ht]\small
\caption{Cluster-wise average \texttt{SP} fairness performance with equalized distance}
\label{table3}
\begin{center}
\begin{tabular}{cccccc}
\multicolumn{1}{c}{\bf Distribution} &\multicolumn{1}{c}{\bf Label rate}  &\multicolumn{1}{c}{\bf Condition} &\multirow{3}{*}{$p$} &\multirow{3}{*}{\bf $\Delta_{\texttt{SP}}(\theta^{*}_C)$} &\multicolumn{1}{c}{\multirow{3}{*}{\bf $\Delta_{\texttt{SP}}(\theta^{*}_G)$}}\\
\multicolumn{1}{c}{$\mathcal{C}_\alpha:$($\mu^1_a$, $\mu^0_a$, $\mu^1_b$, 
$\mu^0_b$, $\sigma$)} &\multicolumn{1}{c}{($\alpha^1_a$, $\alpha^0_a$, $\alpha^1_b$, 
$\alpha^0_b$)} &\multicolumn{1}{c}{} &\multicolumn{1}{c}{} &\multicolumn{1}{c}{} &\multicolumn{1}{c}{} \\
\multicolumn{1}{c}{$\mathcal{C}_\beta:$($\mu^1_a$, $\mu^0_a$, $\mu^1_b$, 
$\mu^0_b$, $\sigma$)} &\multicolumn{1}{c}{($\alpha^1_a$, $\alpha^0_a$, $\alpha^1_b$, 
$\alpha^0_b$)} &\multicolumn{1}{c}{\bf satisfied} &\multicolumn{1}{c}{} &\multicolumn{1}{c}{} &\multicolumn{1}{c}{} \\

\hline \vspace{-0.1in}\\
(7, 4, 6, 3, 2) & (0.5, 0.5, 0.5, 0.5)  &No   & 4/5 &0.147 &0.145 $\downarrow$\\
(10, 7, 9, 6, 1) &(0.5, 0.5, 0.5, 0.5)  &Yes  & 1/3  &0.141 &0.160 $\uparrow$\\
\hline \vspace{-0.1in} \\
(7, 4, 6, 3, 2)  &(0.8, 0.2, 0.7, 0.3)  &Yes  & 3/4 &0.139 &0.107 $\downarrow$\\
(10, 7, 9, 6, 1) &(0.5, 0.5, 0.5, 0.5)  &Yes  & 1/2 &0.138 &0.178 $\uparrow$\\
\hline \vspace{-0.1in} \\
(7, 4, 6, 3, 2)  &(0.5, 0.5, 0.5, 0.5)  &No   & 1/3 &0.303 &0.283 $\downarrow$\\
(10, 7, 9, 6, 1) &(0.7, 0.3, 0.4, 0.6)  &No   & 2/3 &0.227 &0.200 $\downarrow$\\
\end{tabular}
\end{center}
\vspace{-0.1in}
\end{table}

\begin{table}[ht]\small
\caption{Cluster-wise average \texttt{EqOp} fairness performance with equalized distance}
\label{table10}
\begin{center}
\begin{tabular}{cccccc}
\multicolumn{1}{c}{\bf Distribution} &\multicolumn{1}{c}{\bf Label rate}  &\multicolumn{1}{c}{\bf Condition} &\multirow{3}{*}{$p$} &\multirow{3}{*}{\bf $\Delta_{\texttt{EqOp}}(\theta^{*}_C)$} &\multicolumn{1}{c}{\multirow{3}{*}{\bf $\Delta_{\texttt{EqOp}}(\theta^{*}_G)$}}\\
\multicolumn{1}{c}{$\mathcal{C}_\alpha:$($\mu^1_a$, $\mu^0_a$, $\mu^1_b$, 
$\mu^0_b$, $\sigma$)} &\multicolumn{1}{c}{($\alpha^1_a$, $\alpha^0_a$, $\alpha^1_b$, 
$\alpha^0_b$)} &\multicolumn{1}{c}{} &\multicolumn{1}{c}{} &\multicolumn{1}{c}{} &\multicolumn{1}{c}{} \\
\multicolumn{1}{c}{$\mathcal{C}_\beta:$($\mu^1_a$, $\mu^0_a$, $\mu^1_b$, 
$\mu^0_b$, $\sigma$)} &\multicolumn{1}{c}{($\alpha^1_a$, $\alpha^0_a$, $\alpha^1_b$, 
$\alpha^0_b$)} &\multicolumn{1}{c}{\bf satisfied} &\multicolumn{1}{c}{} &\multicolumn{1}{c}{} &\multicolumn{1}{c}{} \\

\hline \vspace{-0.1in}\\
(7, 4, 6, 3, 2) & (0.3, 0.7, 0.2, 0.8)  &No   & 1/3 &0.156 &0.133 $\downarrow$\\
(10, 7, 9, 6, 1) &(0.5, 0.5, 0.5, 0.5)  &No  & 2/3  &0.177 &0.139 $\downarrow$\\
\hline \vspace{-0.1in} \\
(7, 4, 6, 3, 2)  &(0.8, 0.2, 0.7, 0.3)  &Yes  & 3/4 &0.082 &0.050 $\downarrow$\\
(10, 7, 9, 6, 1) &(0.5, 0.5, 0.5, 0.5)  &No  & 1/2 &0.100 &0.109 $\uparrow$\\
\hline \vspace{-0.1in} \\
(7, 4, 6, 3, 2)  &(0.3, 0.7, 0.2, 0.8)  &No   & 1/3 &0.224 &0.187 $\downarrow$\\
(10, 7, 9, 6, 1) &(0.3, 0.7, 0.2, 0.8)  &No   & 2/3 &0.211 &0.149 $\downarrow$\\
\end{tabular}
\end{center}
\vspace{-0.2in}
\end{table}

\subsection{Experiments under Gaussian distribution with equalized distance and balanced label rate}\label{app_experiment_equal_distance_label}

Compared to the experiments focused on an all balanced setting in Table~\ref{table1}, the following experiments relax the group rates setting in the cluster $\mathcal{C}_\alpha$, while we keep other settings (i.e., balanced label rate and equalized distance) and data information for $\mathcal{C}_\beta$ unchanged. 

\begin{table}[ht]\scriptsize
\caption{Cluster $\mathcal{C}_\alpha$ fairness performance under Gaussian distribution with equalized distance and label rate, but not group rate}
\label{table5}
\begin{center}
\begin{tabular}{ccccc|ccc}
\multicolumn{1}{c}{\bf Distribution}  &\multicolumn{1}{c}{\bf Group rate}  &\multirow{2}{*}{\bf $\Delta^\alpha_{\texttt{SP}}(\theta^{*}_\alpha)$} &\multicolumn{2}{c}{\multirow{1}{*}{\bf $\Delta^\alpha_{\texttt{SP}}(\theta^{*}_G)$}} &\multirow{2}{*}{\bf $\Delta^\alpha_{\texttt{EqOp}}(\theta^{*}_\alpha)$} &\multicolumn{2}{c}{\multirow{1}{*}{\bf $\Delta^\alpha_{\texttt{EqOp}}(\theta^{*}_G)$}}\\
\multicolumn{1}{c}{($\mu^1_a$, $\mu^0_a$, $\mu^1_b$, 
$\mu^0_b$, $\sigma$)}&\multicolumn{1}{c}{$(r_a, r_b)$} &\multicolumn{1}{c}{} &\multicolumn{1}{c}{$p=\frac{2}{3}$} &\multicolumn{1}{c}{$p=\frac{1}{2}$} &\multicolumn{1}{c}{} &\multicolumn{1}{c}{$p=\frac{2}{3}$} &\multicolumn{1}{c}{$p=\frac{1}{2}$}\\
\hline \\
\multirow{5}{*}{(7, 4, 6, 3, 1)}   &(0.5, 0.5)  & 0.1359 &0.1814 $\uparrow$ &0.1945 
 $\uparrow$ & 0.1359 &0.3413 $\uparrow$ &0.3829 
 $\uparrow$ \\
   &(0.7, 0.3)   & 0.1388 &0.1558 $\uparrow$ &0.1941 $\uparrow$ & 0.1780 &0.2594 $\uparrow$ &0.3828 
 $\uparrow$\\
   &(0.9, 0.1)   & 0.1465 &0.1702 $\uparrow$ &0.1941 $\uparrow$ & 0.2217 &0.3076 $\uparrow$ &0.3828 
 $\uparrow$\\
   &(0.3, 0.7)  & 0.1388 &0.1359 $\downarrow$ &0.1558 $\uparrow$ & 0.0996 &0.1359 $\uparrow$ &0.2594 $\uparrow$\\
   &(0.4, 0.6)  & 0.1367 &0.1372 $\uparrow$ &0.1931 $\uparrow$ & 0.1161 &0.1634 $\uparrow$ &0.3759 $\uparrow$
\end{tabular}
\end{center}
\end{table}

From Table~\ref{table5}, we can see that the changes in the group rate do not affect the fairness performance comparison. There exists a cluster size weight $p$ such that the FedAvg solutions would lead to worse \texttt{SP} and \texttt{EqOp} fairness performance compared to the clustered FL solutions. This observation is also consistent with our findings in the Proposition~\ref{prop3} and ~\ref{prop1}.

\subsection{Experiments under Gaussian distribution with equalized distance}\label{app_experiment_equal_distance}

Similar to experiments in \ref{app_experiment_equal_distance_label}, we further relax balanced label rate setting in the following experiments, while we keep other settings (i.e., equalized distance) and data information for $\mathcal{C}_\beta$ unchanged. 

\begin{table}[t]\small
\caption{Cluster $\mathcal{C}_\alpha$ \texttt{SP} fairness performance under Gaussian distribution with equalized distance, but not label rate and group rate}
\label{table6}
\begin{center}
\begin{tabular}{cccccc}
\multicolumn{1}{c}{\bf Distribution} &\multicolumn{1}{c}{\bf Label rate}  &\multicolumn{1}{c}{\bf Group rate}  &\multirow{2}{*}{\bf $\Delta^\alpha_{\texttt{SP}}(\theta^{*}_\alpha)$} &\multicolumn{2}{c}{\multirow{1}{*}{\bf $\Delta^\alpha_{\texttt{SP}}(\theta^{*}_G)$}}\\
\multicolumn{1}{c}{($\mu^1_a$, $\mu^0_a$, $\mu^1_b$, 
$\mu^0_b$, $\sigma$)} &\multicolumn{1}{c}{($\alpha^1_a$, $\alpha^0_a$, $\alpha^1_b$, 
$\alpha^0_b$)} &\multicolumn{1}{c}{$(r_a, r_b)$} &\multicolumn{1}{c}{} &\multicolumn{1}{c}{$p=\frac{2}{3}$} &\multicolumn{1}{c}{$p=\frac{1}{2}$} \\
\hline \\
\multirow{16}{*}{(7, 4, 6, 3, 1)} &\multirow{3}{*}{(0.7, 0.3, 0.6, 0.4)}  &(0.5, 0.5)  & 0.2062 &0.2146 $\uparrow$ &0.2463 $\uparrow$\\
 &  &(0.3, 0.7)  & 0.2024 &0.2056 $\uparrow$ &0.2167 $\uparrow$\\
 &  &(0.7, 0.3)  & 0.2136 &0.2309 $\uparrow$ &0.2793 $\downarrow$\\
 \cline{2-6}\\
 
 &\multirow{3}{*}{(0.6, 0.4, 0.7, 0.3)}  &(0.5, 0.5)   & 0.0453 &0.0514 $\uparrow$ &0.0813 $\uparrow$\\
& &(0.3, 0.7)   & 0.0460 &0.0446 $\downarrow$ &0.0482 $\uparrow$\\
& &(0.7, 0.3)   & 0.0535 &0.0751 $\uparrow$ &0.1467 $\uparrow$\\
 \cline{2-6}\\
 &\multirow{3}{*}{(0.7, 0.3, 0.4, 0.6)}  &(0.5, 0.5)   & 0.3797 &0.3858 $\uparrow$ &0.3926 $\uparrow$\\
&  &(0.3, 0.7)   & 0.3780 &0.3819 $\uparrow$ &0.3451 $\downarrow$\\
&  &(0.7, 0.3)   & 0.3821 &0.3899 $\uparrow$ &0.3936 $\uparrow$ \\
\cline{2-6}\\
 &\multirow{4}{*}{(0.4, 0.6, 0.7, 0.3)}  &(0.7, 0.3)  & 0.1005 &0.0662 $\downarrow$ &0.0766 $\downarrow$\\
 &  &(0.9, 0.1)  & 0.0725 &0.0078 $\downarrow$ &0.0868 $\uparrow$\\
 &  &(0.3, 0.7)  & 0.1013 &0.1084 $\uparrow$ &0.1090 $\downarrow$\\
  &  &(0.1, 0.9)  & 0.0767 &0.0860 $\uparrow$ &0.0972 $\uparrow$
\end{tabular}
\end{center}
\vspace{-0.25in}
\end{table}

\begin{table}[t]\small
\caption{Cluster $\mathcal{C}_\alpha$ \texttt{EqOp} fairness performance under Gaussian distribution with equalized distance, but not label rate and group rate}
\label{table8}
\begin{center}
\begin{tabular}{cccccc}
\multicolumn{1}{c}{\bf Distribution} &\multicolumn{1}{c}{\bf Label rate}  &\multicolumn{1}{c}{\bf Group rate}  &\multirow{2}{*}{\bf $\Delta^\alpha_{\texttt{EqOp}}(\theta^{*}_\alpha)$} &\multicolumn{2}{c}{\multirow{1}{*}{\bf $\Delta^\alpha_{\texttt{EqOp}}(\theta^{*}_G)$}}\\
\multicolumn{1}{c}{($\mu^1_a$, $\mu^0_a$, $\mu^1_b$, 
$\mu^0_b$, $\sigma$)} &\multicolumn{1}{c}{($\alpha^1_a$, $\alpha^0_a$, $\alpha^1_b$, 
$\alpha^0_b$)} &\multicolumn{1}{c}{$(r_a, r_b)$} &\multicolumn{1}{c}{} &\multicolumn{1}{c}{$p=\frac{2}{3}$} &\multicolumn{1}{c}{$p=\frac{1}{2}$} \\
\hline \\
\multirow{16}{*}{(7, 4, 6, 3, 2)} &\multirow{3}{*}{(0.7, 0.3, 0.6, 0.4)}  &(0.5, 0.5)  & 0.0998 &0.1222 $\uparrow$ &0.1807 $\uparrow$\\
 &  &(0.3, 0.7)  & 0.0952 &0.1109 $\uparrow$ &0.1784 $\uparrow$\\
 &  &(0.7, 0.3)  & 0.1044 &0.1386 $\uparrow$ &0.1825 $\uparrow$\\
 \cline{2-6}\\
 &\multirow{3}{*}{(0.6, 0.4, 0.7, 0.3)}  &(0.5, 0.5)   & 0.0975 &0.1198 $\uparrow$ &0.1796 $\uparrow$\\
& &(0.3, 0.7)   & 0.0798 &0.0874 $\uparrow$ &0.1799 $\uparrow$\\
& &(0.7, 0.3)   & 0.1180 &0.1957 $\uparrow$ &0.1792 $\uparrow$\\
 \cline{2-6}\\
 &\multirow{3}{*}{(0.1, 0.9, 0.5, 0.5)}  &(0.5, 0.5)   & 0.1965 &0.1650 $\downarrow$ &0.1574 $\downarrow$\\
&  &(0.3, 0.7)   & 0.1751 &0.1742 $\downarrow$ &0.1620 $\downarrow$\\
&  &(0.7, 0.3)   & 0.1869 &0.1569 $\downarrow$ &0.1537 $\downarrow$ \\
 \cline{2-6}\\
 &\multirow{3}{*}{(0.3, 0.7, 0.2, 0.8)}  &(0.5, 0.5)  & 0.1974 &0.1645 $\downarrow$ &0.1574 $\downarrow$\\
 &  &(0.3, 0.7)  & 0.1974 &0.1630 $\downarrow$ &0.1569 $\downarrow$\\
 &  &(0.7, 0.3)  & 0.1973 &0.1660 $\downarrow$ &0.1585 $\downarrow$
\end{tabular}
\end{center}
\vspace{-0.25in}
\end{table}

From Table~\ref{table6}, we can observe that for the \texttt{SP} fairness, when the majority of samples are labeled 1 (rows 1-6), the changes in the group rate do not affect the fairness performance comparison in the cluster $\mathcal{C}_\alpha$. There exists a cluster size weight $p$ such that the FedAvg solution would lead to a worse fairness performance compared to the clustered FL solutions. From Table~\ref{table8}, when the condition $\alpha^1_g \geq \alpha^0_g$ holds, there exists a combination of group rates (rows 1-6) such that the FedAvg solution would lead to a worse \texttt{EqOp} fairness performance. These observations from Table~\ref{table6} and \ref{table8} are also consistent with our findings in the Proposition~\ref{prop3} and ~\ref{prop1}.

\subsection{Additional experiments under Gaussian distribution}\label{app_experiment}

Similar to experiments in \ref{app_experiment_equal_distance_label} and \ref{app_experiment_equal_distance}, we now release all settings we imposed before, while we data information for $\mathcal{C}_\beta$ unchanged. 

\begin{table}[ht]\small
\caption{Cluster $\mathcal{C}_\alpha$ \texttt{SP} fairness performance under Gaussian distribution without equalized distance, label rate and group rate}
\label{table7}
\begin{center}
\begin{tabular}{cccccc}
\multicolumn{1}{c}{\bf Distribution} &\multicolumn{1}{c}{\bf Label rate}  &\multicolumn{1}{c}{\bf Group rate}  &\multirow{2}{*}{\bf $\Delta^\alpha_{\texttt{SP}}(\theta^{*}_\alpha)$} &\multicolumn{2}{c}{\multirow{1}{*}{\bf $\Delta^\alpha_{\texttt{SP}}(\theta^{*}_G)$}}\\
\multicolumn{1}{c}{($\mu^1_a$, $\mu^0_a$, $\mu^1_b$, 
$\mu^0_b$, $\sigma$)} &\multicolumn{1}{c}{($\alpha^1_a$, $\alpha^0_a$, $\alpha^1_b$, 
$\alpha^0_b$)} &\multicolumn{1}{c}{$(r_a, r_b)$} &\multicolumn{1}{c}{} &\multicolumn{1}{c}{$p=\frac{2}{3}$} &\multicolumn{1}{c}{$p=\frac{1}{2}$} \\
\hline \\
\multirow{6}{*}{(7, 4.5, 6, 3, 1)} &\multirow{3}{*}{(0.7, 0.3, 0.6, 0.4)}  &(0.5, 0.5)  & 0.2598 &0.2649 $\uparrow$ &0.2902 $\uparrow$\\
 &  &(0.3, 0.7)  & 0.2589  &0.2593 $\uparrow$ &0.2655 $\uparrow$\\
 &  &(0.7, 0.3)  & 0.2646&0.2781 $\uparrow$ &0.3074 $\uparrow$\\
 \cline{2-6}\\
 &\multirow{3}{*}{(0.7, 0.3, 0.4, 0.6)}  &(0.5, 0.5)   & 0.4263 &0.4220 $\downarrow$ &0.3917 $\downarrow$\\
&  &(0.3, 0.7)   & 0.4288 &0.4248 $\downarrow$&0.3222 $\downarrow$\\
&  &(0.7, 0.3)   & 0.4240 &0.4198 $\downarrow$ &0.3971 $\downarrow$\\
\hline \\
\multirow{6}{*}{(7, 4, 6, 3.5, 1)} &\multirow{3}{*}{(0.7, 0.3, 0.6, 0.4)}  &(0.5, 0.5)  & 0.1871 &0.2046 $\uparrow$ &0.2483 $\uparrow$\\
 &  &(0.3, 0.7)  & 0.1785  &0.1910 $\uparrow$ &0.2167 $\uparrow$\\
 &  &(0.7, 0.3)  & 0.1984&0.2236 $\uparrow$ &0.2784 $\uparrow$\\
 \cline{2-6}\\
 &\multirow{3}{*}{(0.7, 0.3, 0.4, 0.6)}  &(0.5, 0.5)   & 0.3576 &0.3752 $\uparrow$ &0.3882 $\uparrow$\\
&  &(0.3, 0.7)   & 0.3538 &0.3697 $\uparrow$&0.3335 $\downarrow$\\
&  &(0.7, 0.3)   & 0.3620 &0.3798 $\uparrow$ &0.3903 $\uparrow$
\end{tabular}
\end{center}
\vspace{-0.25in}
\end{table}

\begin{table}[ht]\small
\caption{Cluster $\mathcal{C}_\alpha$ \texttt{EqOp} fairness performance under Gaussian distribution without equalized distance, label rate and group rate}
\label{table9}
\begin{center}
\begin{tabular}{cccccc}
\multicolumn{1}{c}{\bf Distribution} &\multicolumn{1}{c}{\bf Label rate}  &\multicolumn{1}{c}{\bf Group rate}  &\multirow{2}{*}{\bf $\Delta^\alpha_{\texttt{EqOp}}(\theta^{*}_\alpha)$} &\multicolumn{2}{c}{\multirow{1}{*}{\bf $\Delta^\alpha_{\texttt{EqOp}}(\theta^{*}_G)$}}\\
\multicolumn{1}{c}{($\mu^1_a$, $\mu^0_a$, $\mu^1_b$, 
$\mu^0_b$, $\sigma$)} &\multicolumn{1}{c}{($\alpha^1_a$, $\alpha^0_a$, $\alpha^1_b$, 
$\alpha^0_b$)} &\multicolumn{1}{c}{$(r_a, r_b)$} &\multicolumn{1}{c}{} &\multicolumn{1}{c}{$p=\frac{2}{3}$} &\multicolumn{1}{c}{$p=\frac{1}{2}$} \\
\hline \\
\multirow{6}{*}{(7, 4.5, 6, 3, 2)} &\multirow{3}{*}{(0.7, 0.3, 0.6, 0.4)}  &(0.5, 0.5)  & 0.0993 &0.1251 $\uparrow$ &0.1796 $\uparrow$\\
 &  &(0.3, 0.7)  & 0.0947  &0.1115 $\uparrow$ &0.1780 $\uparrow$\\
 &  &(0.7, 0.3)  & 0.1045 &0.1635 $\uparrow$ &0.1814 $\uparrow$\\
 \cline{2-6}\\
 &\multirow{3}{*}{(0.3, 0.7, 0.2, 0.8)}  &(0.5, 0.5)   & 0.1948 &0.1610 $\downarrow$ &0.1558 $\downarrow$\\
&  &(0.3, 0.7)   & 0.1967 &0.1610 $\downarrow$&0.1558 $\downarrow$\\
&  &(0.7, 0.3)   & 0.1924 &0.1615 $\downarrow$ &0.1558 $\downarrow$\\
\hline \\
\multirow{6}{*}{(7, 4, 6, 3.5, 2)} &\multirow{3}{*}{(0.7, 0.3, 0.6, 0.4)}  &(0.5, 0.5)  & 0.1051 &0.1409 $\uparrow$ &0.1799 $\uparrow$\\
 &  &(0.3, 0.7)  & 0.1016  &0.1293 $\uparrow$ &0.1776 $\uparrow$\\
 &  &(0.7, 0.3)  & 0.1080&0.1564 $\uparrow$ &0.1822 $\uparrow$\\
 \cline{2-6}\\
 &\multirow{3}{*}{(0.3, 0.7, 0.2, 0.8)}  &(0.5, 0.5)   & 0.1959 &0.1625 $\downarrow$ &0.1569 $\downarrow$\\
&  &(0.3, 0.7)   & 0.1950 &0.1605  $\downarrow$&0.1553  $\downarrow$\\
&  &(0.7, 0.3)   & 0.1964 &0.1650  $\downarrow$ &0.1579 $\downarrow$
\end{tabular}
\end{center}
\vspace{-0.25in}
\end{table}

From Table~\ref{table7}, we can observe that when the majority of samples are labeled 1 (rows 1-3 and 7-9), there exists a cluster size weight $p$ such that the FedAvg solution would lead to a worse \texttt{SP} fairness performance compared to the clustered FL solutions, which also experimentally extends our findings in the Proposition~\ref{prop1} to the case of an unequalized gap. However, when the majority of samples are labeled differently (rows 4-6 and 10-12), we could find that when $\mu^1_a - \mu^0_a > \mu^1_b - \mu^0_b$, there exists a $p$ such that the FedAvg solution would lead to a worse \texttt{SP} fairness performance, and a distinct outcome occurs when $\mu^1_a - \mu^0_a < \mu^1_b - \mu^0_b$. One reason for the distinct behaviors is that the corresponding condition is not satisfied for the experiments in rows 4-6. Additionally, we find that as $p$ enlarges in row 11, the fairness gap decreases, and it could have better fairness performance than using the clustered FL solution. This observation is also consistent with the previous finding that the fairness gap would increase initially and then decrease in the proof of Proposition~\ref{prop1}. As we described earlier, it is clearly that for row 11, $p=1/2$ is not in the range of $(p^{\mathcal{C}_\alpha}_{low}, p^{\mathcal{C}_\alpha}_{high})$. 

From Table~\ref{table9}, we could observe that when the condition $\alpha^1_g \geq \alpha^0_g$ holds (rows 1-3 and 7-9), the changes in the group rates, label rates, and distribution distance do not affect the \texttt{EqOp} fairness performance in the cluster $\mathcal{C}_\alpha$. There exists a cluster size weight $p$ such that the FedAvg solution would lead to a worse fairness performance. However, when the condition is not met (rows 4-6 and 10-12), the FedAvg solution would have a better \texttt{EqOp} fairness performance.

\end{document}